\documentclass[10pt,letterpaper]{article}

\usepackage{arxiv}
\usepackage{amsmath,amsfonts,bm}

\def\eqref#1{equation~\ref{#1}}
\def\1{\bm{1}}

\DeclareMathAlphabet{\mathsfit}{\encodingdefault}{\sfdefault}{m}{sl}
\SetMathAlphabet{\mathsfit}{bold}{\encodingdefault}{\sfdefault}{bx}{n}

\newcommand{\E}{\mathbb{E}}

\newcommand{\R}{\mathbb{R}}

\newcommand{\Cov}{\mathrm{Cov}}
\usepackage{url}
\usepackage{amsmath,amssymb,amsthm}

\title{%
  From Modes to Memories: Characterizing the\\
  Scale-Space Dynamics of Diffusion Models
}
\author{%
  Cristina L\'opez Amado\textsuperscript{*},
  Marco Fumero\textsuperscript{*},
  Francesco Locatello\\[0.6em]
  Institute of Science and Technology Austria (ISTA)
}

\correspondingemail{cristina.lopezamado@ist.ac.at, marco.fumero@ist.ac.at}

\usepackage[most]{tcolorbox}

\newtcolorbox{greybox}{enhanced,breakable,colback=blue!6,colframe=blue!6,boxrule=0pt,arc=2pt,left=6pt,right=6pt,top=2pt,bottom=4pt}

\tcbset{graystyle/.style={enhanced,breakable,colback=black!6,colframe=black!6,boxrule=0pt,arc=2pt,left=6pt,right=6pt,top=2pt,bottom=4pt,before skip=8pt,after skip=8pt}}
\newcommand{\graystatements}{%
  \tcolorboxenvironment{theorem}{graystyle}%
  \tcolorboxenvironment{proposition}{graystyle}%
  \tcolorboxenvironment{lemma}{graystyle}%
  \tcolorboxenvironment{corollary}{graystyle}}

\definecolor{darkgreen}{RGB}{0,100,0}

\usepackage{amsmath,amssymb,amsthm}
\usepackage{graphicx}
\usepackage{booktabs}
\usepackage{caption}
\usepackage{xcolor}
\usepackage{natbib}
\usepackage[colorlinks=true,linkcolor=blue,citecolor=blue,urlcolor=blue]{hyperref}
\usepackage{enumitem}
\usepackage{subcaption}
\usepackage{array}

\newcommand{\sigc}{\sigma_{\mathrm{c}}}
\newcommand{\sigemp}{\sigma_{\mathrm{c}}^{\mathrm{emp}}}
\newcommand{\reten}{\kappa}

\newcommand{\D}{D_\sigma}
\newcommand{\M}{M_\sigma}
\newcommand{\Mg}[1]{M_{\sigma}^{(#1)}}
\newcommand{\ps}{p_\sigma}

\theoremstyle{plain}
\newtheorem{theorem}{Theorem}
\newtheorem{proposition}[theorem]{Proposition}
\newtheorem{lemma}[theorem]{Lemma}
\newtheorem{corollary}[theorem]{Corollary}
\theoremstyle{definition}
\newtheorem{definition}[theorem]{Definition}

\theoremstyle{remark}

\usepackage{titlesec}
\titlespacing*{\paragraph}{0pt}{0ex}{0.5em}
\begin{document}

\maketitle
\begin{abstract}
Diffusion models are typically viewed as stochastic processes that transform noise into data. We take a complementary perspective: a diffusion model defines a family of deterministic dynamical systems indexed by noise scale. At each fixed scale $\sigma$, we treat the denoiser as a self-map and study its dynamics. For an exact denoiser, fixed points correspond to critical points of the smoothed data density, while attractors correspond to its modes; as $\sigma$ increases, sample-level modes merge into progressively coarser ones.
This suggests a geometric view of memorization: examples that receive excess probability mass due to duplication or overfitting, as well as outliers, should remain distinguishable under stronger smoothing than ordinary examples. We quantify this persistence by the critical scale $\sigc$, the largest noise scale at which an example is retained by the fixed-scale dynamics. In conditional models, the same construction extends naturally to image--caption pairs.
Experiments in controlled settings and on large-scale models show that $\sigc$ tracks memorization arising from duplication, overfitting, and outliers, and identifies both memorized and partially memorized examples in Stable Diffusion. Moreover, $\sigc$ yields interpretable measures of the image spatial distribution and caption dependence of memorization.
\end{abstract}

\section{Introduction}
\label{sec:intro}

Diffusion models are typically viewed as stochastic processes that transform noise into data
\citep{ho2020ddpm,song2021sde,karras2022edm,rombach2022ldm}. They are trained with a local and static
criterion, the prediction of a clean signal from a noisy one at each noise level, and are used through
a global and dynamic procedure, a sampler that composes these denoisers along a decreasing schedule of
noise levels. What a model has learned is then read off the process as a whole, from its samples or
from its loss along the schedule.

We take a complementary view and interpret a diffusion model as a family of denoisers $\D$, one for each noise level
$\sigma$. Fixing $\sigma$ and iterating the denoiser on its own output, $x_{k+1}=\D(x_k)$, turns a
single trained network into a one-parameter \emph{family of deterministic dynamical systems}, whose fixed
points and basins are properties of the network alone (Figure~\ref{fig:teaser}, left). For an exact denoiser each of these maps is
Gaussian mean shift on the smoothed data density $\ps$%=p*\mathcal N(0,\sigma^{2}I)$
 \citep{fukunaga1975,cheng1995,comaniciu2002}: its \emph{fixed points} are the critical points of $\ps$ and its
\emph{attractors} are the modes of $\ps$. At small $\sigma$ every training example is its own attractor, at
large $\sigma$ a single attractor remains, and in between the attractors merge into progressively
coarser ones (Figures~\ref{fig:portrait}--\ref{fig:scalespace}). Varying $\sigma$ thus traces a scale
space of the attractors of the model.

This scale space gives a geometric view of memorization mechanisms \citep{carlini2023extracting,bonnaire2025why,ross2025geometric}: an example is retained under its dynamics when it carries excess probability mass, as under duplication or overfitting, or when it is isolated from the remaining data, as outliers. We measure this persistence by the \emph{critical scale}
$\sigc$, the largest noise level at which iterating the fixed-scale denoiser from an example keeps the trajectory close to it (Figure~\ref{fig:teaser}). The critical scale requires no labels, applies to conditional
models, where it becomes a property of a caption-image pair, and is interpretable. We test it with interventions that cause
memorization: on \texttt{CIFAR-10} we induce memorization
by duplication, by overfitting and by planting outliers, and on Stable Diffusion we use image--caption
pairs that are known to be memorized, fully or in part.
Our contributions are as follows:
\begin{figure}[t]
\centering
\includegraphics[width=0.85\linewidth,trim={0 0.1cm 0 0.1cm},clip]{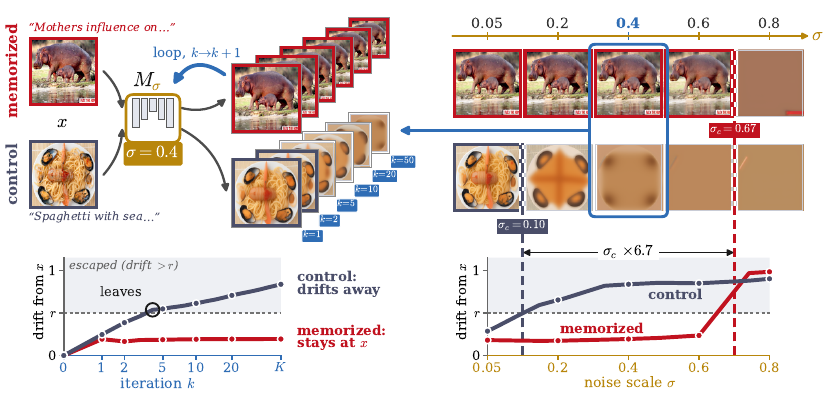}
\caption{\textbf{A diffusion model as a family of dynamical systems.} Fixing the noise level $\sigma$ and
feeding the denoiser its own output defines a self map $\M$ (\emph{left}). A memorized training image is a \emph{fixed point}
of this map: its orbit stays at the image, whereas the orbit of a control image drifts away and 
crosses the escape radius $r$ (\emph{bottom left}). Repeating this retention test at every $\sigma$ (\emph{right}) yields the critical scale
$\sigc$, the largest noise level at which the image is still returned; it is $6.7\times$ larger for the
memorized image, effectively detecting it.}
\label{fig:teaser}
\end{figure}
\begin{itemize}[leftmargin=*,itemsep=1pt,topsep=1pt,after=\vspace{-0.1cm}]

\item We show that a diffusion model can be interpreted as a \emph{family of dynamical systems}
indexed by the noise level: for an exact denoiser the attractors are the modes of the data density smoothed at scale $\sigma$, and varying $\sigma$ traces a scale
space in which these attractors merge.

\item In this framework, we define the \emph{critical scale} $\sigc$ as a memorization measure: the scale up to which
a sample remains an \emph{attractor} of the dynamics. It extends to conditional models, capturing the caption-image interaction. For the optimal case we prove that $\sigc$  depends on the inter-samples distances and their multiplicity, reflecting memorization induced by data duplication and outliers.

\item In controlled experiments we show that
$\sigma_c$ detects memorization induced by duplication, by overfitting and by planting outliers, transferring the mechanism from the theory to trained networks.

\item  We show that the critical scale is an effective detector on large-scale conditional models such as Stable Diffusion, for both fully and partially memorized examples and that the measures is
\emph{interpretable}: being able to characterize whether its retention comes from
the caption, from the image, or their interaction and where in the image the memorized content localizes.
\end{itemize}

\section{Diffusion models as a family of dynamical systems}
\label{sec:framework}
Let $x_0\in\R^{d}$ be drawn from a data distribution $p$, and let $y=x_0+\sigma\varepsilon$, with
$\varepsilon\sim\mathcal N(0,I)$, be its observation at noise level $\sigma>0$. Following
\citet{karras2022edm}, a diffusion model is a family of denoisers $\{\D:\R^{d}\to\R^{d}\}_{\sigma>0}$
trained by minimising $\E\,\|\D(x_0+\sigma\varepsilon)-x_0\|^{2}$ over a range of noise levels. Its
minimiser is the posterior mean $\D(y)=\E[x_0\mid y]$. Noise-prediction models such as DDPM
\citep{ho2020ddpm} and Stable Diffusion \citep{rombach2022ldm} reduce to this form: a network
$\epsilon_\theta$ trained to predict $\varepsilon$ gives $\D(y)=y-\sigma\epsilon_\theta(y)$, up to a
rescaling of the input (see Appendix~\ref{app:details}). 
\label{sec:map}

\begin{figure}[t]
\centering
\includegraphics[width=0.9\linewidth,trim={0 0.2cm 0 0.2cm},clip]{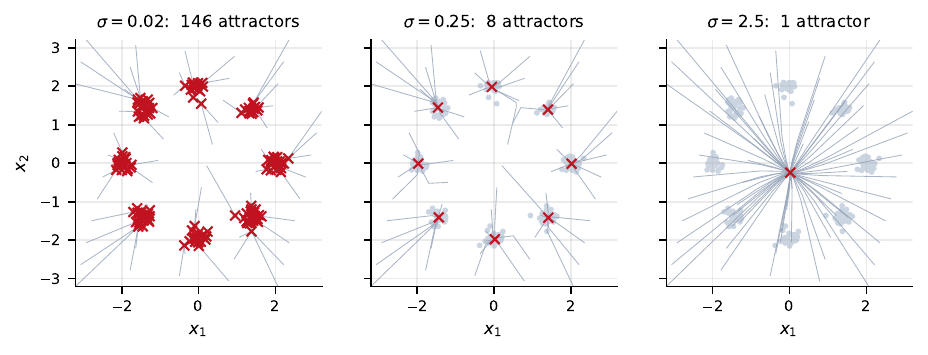}
\caption{\textbf{One model, a family of dynamical systems.} Orbits of the exact map $\M$
(Eq.~\ref{eq:meanshift}) for samples from an 8-mode mixture of gaussians. \emph{Left:} at small $\sigma$ each datum carries its own attractor. \emph{Middle:} at
intermediate $\sigma$ the attractors are the 8 population modes. \emph{Right:} at large $\sigma$ a
single attractor remains, at the data mean.}
\label{fig:portrait}
\end{figure}

We study each denoiser by fixing its noise level and iterating it on its own output, so that each noise level defines a deterministic dynamical system on $\R^{d}$ (Figure~\ref{fig:teaser}, left).
\begin{greybox}
\begin{definition}[Denoising dynamics at scale $\sigma$]
\label{def:map}
Fix $\sigma>0$ and let $\M:=\D$. The \emph{denoising dynamics at scale $\sigma$} is the deterministic
dynamical system on $\R^{d}$
\[
x_{k+1}=\M(x_k),\qquad k=0,1,2,\dots
\]
For $x\in\R^{d}$, the \emph{orbit} of $x$ is the sequence $(\M^{k}(x))_{k\ge0}$, where $\M^{k}$
denotes the $k$ times composition of $\M$. A point $x^\ast\in\R^{d}$ is a \emph{fixed point} if
$\M(x^\ast)=x^\ast$. Its \emph{basin of attraction} is
\[
\mathcal{B}_\sigma(x^\ast)=\big\{x\in\R^{d}:\ \lim_{k\to\infty}\M^{k}(x)=x^\ast\big\},
\]
and $x^\ast$ is an \emph{attractor} if $\mathcal{B}_\sigma(x^\ast)$ contains an open neighbourhood of
$x^\ast$. Letting $\sigma$ vary, a diffusion model defines the one-parameter family of dynamical
systems $\{\M\}_{\sigma>0}$.

\end{definition}
\end{greybox}
Throughout the paper our goal will be to characterize the dynamics of $\M$ as a function of the noise scale $\sigma$: see Figures~\ref{fig:portrait} and~\ref{fig:scalespace}, top row, for a 2D example of the map $M$ at different $\sigma_s$ and its dynamics.
Figure~\ref{fig:sdloop}
(Appendix~\ref{app:sdloop}) shows the loop on Stable Diffusion.
\label{sec:theory}

When the denoiser is exact, i.e. it predicts the posterior mean $\D(y)=\E[x_0\mid y]$,  the dynamics of $\M$ admit an explicit description: 

\begin{proposition}[
formal: Proposition~\ref{prop:meanshift-formal}]
\label{prop:meanshift}
For the exact denoiser, one step of the dynamics is a gradient-ascent step on the smoothed density,
$\M(x)=x+\sigma^{2}\nabla\log\ps(x)$. For a training set $p=\frac1N\sum_i\delta_{x_i}$, $M$ is
Gaussian mean shift with bandwidth $\sigma$,
\begin{equation}
\M(x)=\sum_i w_i(x)\,x_i,\qquad
w_i(x)=\frac{\exp\!\big(-\|x-x_i\|^{2}/2\sigma^{2}\big)}{\sum_j\exp\!\big(-\|x-x_j\|^{2}/2\sigma^{2}\big)}.
\label{eq:meanshift}
\end{equation}
Its fixed points are the critical points of $\ps$, its attractors are the modes of $\ps$, and every
orbit increases $\ps$ and converges to a fixed point.
\end{proposition}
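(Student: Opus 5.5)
The first part will follow from Tweedie's formula. Writing $\ps(y)=\int \phi_\sigma(y-x_0)\,p(dx_0)$ with $\phi_\sigma$ the $\mathcal N(0,\sigma^2 I)$ density, I will differentiate under the integral, which is justified because $\nabla\phi_\sigma$ is bounded and $p$ is a probability measure. This gives
\[
\sigma^2\nabla\ps(y)=\int (x_0-y)\,\phi_\sigma(y-x_0)\,p(dx_0),
\]
and dividing by $\ps(y)>0$ yields $\sigma^2\nabla\log\ps(y)=\E[x_0\mid y]-y$. Hence $\M(x)=x+\sigma^2\nabla\log\ps(x)$. Specialising to $p=\frac1N\sum_i\delta_{x_i}$, the posterior over the index $i$ is proportional to $\exp(-\|x-x_i\|^2/2\sigma^2)$. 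This gives the weights $w_i$ and equation~\ref{eq:meanshift} directly.

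Fixed points are immediate from this identity. $\M(x)=x$ holds iff $\nabla\log\ps(x)=0$, and since $\ps>0$ everywhere this is iff $\nabla\ps(x)=0$.

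For monotonicity I will use the standard mean-shift majorization argument (Comaniciu--Meer, Carreira-Perpiñán) instead of gradient-ascent step-size bounds. Set $f(x)=\sum_i e^{-\|x-x_i\|^2/2\sigma^2}\propto\ps(x)$. By convexity of $t\mapsto e^{-t}$,
\[
f(z)\ \ge\ f(x)+\sum_i e^{-\|x-x_i\|^2/2\sigma^2}\,\frac{\|x-x_i\|^2-\|z-x_i\|^2}{2\sigma^2}.
\]
The correction term is a concave quadratic in $z$, maximised exactly at $z=\M(x)$. Evaluating it there and using $\sum_i w_i(x)(x_i-\M(x))=0$, it equals
\[
\frac{f(x)}{2\sigma^2}\,\|\M(x)-x\|^2 .
\]
So I get $f(\M(x))-f(x)\ge \frac{f(x)}{2\sigma^2}\|\M(x)-x\|^2$, which shows $\ps$ strictly increases unless $x$ is a fixed point.

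Convergence will come in steps. Every iterate after the first lies in the convex hull $K$ of the data, which is compact. There $f\ge f_{\min}>0$, so summing the ascent inequality gives $\sum_k\|x_{k+1}-x_k\|^2<\infty$. Continuity of $\M$ then shows every limit point is a fixed point, and the limit set is connected.

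\textbf{The main obstacle} is upgrading this to convergence to a single fixed point, since the critical set of a Gaussian mixture need not be isolated a priori. I would invoke the fact that $\ps$ is real-analytic, so it satisfies a Łojasiewicz gradient inequality near each critical point. Combined with the sufficient-increase inequality and $\|x_{k+1}-x_k\|=\sigma^2\|\nabla\log\ps(x_k)\|$, the standard Absil--Mahony--Andrews argument gives finite length of the orbit and hence convergence.

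For \emph{attractors = modes}, I argue both directions. If $x^\ast$ is a strict local maximum, the sublevel-set argument gives attraction. The orbits are ascent sequences, and $\{\ps\ge\ps(x^\ast)-\eta\}$ has a small connected component around $x^\ast$ containing no other critical point. This component is invariant for starting points close enough to $x^\ast$, because $\|\M(x)-x\|$ is small near $x^\ast$. Orbits from there therefore converge to $x^\ast$.

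Conversely, if $x^\ast$ is not a local maximum, there are points arbitrarily close with $\ps(x)>\ps(x^\ast)$. Ascent then forbids their orbits from converging to $x^\ast$, so no neighbourhood lies in the basin. Modes here are understood as local maxima; degenerate non-strict maxima would need the analyticity, which makes maxima isolated on connected critical components, and I would state that caveat in the formal version.
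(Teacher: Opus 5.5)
Your proposal is correct, and it differs from the paper's proof in two substantive places. The Tweedie and fixed-point steps coincide with the paper's. Your tangent-line ascent bound is the additive form of the paper's Jensen bound $\log\ps(\M(x))-\log\ps(x)\ge\|\M(x)-x\|^2/2\sigma^2$; the paper's bound implies yours via $e^t\ge1+t$ and sums along the orbit without needing a positive lower bound on $f$.

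The first real difference is convergence. The paper only shows that the limit set is a connected set of fixed points, so its formal version asserts convergence only when the fixed points are isolated. Your {\L}ojasiewicz route removes that hypothesis, and it does apply. Since $\|x_{k+1}-x_k\|=\sigma^2\|\nabla f(x_k)\|/f(x_k)$, your inequality is exactly the strong ascent condition $f(x_{k+1})-f(x_k)\ge\frac12\|\nabla f(x_k)\|\,\|x_{k+1}-x_k\|$. Moreover $\ps$ is real-analytic for any compactly supported $p$, and the orbit is bounded. The Absil--Mahony--Andrews theorem therefore yields the unconditional claim of the informal statement.

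The second difference is the notion of attractor. In the appendix the paper uses linear stability: every eigenvalue of $\nabla\M(x^\ast)$ has modulus below $1$. It then reads off from $\nabla\M(x)=\sigma^{-2}\Cov[x_0\mid x]$ and $\nabla^2\log\ps=\sigma^{-4}\Cov[x_0\mid x]-\sigma^{-2}I$ that attractors are exactly the nondegenerate modes. This is a two-line argument that also rules out rotation and oscillation of the linearised map. You keep the main-text basin definition and argue with a Lyapunov function. That needs the convergence theorem, but it also captures degenerate strict maxima, which the linear criterion excludes.

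Two details need fixing. First, your invariance step becomes rigorous with the right order of choices:
\begin{itemize}
\item Take $\rho$ such that $x^\ast$ is the unique maximiser and the unique critical point of $\ps$ on $\bar B(x^\ast,\rho)$. Uniqueness of the critical point is not automatic: the {\L}ojasiewicz inequality forces every critical point near $x^\ast$ to have value $\ps(x^\ast)$, hence to equal $x^\ast$ at a strict maximum.
\item Then take $\delta\le\rho$ with $\M(B(x^\ast,\delta))\subset B(x^\ast,\rho)$.
\item Then take $c<\ps(x^\ast)$ so close to $\ps(x^\ast)$ that $\{x\in\bar B(x^\ast,\rho):\ps(x)\ge c\}\subset B(x^\ast,\delta)$.
\end{itemize}
This set is forward invariant and contains a neighbourhood of $x^\ast$.

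Second, your closing caveat points the wrong way. In the basin sense a non-strict local maximum is never an attractor. Nearby points with the same value of $\ps$ are themselves local maxima, hence fixed points that do not move. Analyticity does not prevent such points: for $p$ uniform on a circle, $\ps$ has a whole circle of maxima at small $\sigma$. What your argument proves is that the attractors are exactly the strict local maxima, and ``modes'' should be read that way.
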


The first identity is Tweedie's formula \citep{robbins1956empirical,efron2011tweedie}, and the
iteration of \eqref{eq:meanshift} is mean shift \citep{fukunaga1975,cheng1995,comaniciu2002}, the
classical mode-seeking clustering algorithm, run at bandwidth $\sigma$. The attractors of the
denoising dynamics are therefore the modes of the data blurred at scale $\sigma$. In the limit
$\sigma\to0$ every training example is an attractor of its own, whereas once $\sigma$ exceeds half the
diameter of the data, a single attractor remains, close to the data mean (Proposition~\ref{prop:ends});
at intermediate scales the attractors merge progressively, nearby data into cluster modes and cluster
modes into coarser ones. Figure~\ref{fig:portrait} shows the three regimes, and
Figure~\ref{fig:scalespace} traces the merging as a tree, in which the branch of an example terminates
where the example ceases to be an attractor of its own.

\subsection{Memorization and the critical scale}
\label{sec:sigc}

As $\sigma$ increases, a training example is absorbed into a coarser attractor, and the model reproduces only its cluster. We regard an example
as \emph{memorized} to the extent that it resists this absorption: in Figure~\ref{fig:scalespace},
non-duplicated examples lose their branch at small $\sigma$, whereas the duplicated example retains
it until merging with entire groups. To measure the scale of absorption without locating
the attractors, we initialize the dynamics at the example and test whether the orbit remains close to it.

\begin{figure}[t]
\centering
\includegraphics[width=0.9\linewidth,trim={0 0.2cm 0 0.2cm},clip]{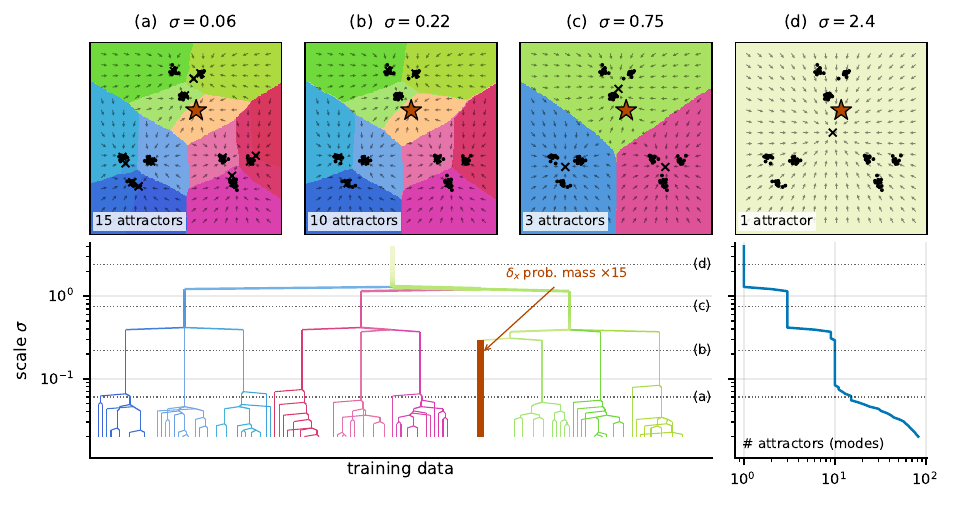}
\caption{\textbf{Scale space of attractors.} Exact map $\M$ on a 2-D training set: three groups of
three sub-clusters (black dots), plus one datum repeated $15$ times (red star). \emph{Top:} basins of
attraction at four scales, coloured by the reached attractor ($\times$); arrows are the drift
$\M(x)-x$. \emph{Bottom left:} merge tree of the attractors reached from the data (width $=$ number
of data collected). Single data lose their attractor at small $\sigma$; the duplicated datum (red)
keeps it until merging with the neighbouring sub-clusters. \emph{Bottom right:} number of attractors.
A denoiser trained on the same data reproduces this scale space, basins, merge tree and counts alike
(Appendix~\ref{app:remarks}, Fig.~\ref{fig:scalespacelearned}).}
\label{fig:scalespace}
\end{figure}

\begin{greybox}
\begin{definition}[Critical scale]
\label{def:sigc}
Fix an escape radius $r>0$ and an iteration budget $K$. The probe $x$ is \emph{retained at scale
$\sigma$} if $\|\M^{k}(x)-x\|\le r\|x\|$ for every $k\le K$. The critical scale of $x$ is the largest
scale at which it is retained,
\[
\sigc(x)\;=\;\sup\big\{\sigma>0:\ x\ \text{is retained at scale }\sigma\big\}.
\]
\end{definition}
\end{greybox}

The critical scale is thus the largest noise level at which the model, iterated on its own output,
still returns $x$: below $\sigc$ the example lies in a basin of its own, and above it the orbit is drawn towards an attractor possibly shared with other data. Figure~\ref{fig:teaser} illustrates the definition on a
memorized and a control image on Stable Diffusion model: the memorized image is returned over a range
of scales an order of magnitude wider than the control, and $\sigc$ marks the upper end of this range (the full grid of scales and iterations is in Figure~\ref{fig:sigcpic}, Appendix~\ref{app:sd}).
Being expressed in units of noise, $\sigc$ is comparable across models and noise schedules.

\paragraph{What the critical scale captures.}
\label{sec:sigc:raise}

For the exact denoiser, the critical scale $\sigc$ responds to the probability mass carried by an example and to its
distance from the rest of the data.

\begin{theorem}[formal: Theorem~\ref{thm:mass-formal}]
\label{thm:mass}
Increasing the probability mass on an example, for instance by duplicating it, reduces the displacement
$\|\M(x)-x\|$ at the example at every noise level and can therefore only increase its critical scale.
\end{theorem}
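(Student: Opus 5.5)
We work with the empirical form of Proposition~\ref{prop:meanshift}, now allowing unequal masses: $p=\sum_j \pi_j\delta_{x_j}$ with $\pi_j>0$ and $\sum_j\pi_j=1$. Duplicating $x_i$ is the special case in which $\pi_i$ increases and the other weights are rescaled proportionally. Write $g_j(x)=\exp(-\|x-x_j\|^2/2\sigma^2)$. The mean-shift map is then
\[
\M(x)=\sum_j w_j(x)x_j,\qquad w_j(x)=\frac{\pi_j g_j(x)}{\sum_l \pi_l g_l(x)} .
\]
Evaluated at the example itself, the term $j=i$ contributes nothing to the displacement, so
\[
\M(x_i)-x_i=\sum_{j\ne i} w_j(x_i)(x_j-x_i).
\]

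\textbf{First step.} We isolate the dependence on $\pi_i$. Raise $\pi_i$ to $\pi_i'>\pi_i$ and rescale the others as $\pi_j'=c\,\pi_j$ with $c=(1-\pi_i')/(1-\pi_i)<1$. Since $g_i(x_i)=1$, put $S=\sum_{j\ne i}\pi_j g_j(x_i)$ and $v=\sum_{j\ne i}\pi_j g_j(x_i)(x_j-x_i)$, neither of which depends on $\pi_i$. Then
\[
\M(x_i)-x_i=\frac{v}{\pi_i+S}, \qquad \M'(x_i)-x_i=\frac{c\,v}{\pi_i'+cS}.
\]
The direction $v/\|v\|$ is unchanged, and only the scalar factor changes. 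The ratio of the two factors is
\[
\frac{c(\pi_i+S)}{\pi_i'+cS}<1,
\]
because $c\pi_i<\pi_i'$. Hence $\|\M'(x_i)-x_i\|\le\|\M(x_i)-x_i\|$, with strict inequality whenever $v\neq0$, and this holds for every $\sigma$. This proves the first claim of the statement.

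\textbf{Second step: from one step to the critical scale.} This is the main obstacle. The claim "can only increase $\sigc$" concerns the whole orbit up to $K$ steps, not just the first displacement. The plan is to strengthen the pointwise comparison to every point of the orbit. At a general point $x$,
\[
\M(x)-x=\frac{\pi_i g_i(x)(x_i-x)+u(x)}{\pi_i g_i(x)+T(x)},
\]
where $u$ and $T$ collect the other samples, and $u$ carries the factor $c$ under reweighting. The modified map is therefore a convex combination, with the weight on the $x_i$ term increased, of a pull toward $x_i$ and the original drift. We would argue by induction over $k\le K$ that, within the ball $\|x-x_i\|\le r\|x_i\|$, adding mass at $x_i$ moves each iterate toward $x_i$.

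If a clean monotone coupling of orbits turns out to be too delicate, because the map is nonlinear and the orbits differ, we fall back on a cruder route. Using Proposition~\ref{prop:meanshift}, $\ps' = \ps + (\pi_i'-\pi_i)(\text{Gaussian at }x_i) - (\ldots)$, so the log-density gains a term that is radially increasing toward $x_i$. For the exact denoiser, retention is equivalent to $x_i$ lying in a basin whose attractor is within $r\|x_i\|$. We would then show that adding mass at $x_i$ enlarges the superlevel set of $\ps'$ containing $x_i$, relative to the others, so any $\sigma$ at which $x_i$ had its own mode is still such a $\sigma$. Taking the supremum over retained $\sigma$ then gives $\sigc'(x_i)\ge\sigc(x_i)$. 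Since the displacement is reduced at every $\sigma$, the statement says "can only increase": the critical scale never decreases, though it need not strictly increase.
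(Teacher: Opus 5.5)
Your first step is exactly the paper's proof of the displacement claim. The paper writes the posterior at $x^\ast$ as the mixture $\lambda_\theta\delta_{x^\ast}+(1-\lambda_\theta)\pi^{q}_{x^\ast}$, which gives $\D^{\theta}(x^\ast)-x^\ast=(1-\lambda_\theta)(D^{q}(x^\ast)-x^\ast)$ for any bounded-support remainder $q$. Your ratio $c(\pi_i+S)/(\pi_i'+cS)$ is the finite instance of $(1-\lambda_{\theta'})/(1-\lambda_\theta)$.

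The gap is in your second step, which as written proves nothing about $\sigc$. Both routes are stated as intentions, and neither would go through as formulated.

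The induction must compare $\M'(x_k')$ with $\M(x_k)$ at two \emph{different} points. Your decomposition only gives a same-point comparison, $\M'(x)-x_i=\rho(x)\,(\M(x)-x_i)$ with $\rho(x)=c(\pi_ig_i(x)+T(x))/(\pi_i'g_i(x)+cT(x))<1$. Transporting it along orbits needs an order-preserving structure. Exact maps have one in $d=1$: their derivative is a posterior variance, so they are nondecreasing, and there your coupling does close. There is no general analogue for mean shift in $d>1$.

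The fallback rests on two false claims. First, retention at finite $K$ is not equivalent to lying in a basin whose attractor is within $r\|x_i\|$. The test constrains every iterate up to $K$, and just past a saddle-node a slowly drifting orbit can stay inside the radius for $K$ steps with no attractor at all. Second, the change of log-density, $\log\big[(\pi_i'g_i+cT)/(\pi_ig_i+T)\big]$, is an increasing function of $g_i/T$, not a radial function about $x_i$. In any case, persistence of a mode near $x_i$ would not control the escape radius.

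What you are missing is that the critical-scale half of the statement, in its formal version (Theorem~\ref{thm:mass-formal}), concerns the one-iteration critical scale $\sigc^{(1)}$. For it the ``therefore'' is a single line. By Proposition~\ref{prop:k1}, retention at $\sigma$ with $K=1$ is exactly $f_\theta(\sigma)\le r\|x^\ast\|$, where $f_\theta(\sigma)=\|\D^{\theta}(x^\ast)-x^\ast\|$. Your pointwise inequality $f_{\theta'}\le f_\theta$ at every $\sigma$ therefore nests the retention sets, $\{\sigma:f_\theta(\sigma)\le r\|x^\ast\|\}\subseteq\{\sigma:f_{\theta'}(\sigma)\le r\|x^\ast\|\}$. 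Taking suprema gives $\sigc^{(1)}$ nondecreasing in $\theta$.

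That is the paper's whole argument. The paper makes no claim for $K>1$, where the escape event depends on the entire orbit, and your sketch does not establish one. At $K=1$ the increase is in fact strict whenever $x^\ast\neq0$ and $\sigc^{(1)}<\infty$, by Lemma~\ref{lem:strict}.
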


The map is pulled away from the example only by the other data, and additional mass on the example
dilutes this pull. The following theorem makes the rate explicit.

\begin{theorem}[formal: Thm.~\ref{thm:twoatom-formal}]
\label{thm:twoatom}
Consider an example of mass $\theta\le\tfrac12$ whose only neighbour, carrying the remaining mass,
lies at distance $d$. The example keeps an attractor of its own exactly below a critical scale, which
increases with $\theta$. For small $\theta$ this scale grows linearly in the distance and only
logarithmically in the mass, as $d/\sqrt{2\log(1/\theta)}$.
\end{theorem}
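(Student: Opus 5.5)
By translation and rotation invariance, place the example at $0$ with mass $\theta$ and its neighbour at $d e_1$ with mass $1-\theta$. The smoothed density $\ps(x)=\theta\,\phi_\sigma(x)+(1-\theta)\,\phi_\sigma(x-de_1)$ factorizes into a Gaussian in the orthogonal coordinates times a one-dimensional mixture in $t=x_1$. All critical points therefore lie on the segment axis, and orthogonal directions are always strictly concave directions of $\log\ps$. By Proposition~\ref{prop:meanshift}, the example keeps an attractor of its own exactly when the one-dimensional density
\[
f(t)=\theta e^{-t^2/2\sigma^2}+(1-\theta)e^{-(t-d)^2/2\sigma^2}
\]
has a local maximum near $0$, distinct from the one near $d$. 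So the plan reduces to the classical bimodality question for a two-component equal-variance Gaussian mixture.

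Next, rescale by $s=t/d$ and $\lambda=d^2/\sigma^2$. The critical-point equation $f'(t)=0$ becomes $\theta s\,e^{-\lambda s^2/2}=(1-\theta)(1-s)e^{-\lambda(1-s)^2/2}$. Taking logs gives
\[
g(s):=\log\frac{s}{1-s}-\lambda s+\tfrac{\lambda}{2}=\log\frac{1-\theta}{\theta}=:L\ge 0,
\]
with $s\in(0,1)$; critical points outside $(0,1)$ are impossible. The function $g$ runs from $-\infty$ to $+\infty$, and $g'(s)=\tfrac{1}{s(1-s)}-\lambda$. If $\lambda\le4$, $g$ is monotone, so there is one critical point and hence a single mode. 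If $\lambda>4$, $g$ has a local max at $s_-$ and a local min at $s_+=1-s_-$, where $s_\pm=\tfrac12(1\pm\sqrt{1-4/\lambda})$. There are three critical points, hence two modes, iff $g(s_-)>L$; note $g(s_+)=-g(s_-)<0\le L$ automatically. The critical scale is therefore $\sigc=d/\sqrt{\lambda^*(\theta)}$, where $\lambda^*$ solves $h(\lambda):=g(s_-(\lambda);\lambda)=L$. To get existence, uniqueness and monotonicity I would show $h$ is increasing in $\lambda$ on $(4,\infty)$. By the envelope theorem, $dh/d\lambda=\partial_\lambda g=\tfrac12-s_->0$, and $h(4)=0$. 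Since $L$ decreases in $\theta$, $\lambda^*$ decreases in $\theta$, so $\sigc$ increases in $\theta$, as claimed. This is consistent with Theorem~\ref{thm:mass}. One also checks that the mode near $0$ is the example's own, namely the local max at the smallest root.

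For the asymptotics as $\theta\to0$ we have $L\sim\log(1/\theta)\to\infty$, so $\lambda^*\to\infty$ and $s_-\approx1/\lambda$. Then $h(\lambda)=\lambda/2+O(\log\lambda)$, giving $\lambda^*=2\log(1/\theta)(1+o(1))$ and $\sigc\sim d/\sqrt{2\log(1/\theta)}$. The main obstacle I expect is the bookkeeping linking ``local max of $\ps$ near the example'' to ``attractor'' in the sense of Definition~\ref{def:map}. Strict maxima of the analytic $\log\ps$ are attracting for mean shift by Proposition~\ref{prop:meanshift}, but the tangency case $\lambda=\lambda^*$ needs care: there the degenerate critical point is not a strict max, so the statement ``exactly below'' holds with a strict inequality. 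I would treat that case by showing the merged critical point is an inflection of $f$, hence not an attractor.
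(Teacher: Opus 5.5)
Your proposal is correct and follows essentially the paper's own proof. Both arguments reduce the problem to the line through the two atoms, solve a scalar equation whose left side has a single hump and dip, and obtain monotonicity of the threshold from the same envelope derivative and the leading-order asymptotics from the same expansion. The dictionary is: under $u=1-2s$ and $\lambda=4\beta$, your $g(s)$ is $-2H(u)$, $L=2|h|$, and your threshold function $h(\lambda)$ is $2h^\ast(\lambda/4)$. The only difference is cosmetic: you work with critical points of the one-dimensional density and handle the tangency case as an inflection, whereas the paper works with fixed points of $u\mapsto\tanh(\beta u+h)$ and excludes the tangency through the Jacobian eigenvalue $\beta(1-u^2)=1$; Tweedie's formula makes the two equivalent.
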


Doubling the distance to the neighbour therefore doubles $\sigma_c$, whereas doubling the copies raises it by a few percent. The critical scale thus captures both duplicated examples
and outliers. At $\sigc$ the attractor disappears in a saddle-node bifurcation, which the
escape test detects (Proposition~\ref{prop:escape}).

\paragraph{Conditional models and the caption gap.}
\label{sec:cond}

A text-to-image model defines one family of dynamics $\M(\cdot\,;c)$ for each caption $c$, with
$c=\varnothing$ the unconditional model. Memorization is then a property of a caption--image
\emph{pair}, and we measure the change that the caption induces in the $\sigma_c$ of the image.

\begin{greybox}
\begin{definition}[Caption gap]
\label{def:capgap}
With $\sigc(x;c)$ the critical scale computed with $\M(\cdot\,;c)$, the \emph{caption gap} of the pair
$(x,c)$ is $\Delta\log\sigc(x;c)=\log\sigc(x;c)-\log\sigc(x;\varnothing)$.
\end{definition}
\end{greybox}

Subtracting the unconditional value removes the retention that the image exhibits on its own. For the
exact denoiser, the caption enters the mean shift \eqref{eq:meanshift} only through the prior weights
of the training images, like a duplication count, and Theorem~\ref{thm:mass} gives the following.

\begin{corollary}[formal: Corollary~\ref{cor:capgap-formal}]
\label{cor:capgap}
For a single iteration, and when the caption leaves the relative weights of the other examples
unchanged, the caption gap of $(x_i,c)$ is positive if and only if the caption raises the model's mass
on \emph{that} example.
\end{corollary}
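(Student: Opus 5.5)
We set up the conditional exact denoiser explicitly and reduce the statement to the single-iteration form of Theorem~\ref{thm:mass}. For a training set $\{x_j\}$ with caption-dependent prior weights $\pi_j(c)$, where $\pi_j(\varnothing)=1/N$ up to duplication, the posterior mean at scale $\sigma$ is the weighted mean shift
\[
\M(x;c)=\sum_j w_j(x;c)\,x_j,\qquad w_j(x;c)=\frac{\pi_j(c)\exp(-\|x-x_j\|^2/2\sigma^2)}{\sum_l \pi_l(c)\exp(-\|x-x_l\|^2/2\sigma^2)}.
\]
So the caption acts exactly as a reweighting of the atoms, which is the setting of Theorem~\ref{thm:mass}. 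With a single iteration ($K=1$), retention at scale $\sigma$ reduces to the displacement condition $\|\M(x_i;c)-x_i\|\le r\|x_i\|$. Hence $\sigc(x_i;c)$ is the supremum of the set of scales where this single-step displacement is below the threshold.

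Next we write the displacement at the example. Since $x_i-x_i=0$,
\[
\M(x_i;c)-x_i=\sum_{j\ne i}w_j(x_i;c)(x_j-x_i)=\bigl(1-w_i(x_i;c)\bigr)\,v_\sigma,
\]
where $v_\sigma=\sum_{j\ne i}\tilde w_j(x_j-x_i)$ and $\tilde w_j$ are the weights renormalized over $j\ne i$. The hypothesis that the caption leaves the relative weights of the other examples unchanged means $\pi_j(c)/\pi_l(c)=\pi_j(\varnothing)/\pi_l(\varnothing)$ for all $j,l\ne i$. Therefore $\tilde w_j$, and hence $v_\sigma$, are identical under $c$ and $\varnothing$. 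The only caption dependence is the scalar factor $1-w_i(x_i;c)$. Writing $\rho=\pi_i/\sum_{j\neq i}\pi_j$, this factor equals $1/(1+\rho\,g(\sigma))$ with $g(\sigma)>0$ independent of the caption. It is strictly decreasing in $\rho$, which is the content of Theorem~\ref{thm:mass}. Thus if the caption raises the model's mass on $x_i$ (increases $\rho$), the displacement shrinks by a strictly smaller factor at every $\sigma$. If the caption lowers it, the displacement grows at every $\sigma$. If $\rho$ is unchanged, the two maps coincide at $x_i$.

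The main obstacle is converting the pointwise displacement comparison into a strict comparison of the suprema, since the ``if and only if'' requires strictness. We plan to use that $\sigma\mapsto\|\M(x_i;c)-x_i\|$ is continuous. At $\sigc(x;\varnothing)$ the unconditional displacement equals $r\|x_i\|$ exactly, provided it is a genuine crossing, which we assume as in the paper's standing nondegeneracy. A strictly larger $\rho$ then gives a displacement strictly below the threshold there. By continuity it stays below the threshold on a neighbourhood, so $\sigc(x_i;c)>\sigc(x_i;\varnothing)$. Symmetrically, a smaller $\rho$ gives a strict decrease, and an equal $\rho$ gives zero gap.

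Taking logarithms then yields $\Delta\log\sigc>0$ exactly when the caption raises the mass on that example. If strictness at the crossing fails because the displacement curve only touches the threshold, we would weaken the conclusion to a non-strict inequality or add the transversality assumption explicitly.
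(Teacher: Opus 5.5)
Your reduction is the same as the paper's. The hypothesis makes the normalized remainder $q_i$ identical under $c$ and $\varnothing$, so both conditionals belong to the family $p_\theta$ of Theorem~\ref{thm:mass-formal}, and your factorization $\bigl(1-w_i(x_i;c)\bigr)v_\sigma$ is exactly \eqref{eq:massdecomp}.

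The step you call the main obstacle needs no transversality assumption, and there is no reason to fall back to a non-strict conclusion. Write $f$ for the unconditional one-step displacement at $x_i$ and set $\tau=r\|x_i\|$. Suppose $\bar\sigma=\sigc(x_i;\varnothing)$ is finite.
\begin{itemize}
\item The retention set $\{\sigma: f(\sigma)\le\tau\}$ is closed, so $f(\bar\sigma)\le\tau$.
\item Every $\sigma>\bar\sigma$ escapes, so continuity gives $f(\bar\sigma)\ge\tau$.
\end{itemize}
Hence $f(\bar\sigma)=\tau$ always holds at the supremum. A curve that merely touches $\tau$ can only do so below $\bar\sigma$, where it does not matter. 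The strictly smaller factor then puts the captioned displacement strictly below $\tau$ at $\bar\sigma$, and by continuity on an interval above it. This is the paper's Lemma~\ref{lem:strict}, and the reverse direction is the same lemma with the roles swapped.

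What you should state instead are the hypotheses this argument actually uses.
\begin{itemize}
\item Both critical scales must be finite. This can fail: as $\sigma\to\infty$ the one-step displacement tends to $\|\E_p[x_0]-x_i\|$, which may lie below $\tau$, giving $\sigc=\infty$ and an undefined gap.
\item You need $x_i\neq0$, so that $\tau>0$ forces $v_{\bar\sigma}\neq0$ and the change in the scalar factor is felt strictly.
\end{itemize}
The paper's proof accordingly concludes only ``when both are finite''.
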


\paragraph{From the exact denoiser to trained networks.}
\label{sec:transfer}

The results above concern the exact denoiser, of which a trained network is only an approximation,
and we do not expect a trained network to reproduce their values. We expect the mechanism to carry
over: an example retains a basin of its own when it carries mass or lies far from the rest of the
data, and $\sigc$ measures the width of that basin. The experiments of Section~\ref{sec:evidence} verify if memorization from duplicated data and outliers transfers to real data scenarios. On the
exact denoiser of a real training set the theory holds quantitatively (Appendix~\ref{app:exact}), and
Appendix~\ref{app:coef} examines how closely trained networks approach it.
\section{Controlled experiments}
\label{sec:evidence}

We first evaluate the critical scale in a controlled setting, with DDPM~\citep{ho2020ddpm} models on
the \texttt{CIFAR-10} dataset~\citep{krizhevsky2009learning}. 
An example can be memorized for different reasons, but these mechanisms are confounded in pretrained
models. We therefore isolate them through controlled training interventions and test whether
critical scale detects the memorization induced by each.
In all experiments $\sigc$ is computed by bisection in $\log\sigma$ on the escape event of
Definition~\ref{def:sigc}. Training and evaluation details, and additional results are given in Appendices~\ref{app:details} and
~\ref{app:cifar}.
\begin{itemize}[leftmargin=*,itemsep=1pt,topsep=0.5pt,after=\vspace{-0.2cm}]
\item \emph{Duplication} (\S\ref{sec:cifar:dup}). An example is repeated $m$ times in the training
set, so the empirical distribution places mass $m/N$ on it.
\item \emph{Rarity} (\S\ref{sec:cifar:out}). An atypical example is added a few times. It has no
training neighbours with which to share a basin, but it also receives almost no probability mass.
\item \emph{Overfitting} (Appendix~\ref{app:cifar:n}). The training set is reduced at fixed model
capacity and training budget, moving the model from generalization to memorization. $\sigc$ tracks
this transition and detects memorization at training-set sizes where sampling no longer finds copies.
\end{itemize}

\begin{figure}[t]
\centering
\includegraphics[width=\linewidth, trim={0cm 0.2cm 0cm 0.25cm},clip]{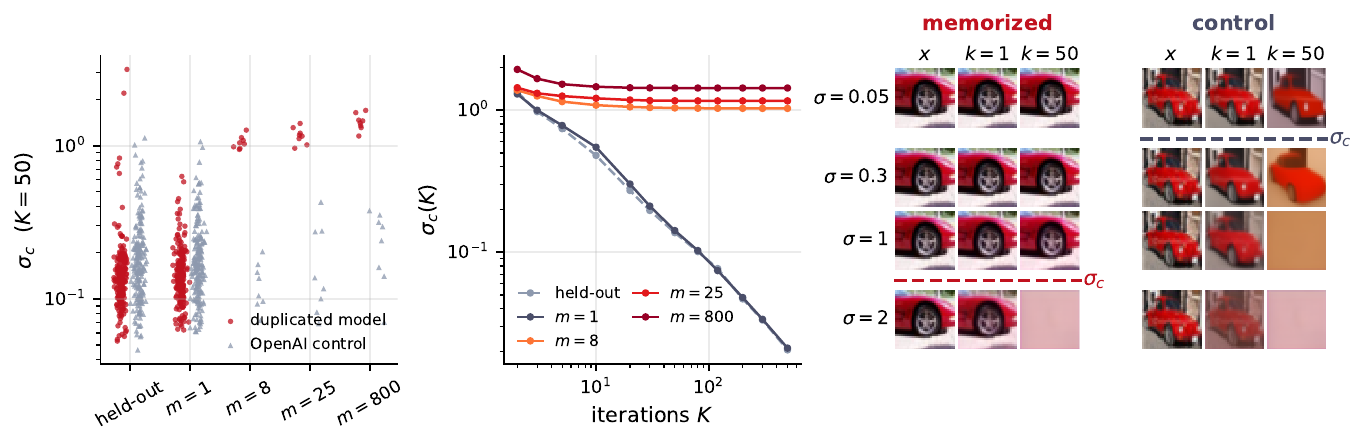}
\caption{\textbf{Duplication on full CIFAR-10.} \emph{(a)} $\sigc$ of each image, grouped by
multiplicity, in the model trained with duplicates (circles) and in the released model trained on the
same images without duplicates (triangles). \emph{(b)} $\sigc$ as a function of the iteration budget
$K$ (median per group): constant for images with $m\ge8$, which are fixed points, and decreasing as
$1/K$ for held-out and $m=1$ images, which are transients. \emph{(c)} Orbits $\M^{k}(x)$ of a memorized
training image ($m=400$) and of a training image seen once ($m=1$), with $\sigma$ fixed along each row
and $k$ along the columns; the dashed line marks the $\sigc$ of each image ($K=50$, $r=0.5$; $1.62$
and $0.14$). The memorized image is still reproduced after $50$ iterations at $\sigma=1$, where the
other one has already vanished.}
\label{fig:cifar}
\end{figure}

\subsection{Memorization from data duplication}
\label{sec:cifar:dup}

\paragraph{Setup.}
We study memorization induced by duplication, and ask whether $\sigc$ detects it and how it depends
on the number of copies. To isolate the effect of duplication from that of membership, we compare two
models trained on the same images: one with duplicates and one without. We train the
improved-diffusion CIFAR-10 model \citep{nichol2021improved} ($50$M parameters, $500$k steps) on the
full training set, in which eight images are repeated at each multiplicity
$m\in\{2,8,25,50,100,200,400,800\}$. As control we use the publicly released checkpoint of the same
model, trained on the same data without duplicates. We evaluate $\sigc$ on the $64$ duplicated
images, on $200$ training images that appear once ($m=1$), and on $200$ held-out images. Duplication
does induce copying: $16\,605$ of $50\,000$ generated samples are copies of a duplicated image.

\paragraph{Results.}
Figure~\ref{fig:cifar}a shows that $\sigc$ responds to the intervention rather than to the images. In
the model trained with duplicates, the median $\sigc$ is $0.137$ on held-out images and $0.142$ at
$m=1$, against $1.04$--$1.43$ for $m\ge8$. In the control model the same images have $\sigc$ between
$0.12$ and $0.27$ at every multiplicity, and the rank correlation with $m$ drops from $\rho=0.70$ to
$0.19$. Used as a detector, $\sigc$ separates $m\ge8$ from $m=1$ with Area Under the Curve (AUC) $1.000$. The duplicated images are fixed points: at $m=800$ the median
$\sigc$ is the same for $K=50$ and $K=500$ iterations, while for held-out and $m=1$ images it
decreases by more than two orders of magnitude over the same range (Figure~\ref{fig:cifar}b). Above
$m=8$, $\sigc$ increases by only a factor $1.38$ while $m$ increases a hundredfold, the saturation
expected of a width in which mass enters logarithmically (Appendix~\ref{app:cifar:dup}).

\paragraph{Takeaway.}
The critical scale captures memorization from data duplication, which turns transient states in the image orbits into stable fixed points. Additional copies increase $\sigc$ only slightly.

\vspace{-0.2cm}

\subsection{Memorization of rare samples}
\label{sec:cifar:out}
\vspace{-0.2cm}

\begin{figure}[h]
\centering
\includegraphics[width=0.93\linewidth,trim={0cm 0.1cm 0 0.1cm},clip]{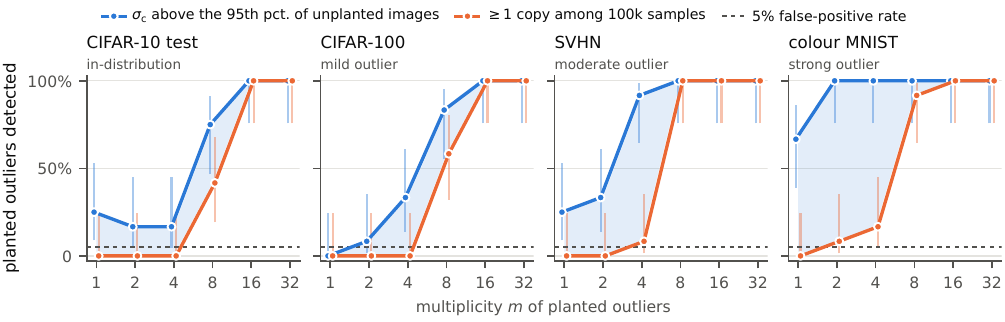}
\caption{\textbf{Rarity: detection thresholds of $\sigc$ and of sampling.} Fraction of added images
detected as a function of their multiplicity $m$, for sources of increasing atypicality. \emph{Orange}: the
image appears at least once among $10^5$ samples; the threshold is $m\approx8$ for every source. \emph{Blue}:
$\sigc$ exceeds the $95$th percentile of the images of the same source not added to the model; the
threshold decreases to $m=1$ as the images become more atypical. The shaded region is memorization
detected by $\sigc$ but not by sampling.}
\label{fig:outliers}
\end{figure}

\paragraph{Setup.}
We study memorization of outliers, which Theorem~\ref{thm:twoatom} predicts to be retained
with few copies because of their distance to the other samples. We train two models, A and B, on the
same subset of $N=10^4$ \texttt{CIFAR-10} images, to which we add images from 4 datasets of increasing atypicality: \texttt{CIFAR-10} test images, \texttt{CIFAR-100} classes with no \texttt{CIFAR-10} counterpart, \texttt{SVHN} digits, and
colourised \texttt{MNIST} digits. The added images statistics are matched to the \texttt{CIFAR-10}. From each source, $36$ images are added to A, $36$ other images to B, and $100$ images to
neither, with six images at each multiplicity $m\in\{1,2,4,8,16,32\}$. Each image is thus memorized in
one model and not in the other, and the images added to neither model serve as a reference
distribution. We draw $10^5$ samples from each model and count near-copies of the added images.

\paragraph{Results.}
Sampling and $\sigc$ have different detection thresholds (Figure~\ref{fig:outliers}). Sampling detects added images only from $m\approx8$, for every source; below this multiplicity no image is generated.
We flag an image with $\sigc$ when its value exceeds the $95$th percentile non-added images from
the same source, which gives a false-positive rate of $5\%$ without using
labels.  The resulting threshold
decreases with atypicality:  $m=8$ for \texttt{CIFAR-10} and
\texttt{CIFAR-100}, $m=4$ for \texttt{SVHN} and $m=1$ for colour \texttt{MNIST}. The paired design confirms that the effect is due
to training on the image and not to the image itself. 
Among the added images that were never sampled in the $10^5$ drawns, $\sigc$ flags
$88.2\%$ (\texttt{MNIST}), $48.6\%$ (\texttt{SVHN}), $25.6\%$ (\texttt{CIFAR-10}), and $22.0\%$ (\texttt{CIFAR-100}).
Additional analysis on how $\sigc$ depends on distance
and on multiplicity in these models is in Appendix~\ref{app:cifar:out}.

\paragraph{Takeaway.}
The critical scale can detect memorized samples that are unlikely to be generated: $\sigma_c$ depends also on distance, generation only on mass. An isolated example (outlier) can be memorized, since it has room for its own basin, yet not be sampled, since it carries little probability mass.
\section{Detecting Memorization in Stable Diffusion}\label{sec:sd}

\begin{table}[t]
\centering\small

\caption{\textbf{Memorization-detection on Stable Diffusion v1.4.}
AUC for distinguishing MV and TV examples from controls, caption-swapped pairs, and MV from TV. Our global scores use $r=0.25$, and local $|\Delta\log\sigc|$ uses $r=0.1$. For both variants of \citet{wen2024detecting}, we report its best-performing configuration (Appendix~\ref{app:details}).}
\label{tab:sd}
\begin{tabular}{@{}lccccc@{}}
\toprule
Score & MV vs.\ control & MV vs.\ swap & TV vs.\ control & TV vs.\ swap  & MV vs.\ TV\\
\midrule
$\|\epsilon_c-\epsilon_\varnothing\| $ \footnotesize{\cite{wen2024detecting}} & $\mathbf{0.999}$ & $0.606$ & $\mathbf{0.999}$ & $0.184$ & $0.902$ \\
$\|\epsilon_c(x_0)-\epsilon_\varnothing(x_0)\|$ & $\mathbf{0.999}$ & $0.887$ & $\mathbf{0.996}$ & $0.696$ & $0.868$\\
\midrule
$\sigc$, $K=16$                & $0.917$ & $0.904$ & $0.695$ & $0.803$  & $0.909$ \\
$\Delta\log\sigc$, $K=16$                & $0.8569$ & $\mathbf{0.908}$ & $0.824$ & $\mathbf{0.939}$  & $0.820$ \\
 $|\Delta\log\sigc|$, $K=16$       & $0.984$ & $0.860$ & $0.879$ &  $0.354$ & $0.905$\\
local $|\Delta\log\sigc|$, $K=16$         &    $\mathbf{0.997}$  & $0.865$ &  $\mathbf{0.993}$ & $0.458$  & $\mathbf{0.935}$ \\
\bottomrule
\end{tabular}\\[2pt]
\end{table}

We evaluate Stable Diffusion v1.4 \citep{rombach2022ldm} on memorized \texttt{LAION} training examples
identified by \citet{webster2023reproducible}: \emph{matching verbatim} (MV) examples, whose
generations reproduce the paired training image, and \emph{template verbatim} (TV) examples, whose
generations preserve only part of its content or structure. After removing duplicates and mismatched
image--caption pairs (Appendix~\ref{app:details}), we retain $70$ MV and $24$ TV examples. As controls
we use $94$ training images from \texttt{LAION-MI}\citep{dubinski2024membership}, drawn from LAION Aesthetics v2 5+\citep{schuhmann2022laion}, so that all images are training members
and differ only in their degree of memorization. Each memorized image is evaluated with its own
caption, a \emph{swap} caption (captions randomly permuted across memorized images) and the empty
caption; each control with its own and the empty caption.

\paragraph{Scores.}
From these runs we compute the critical scale $\sigc$ under the own caption and the caption gap
$\Delta\log\sigc$ of Definition~\ref{def:capgap}, which measures how much the caption increases the
retention of that image (Figures~\ref{fig:traj-mv-control}--\ref{fig:sweep} in
Appendix~\ref{app:examples}). Both are global and do not say \emph{where} in the image memorization
occurs. We therefore also apply the retention test separately at each latent location $u\in U$, which
gives a local gap $\Delta\log\sigc^{u}$ and the local score
$\frac{1}{|U|}\sum_{u\in U}|\Delta\log\sigc^{u}|$ (see Appendix~\ref{app:details}). To separate the benefit
of spatial resolution from that of discarding the sign, we also report the absolute global gap
$|\Delta\log\sigc|$. We first evaluate critical-scale scores for detecting memorized examples
 (\S\ref{sssec:sd-detection}), then examine whether local gaps identify memorized regions
 (\S\ref{sssec:sd-localization}), and finally use mismatched captions to identify images memorized in
 the unconditional branch (\S\ref{sssec:sd-image-driven}).

\subsection{Critical scale detects memorization}\label{sssec:sd-detection}

\paragraph{Setup.}
We report AUC for separating: (i) MV and TV examples from controls, (ii) caption-swapped pairs, and (iii) MV from TV. We compare our scores with the prompt-level memorization
score $\|\epsilon_\theta(x_t,c)-\epsilon_\theta(x_t,\varnothing)\|$ of
\citet{wen2024detecting}. Since this measure depends only on the caption, we also consider an image-dependent variant: $x_t$ is obtained by adding noise to the candidate image $x_0$.
This provides a stronger baseline for testing if a particular image-caption pair is memorized.
\paragraph{Results.}
Table~\ref{tab:sd} reports the detection results (see Appendix~\ref{app:sd-ablation} for an ablation on $K$ and $r$). All scores separate memorized examples from control, with local $|\Delta\log\sigma_c|$ performing best among our variants. It achieves AUCs of $0.997$ for MV and $0.993$ for TV, nearly matching the prompt-level baseline ($0.999$). Its largest gain occurs for TV, where it raises the AUC from $0.879$ to $0.993$, consistent with partial memorization being localized. It also best separates MV from TV ($0.935$).

For caption swaps, the signed gap performs best because a mismatched caption can reduce conditional retention below unconditional retention, reversing the gap’s sign (Figure~\ref{fig:sd-histograms}, Appendix~\ref{app:hist}). Taking the absolute value discards this directional information. This advantage is most pronounced for TV, where $\Delta\log\sigma_c$ achieves $0.939$ AUC versus $0.696$ for the image-dependent baseline. 

\paragraph{Takeaway.}
Critical scale is an effective global measure of memorization at scale, and computing it per spatial location improves detection. Comparing unconditional retention with retention under a mismatched caption further reveals whether an image was memorized with a particular caption.
\subsection{Local critical scale localizes memorized regions}\label{sssec:sd-localization}
\begin{figure}[t]
    \centering

    \begin{subfigure}[t]{0.38\linewidth}
        \centering
        \raisebox{0.4cm}{%
            \includegraphics[width=\linewidth]{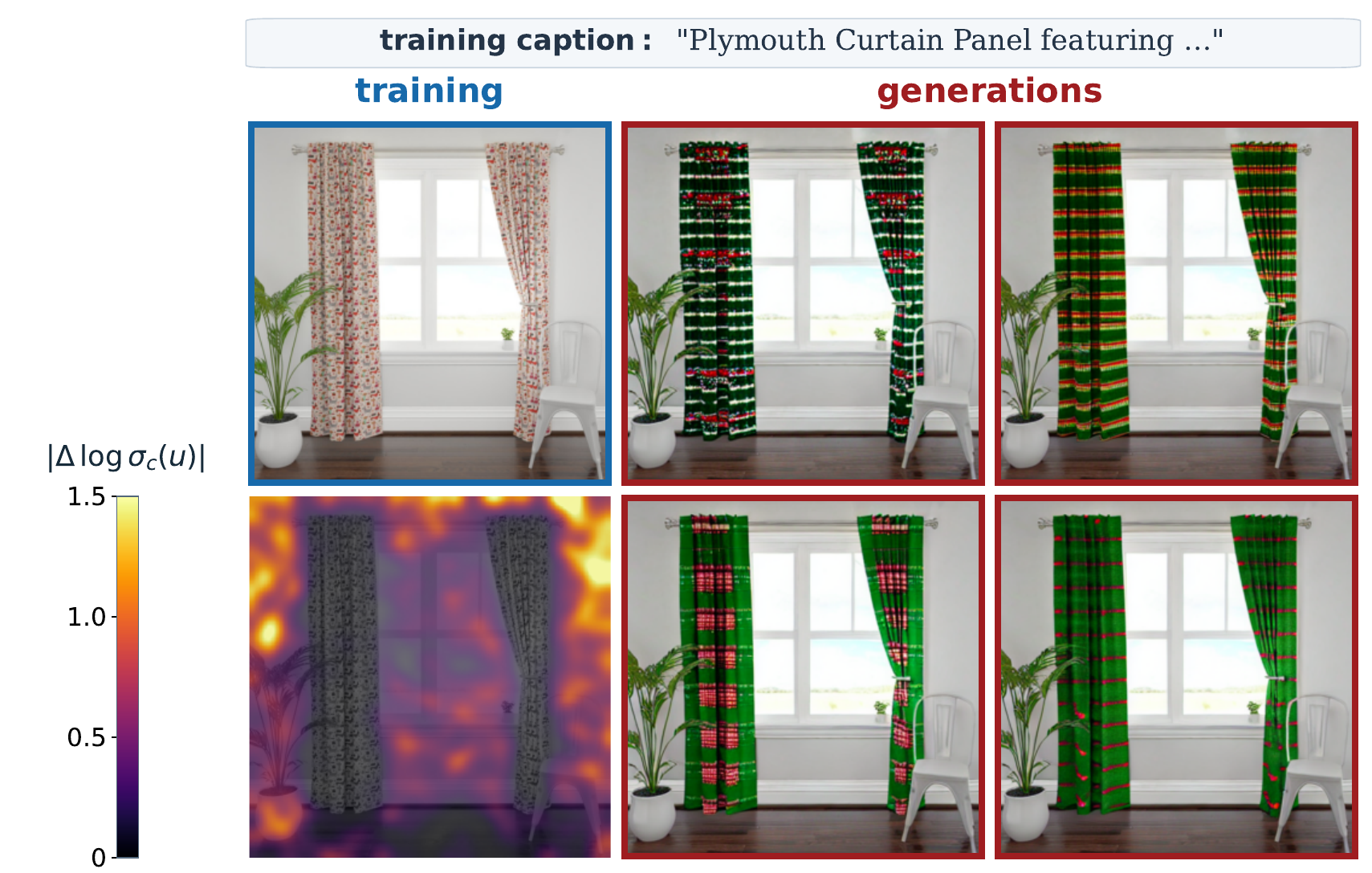}%
        }
    \end{subfigure}
    \hfill
    \begin{subfigure}[t]{0.60\linewidth}
        \centering
        \includegraphics[width=\linewidth]{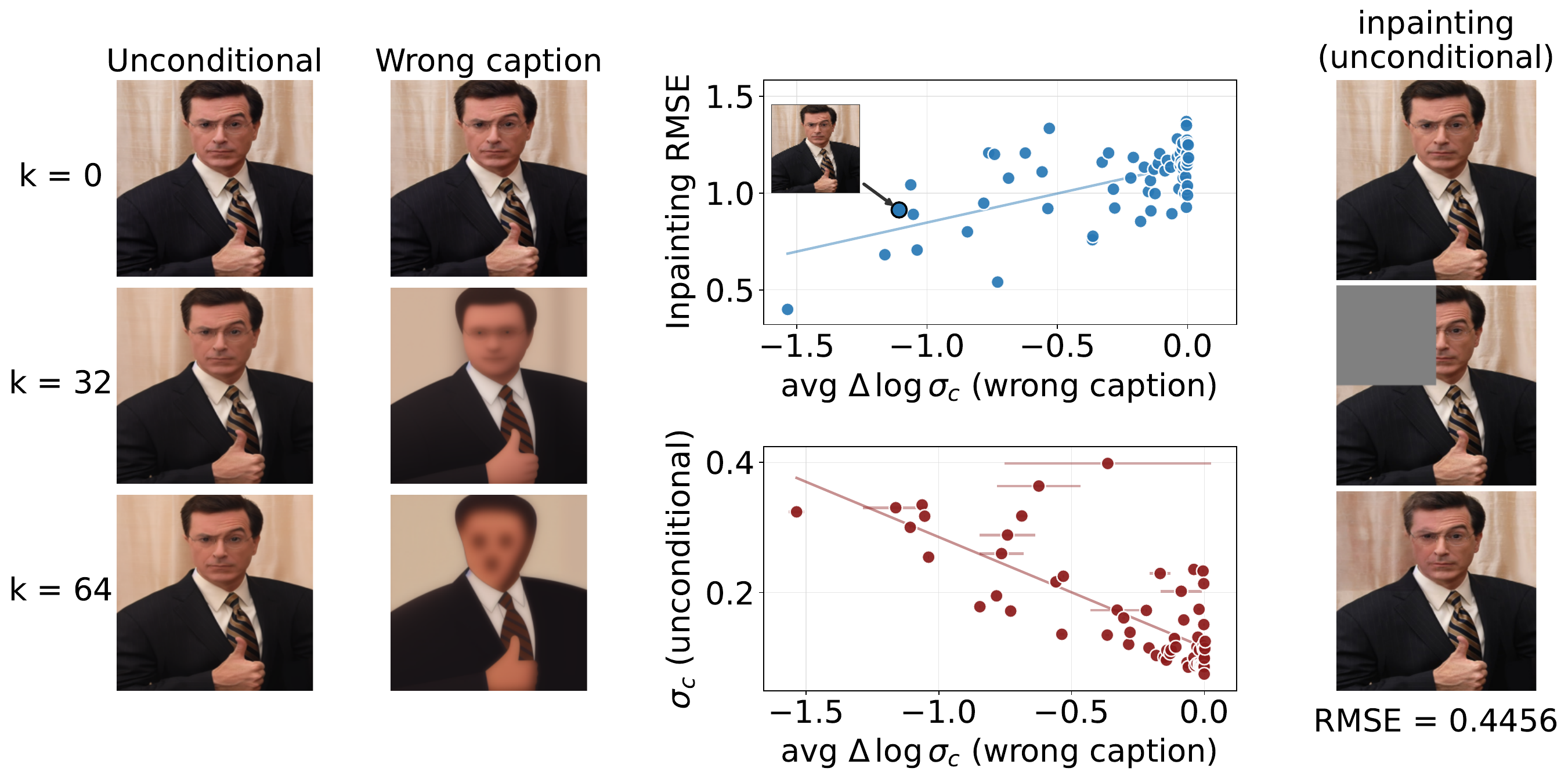}
    \end{subfigure}

    \caption{\textbf{Critical-scale gaps reveal spatial and image-driven memorization.}
\textit{Left:} A TV example: the training image, generations from its training caption, and the
local gap $|\Delta\log\sigma_c^u|$. The gap is large where all generations copy the training image
(the room) and near zero where they vary (the curtain pattern).
\textit{Right:} A wrong caption exposes images already retained by the unconditional model.
Center: wrong-caption gap $\Delta\log\sigma_c$ against unconditional inpainting RMSE (top) and
unconditional $\sigma_c$ (bottom); bars include std deviation over 5 captions. Side: a negative-gap image is retained by unconditional dynamics but lost when conditioning on a wrong caption;
unconditional inpainting restores the masked region.
}
    \label{fig:sd-interpretability}
\end{figure}
\paragraph{Setup.}
We qualitatively assess if the local gap $|\Delta\log\sigma_c^u|$ identifies memorized regions
in TV examples. As a reference, we generate five samples from each training caption: consistently
reproduced regions indicate memorized content, whereas variable regions indicate non-memorized
content. 
\paragraph{Results.}
Figure~\ref{fig:sd-interpretability} (left) shows that large values of
$|\Delta\log\sigma_c^u|$ concentrate in regions reproduced
consistently across generations. By contrast, regions that vary across samples generally have gaps close to zero. Thus, the spatial variation of the critical-scale gap aligns with the apparent boundary between memorized and non-memorized content. The trajectories in Figures~\ref{fig:traj-tv-control} and~\ref{fig:traj-mv-tv}  provide
complementary evidence: in partially memorized images, non-memorized content disappears at smaller
noise scales, whereas memorized regions persist longer. Additional qualitative examples are provided in Figure~\ref{fig:additional-local-gap} in Appendix~\ref{app:local-sigma-gap}.
\paragraph{Takeaway.}
The local critical-scale gap provides an \emph{interpretable} spatial measure of where memorized content
appears within an image.
\subsection{Critical-Scale Gap Identifies Image-Driven Memorization}\label{sssec:sd-image-driven}
\paragraph{Setup.}
We aim to find images that the unconditional branch has memorized by itself, and not only through their captions. A strongly negative $\Delta\log\sigma_c$ indicates stronger retention under unconditional
than conditional dynamics. However, since unconditional memorization is rare, searching the full training set for such examples is computationally prohibitive.
We therefore use the MV examples under five unrelated captions drawn from LAION-MI, and compute $\Delta\log\sigma_c$ for each image-caption pair. This allows us to rank images by evidence of unconditional retention.
Since finding an image through unconditional sampling is not feasible, we evaluate this ranking through inpainting. We mask each image quadrant in turn, reconstruct it from the remaining three, and report the mean latent RMSE to the ground-truth region. For each image, we generate 10 samples per quadrant. Low reconstruction error provides evidence that the image is retained independently of its caption.
\paragraph{Results.}
Figure~\ref{fig:sd-interpretability} (right) shows an example with a strongly negative gap. Its
unconditional dynamics retain the image more strongly than the mismatched-caption dynamics, and
unconditional inpainting accurately reconstructs the missing content. This provides qualitative evidence that the image is memorized by the unconditional branch. This reconstruction is consistent with the memorized image acting as an attractor and the masked image lying within its basin of attraction (see Figure~\ref{fig:sdloop} in Appendix~\ref{app:sdloop}).
Across images, strongly negative gaps are associated with lower inpainting RMSE (top center) and
larger unconditional $\sigma_c$ (bottom center), supporting their use as evidence of unconditional
retention. Gaps near zero are less informative because they can arise when either both dynamics
retain the image or neither does. Figure~\ref{fig:near-zero-gap} in Appendix~\ref{app:capt-swap} illustrates both near-zero regimes, while Figure~\ref{fig:negative-gap-examples} provides additional examples with strongly negative gaps.
\paragraph{Takeaway.}
A strongly negative critical-scale gap provides evidence that an image is retained independently of
its caption, identifying cases in which the image itself contributes to memorization and can be
reconstructed by the unconditional branch.

\section{Related work}
\label{sec:related}
\paragraph{Dynamical and multiscale views of diffusion models.}
Mean shift turns gradients of kernel density estimates into mode-seeking dynamics
\citep{fukunaga1975,cheng1995,comaniciu2002}; with Gaussian kernels, it equals 
posterior-mean denoising of the empirical distribution, whose smoothed-density critical points are
fixed points and modes are attractors. 
The link between denoising and the score, established for denoising autoencoders \citep{alain2014autoencoders}, underlies score-based diffusion via reverse-time SDEs/ODEs \citep{song2021sde}. These dynamics reveal how structure emerges across noise scales:
global high-variance features precede low-variance details \citep{wang2023diffusion}, a speciation transition marks the emergence of broad structure \citep{biroli2024dynamical}, and high- and low-level features evolve differently, reflecting the data hierarchy \citep{sclocchi2025phase}. These works follow
generation trajectories: instead, we fix the noise scale and iterate the denoiser, obtaining a family of dynamical systems whose attractors and persistence across
scales describe the learned structure.

\paragraph{Memorization and generalization in diffusion models.}
The empirical optimum of denoising score matching reproduces the training data: \citet{gu2025memorization} study when learned models approach it, and \citet{baptista2025memorization} how explicit and implicit regularization in the reverse dynamics prevent it. 
 \citet{kadkhodaie2024generalization} attribute generalization to
geometry-adaptive inductive biases, \citet{shah2025generation} tie memorization mainly to low-noise
denoising, and \citet{bonnaire2025why} show that good generation precedes memorization during
training. 
Closest to us, \citet{fumero2025navigating} iterate an autoencoder's map, whose latent fixed points and attractors reflect memorization and generalization. A diffusion model instead induce a family of dynamical systems, so attractor persistence can be tracked along a scale axis. \citet{pham2025memorization} view diffusion models as associative memories, 
and \citet{jain2025classifier} argue that classifier-free guidance steers reverse trajectories into the basins of memorized images. Both study the reverse  generation trajectory, while our fixed-scale dynamics measure how strongly a given image-caption pair is retained across noise scales.

\paragraph{Detecting memorization in diffusion models.}
Early work detected replication by matching generations to training data \citep{somepalli2023forgery} or extracting training examples \citep{carlini2023extracting,webster2023reproducible}. Later methods
avoid exhaustive generation. \citet{wen2024detecting} detect memorized prompts from the conditional--unconditional noise-prediction gap. \citet{jeon2025sharpness} use probability-landscape sharpness and \citet{ross2025geometric} the local geometry of model generations to detect memorization.
\citet{jiang2025imagelevel} address image-level detection without the paired prompt and \citet{chen2025exploring} localize memorized regions.  Our score instead depends jointly on a
candidate image--caption pair, testing whether that pair is memorized rather than whether the caption alone elicits memorization. Comparing conditional and empty-caption critical scales further detects memorization in the unconditional branch. Membership inference \citep{carlini2023extracting,hu2023membership,zhai2024clid,duan2023membership,matsumoto2023membership} instead asks whether an example was used in training, not whether it is retained strongly enough to be reproduced.

\section{Conclusions}
\label{sec:discussion}

In this paper we proposed to interpret a diffusion model as a \emph{family of dynamical systems} indexed by the
noise level, whose attractors form a scale space of the data. Memorization then becomes the persistence of an example as an attractor, measured by the critical scale $\sigc$. 
On Stable Diffusion, $\sigma_c$ detects fully and partially memorized image--caption pairs and offers two forms of interpretation: comparing conditional and unconditional critical scales isolates the caption-image interaction in memorization, while local variants of $\sigc$ can  identify which regions
within an image are memorized. Further analysis is needed to quantify the precision and generality of this localization, which in principle can be extended also to the caption parts.
Our theory holds for the exact denoiser. While the mechanism transfers to trained networks (see Section \ref{sec:evidence}), closing this gap formally--for instance via analytic models of the inductive biases of trained diffusion models \cite{kamb2024analytic}--is a natural direction for future work. 
Beyond memorization, the orbits and basins of the denoising maps may reveal broader structure in what a model has learned \cite{sclocchi2025phase}, and locating where individual examples are retained in the weights \cite{hintersdorf2024nemo} suggests a route towards targeted unlearning.

\section*{Acknowledgements}
M.F. was supported by the project “Building Energy Systems on causal reasoning (BOSS)“, funded within the “Technologies and Innovations for the Climate-Neutral City” (TIKS) Programme
of the Austrian Research Promotion Agency (FFG).
\clearpage
\subsection*{AI use statement}
We used generative AI tools as coding assistants when implementing and modifying the codebase, and
for language editing. They were not used to produce research results on their own or to replace
the authors' scientific judgment. The authors designed the methodology and experiments, and
reviewed and validated all AI-assisted code and text.

\subsection*{Ethics statement}
This work is primarily theoretical and methodological, focusing on the dynamics of denoising maps
and on memorization in diffusion models. All datasets and models are publicly available (CIFAR-10,
MNIST, SVHN, LAION-5B \citep{schuhmann2022laion} and Stable Diffusion
\citep{rombach2022ldm}, and the memorized prompts we study were identified in prior
work \citep{webster2023reproducible}. No human subjects are involved and no new data is collected.
Our detectors probe which training images a model has memorized. In principle, such insights
could be misused to recover training data from pretrained models. However, our experiments score
images that are already public, we release no new private or copyrighted content, and the
techniques should be applied responsibly, for instance to audit and reduce memorization. We hope
our findings contribute to a better understanding of how diffusion models generalize and
memorize.

\subsection*{Reproducibility statement}
All formal statements and proofs are reported in Appendix~\ref{app:proofs}. We implement all
experiments in PyTorch \citep{paszke2019pytorch} and the HuggingFace \texttt{diffusers}
library \citep{vonplaten2022diffusers}. We use only publicly released checkpoints, and the CIFAR-10
models we train follow the public recipe of \citet{nichol2021improved}. Models, datasets, training
recipes, noise schedules, estimator hyperparameters and the filtering of the memorization set are
listed in Appendix~\ref{app:details}. We will open-source our codebase for all experiments upon
acceptance.

\bibliographystyle{plainnat}
\bibliography{refs}

\newpage
\appendix
\numberwithin{theorem}{section}
\graystatements

\section{Denoising dynamics in Stable Diffusion}
\label{app:sdloop}

Figure~\ref{fig:sdloop} shows Definition~\ref{def:map} on Stable Diffusion~v1.4. The image is encoded
once into the latent space, where the map is iterated; the decoder is used only to display the iterates.

\begin{figure}[ht]
\centering
\includegraphics[width=\linewidth]{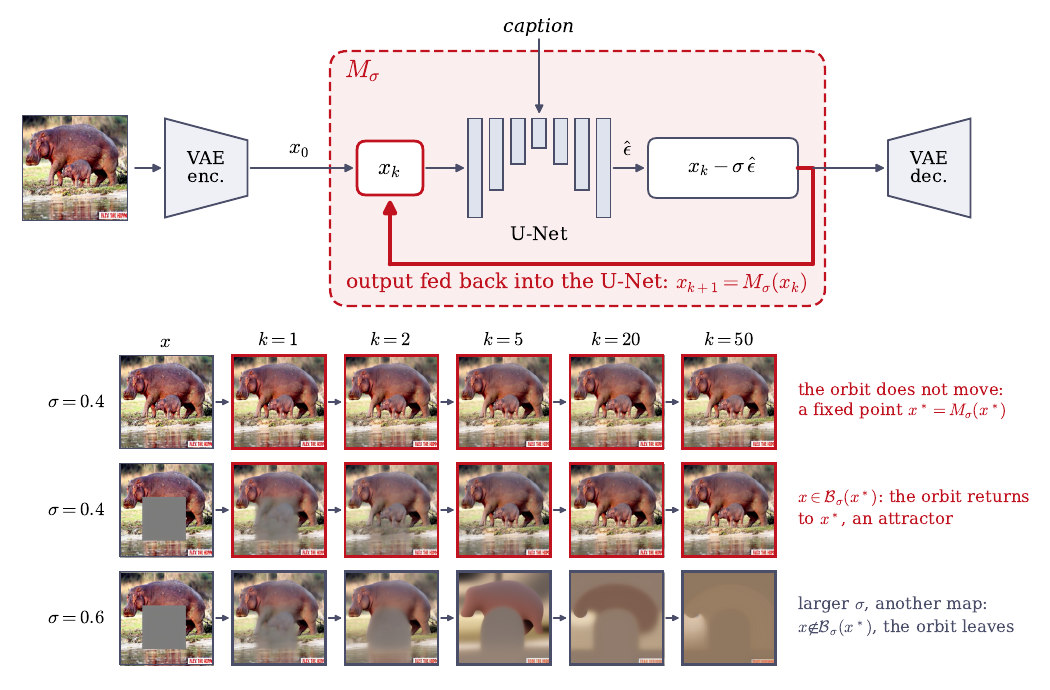}
\caption{\textbf{Definition~\ref{def:map} on Stable Diffusion.} \emph{Top:} the map $\M$. The image is
encoded once into the latent $x_0$; the U-Net is evaluated with the caption at noise level $\sigma$, and its
output $x_k-\sigma\hat\epsilon$ is fed back as the next input. \emph{Bottom:} each row is an orbit
$(\M^{k}(x))_{k\ge0}$, decoded for display, of a memorized image under its own caption ($w=1$).
\emph{First row:} started at the image, the orbit does not move; the image is a fixed point $x^\ast$.
\emph{Second row:} started with part of the image replaced by a grey square, the orbit restores it and
converges to the same $x^\ast$. The occluded image lies in the basin $\mathcal B_\sigma(x^\ast)$, so
$x^\ast$ is an attractor. \emph{Third row:} the same start at a larger $\sigma$, which defines a different
map in the family $\{\M\}_{\sigma>0}$. There the start is no longer in the basin, and the orbit leaves.}
\label{fig:sdloop}
\end{figure}

\section{Proofs}
\label{app:proofs}

\paragraph{Setting and notation.}
Throughout, $p$ is a probability measure supported in a compact convex set $S\subset\R^{d}$ of
diameter $R<\infty$. This covers image data in $[-1,1]^{d}$ and, taking $S$ to be its convex hull,
any finite training set. Write $\varphi_\sigma(u)=(2\pi\sigma^{2})^{-d/2}e^{-\|u\|^{2}/2\sigma^{2}}$.
The smoothed density $\ps(x)=\int\varphi_\sigma(x-x_0)\,p(dx_0)$ is $C^{\infty}$ and strictly
positive, and derivatives may be taken under the integral because $S$ is bounded. The posterior over
clean signals given $x$ is
$\pi_x(dx_0)=\varphi_\sigma(x-x_0)\,p(dx_0)/\ps(x)$. $\E[\,\cdot\mid x]$ and $\Cov[\,\cdot\mid x]$
denote its mean and covariance, so $\D(x)=\E[x_0\mid x]$. A fixed point $x^\ast$ of $\M$ is an
\emph{attractor} if every eigenvalue of $\nabla\M(x^\ast)$ has modulus less than $1$. A \emph{mode} is
a local maximum of $\ps$, and it is \emph{nondegenerate} if $\nabla^{2}\log\ps$ is negative definite
there. Two elementary facts are used throughout.

\begin{lemma}
\label{lem:facts}
For every $x\in\R^{d}$: \emph{(F1)} $\D(x)\in S$; \emph{(F2)}
$0\preceq\Cov[x_0\mid x]\preceq\frac{R^{2}}{4}I$.
\end{lemma}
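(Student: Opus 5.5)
For (F1), observe that the posterior $\pi_x$ is a probability measure: the density $\varphi_\sigma(x-x_0)$ is strictly positive and bounded, and $\ps(x)>0$ normalizes it. It is absolutely continuous with respect to $p$, so it is supported in the support of $p$, which lies in $S$. Since $S$ is compact and convex, the barycenter of any probability measure supported in $S$ lies in $S$. This holds because if $\E[x_0\mid x]\notin S$, the separating hyperplane theorem gives a vector $a$ and a scalar $b$ with $a^\top \E[x_0\mid x]>b\ge a^\top z$ for all $z\in S$. Taking expectations of $a^\top x_0\le b$ under $\pi_x$ then yields a contradiction. Hence $\D(x)=\E[x_0\mid x]\in S$. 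The mean exists because $S$ is bounded.

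For (F2), the lower bound $\Cov[x_0\mid x]\succeq0$ is immediate, since $v^\top\Cov[x_0\mid x]v=\mathrm{Var}(v^\top x_0\mid x)\ge0$. For the upper bound, fix a unit vector $v$ and consider the scalar random variable $Z=v^\top x_0$ under $\pi_x$. It takes values in the projection $v^\top S=[a,b]$, an interval whose length satisfies $b-a\le\sup_{z,z'\in S}\|z-z'\|=R$. I then apply Popoviciu's inequality: for any constant $c$, $\mathrm{Var}(Z)\le\E[(Z-c)^2]$. Choosing $c=(a+b)/2$ gives $|Z-c|\le (b-a)/2\le R/2$ almost surely, so $\mathrm{Var}(Z)\le R^2/4$. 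Since $v$ is an arbitrary unit vector, this gives $v^\top\Cov[x_0\mid x]v\le \frac{R^2}{4}\|v\|^2$ for all $v$, which is exactly $\Cov[x_0\mid x]\preceq\frac{R^2}{4}I$.

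There is no substantial obstacle. The only point needing care is the projection step in (F2): the bound must use the diameter of the one-dimensional projection, which is at most $R$, rather than a radius-based bound, which would give only $R^2$. Centering at the midpoint of the projected interval supplies the sharp constant $1/4$.
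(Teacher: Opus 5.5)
Your proof is correct and follows the paper's argument. (F1) holds because $\D(x)$ is the barycenter of a probability measure on the convex set $S$, and (F2) follows by projecting onto a unit vector and bounding the variance of a variable confined to an interval of length at most $R$ by $R^2/4$; you supply the details the paper leaves implicit, though the step $\mathrm{Var}(Z)\le\E[(Z-c)^2]$ is the tool that proves Popoviciu's bound rather than Popoviciu's inequality itself.
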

\begin{proof}
(F1): $\D(x)$ is the mean of a probability measure on the convex set $S$. (F2): for a unit vector
$v$, $v^{\top}x_0$ lies in an interval of length at most $R$, and a random variable confined to an
interval of length $\ell$ has variance at most $\ell^{2}/4$.
\end{proof}

%----------------------------------------------------------------------
\subsection{The exact map (Section~\ref{sec:theory})}
\label{app:exactmap}

\paragraph{Intuition.}
The exact denoiser averages the training data, weighting each clean signal by how plausible it is
as the source of $x$. Moving to that average moves $x$ uphill on the smoothed density: the step is
the score (Tweedie). How sharply the output reacts to the input is the spread of the plausible
sources, i.e.\ the posterior covariance. So a point is held in place when the posterior at it is
narrower than the noise, which is exactly a mode of $\ps$.

\begin{proposition}[Exact denoisers are mean shift; formal version of Proposition~\ref{prop:meanshift}]
\label{prop:meanshift-formal}
Let $\D(x)=\E[x_0\mid x]$ be the exact denoiser. Then $\M$ is Gaussian mean shift with bandwidth
$\sigma$ on the smoothed density $\ps$:
\begin{enumerate}\itemsep1pt
\item[(i)] $\M(x)=x+\sigma^{2}\nabla\log\ps(x)$, and for $p=\frac1N\sum_i\delta_{x_i}$ it is given
by~\eqref{eq:meanshift}.
\item[(ii)] The fixed points of $\M$ are the critical points of $\ps$. A fixed point $x^\ast$ is an
attractor iff $\Cov[x_0\mid x^\ast]\prec\sigma^{2}I$, i.e.\ iff it is a nondegenerate mode of $\ps$.
\item[(iii)] Every step increases $\ps$. The map has no cycles, and every orbit converges to a fixed
point when the fixed points are isolated.
\end{enumerate}
\end{proposition}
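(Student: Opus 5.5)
For (i), I will differentiate $\ps(x)=\int\varphi_\sigma(x-x_0)\,p(dx_0)$ under the integral, which is allowed since $S$ is bounded. Because $\nabla_x\varphi_\sigma(x-x_0)=-\sigma^{-2}(x-x_0)\varphi_\sigma(x-x_0)$, dividing by $\ps(x)$ gives
\[
\sigma^{2}\nabla\log\ps(x)=\E[x_0\mid x]-x .
\]
This is Tweedie's formula, and rearranged it reads $\M(x)=x+\sigma^{2}\nabla\log\ps(x)$. For an empirical $p$, the posterior $\pi_x$ puts weight proportional to $\exp(-\|x-x_i\|^2/2\sigma^2)$ on $x_i$, since the normalising constants cancel. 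This gives \eqref{eq:meanshift}.

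For (ii), fixed points satisfy $\M(x)=x$, which by (i) means $\nabla\log\ps=0$. Since $\ps>0$, these are exactly the critical points of $\ps$. For stability I will differentiate the posterior mean once more. The derivative of $\pi_x$ in $x$ brings down the factor $\sigma^{-2}(x_0-x)$, centred by the normalisation, so
\[
\nabla\M(x)=\sigma^{-2}\Cov[x_0\mid x].
\]
Equivalently, from (i), $\nabla\M=I+\sigma^{2}\nabla^{2}\log\ps$. The matrix $\nabla\M$ is symmetric and positive semidefinite, so its eigenvalues are real and nonnegative. Having modulus less than $1$ is therefore the same as $\Cov[x_0\mid x^\ast]\prec\sigma^{2}I$, which is the same as $\nabla^{2}\log\ps(x^\ast)\prec0$. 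At a critical point, negative definiteness of $\nabla^{2}\log\ps$ is the definition of a nondegenerate mode, and it implies a strict local maximum.

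For (iii), I will show that each step increases $\ps$ by the classical mean-shift argument of \citet{comaniciu2002}. Write $y=\M(x)$. The Gaussian kernel is $\exp(-s/2\sigma^2)$ in the variable $s=\|x-x_0\|^2$, and this profile is convex in $s$, so
\[
\varphi_\sigma(y-x_0)\ge\varphi_\sigma(x-x_0)\bigl(1-\tfrac{1}{2\sigma^2}(\|y-x_0\|^2-\|x-x_0\|^2)\bigr).
\]
Integrating against $p$ and dividing by $\ps(x)$ gives
\[
\ps(y)/\ps(x)\ge1-\tfrac{1}{2\sigma^2}\E\bigl[\|y-x_0\|^2-\|x-x_0\|^2\mid x\bigr].
\]
Since $y$ is the posterior mean, the expectation equals $-\|y-x\|^2$. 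Hence $\ps(y)\ge\ps(x)\bigl(1+\|y-x\|^2/2\sigma^2\bigr)$, with strict increase unless $x$ is a fixed point.

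This strict monotonicity rules out cycles: along a cycle of period at least $2$ the value of $\ps$ would strictly increase and yet return to where it started. The main obstacle is convergence of orbits. First I will use the bound just proved. By (F1) the iterates stay in $S$ after one step, and $\ps$ is bounded, so $\sum_k\|x_{k+1}-x_k\|^2\le 2\sigma^2\log(\sup\ps/\ps(x_0))<\infty$, and the steps tend to zero. The $\omega$-limit set of the orbit is then nonempty, compact (the orbit lies in the compact $S$) and connected (because the steps vanish). By continuity of $\M$ and of the drift $\M(x)-x$, every limit point $z$ satisfies $\M(z)=z$, so it is a fixed point. If the fixed points are isolated, a connected set consisting only of fixed points is a single point, so the orbit converges.
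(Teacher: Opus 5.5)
Your proof is correct and follows the paper's structure. The shared steps are Tweedie by differentiating under the integral, then $\nabla\M=\sigma^{-2}\Cov[x_0\mid x]$, which gives a symmetric positive semidefinite Jacobian and hence the spectral characterization of attractors. After that come an ascent inequality, summable steps, and connectedness of the limit set.

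The only real difference is the ascent step. Write $Z=(\|x-x_0\|^{2}-\|y-x_0\|^{2})/2\sigma^{2}$. You use the tangent-line bound for the convex profile $s\mapsto e^{-s/2\sigma^{2}}$, i.e.\ $e^{Z}\ge1+Z$ applied pointwise and then averaged. This yields $\ps(y)\ge\ps(x)\bigl(1+\|y-x\|^{2}/2\sigma^{2}\bigr)$. The paper instead writes $\ps(y)/\ps(x)=\E[e^{Z}\mid x]$ and applies Jensen, $\E[e^{Z}\mid x]\ge e^{\E[Z\mid x]}$. That gives the sharper $\log\ps(y)-\log\ps(x)\ge\|y-x\|^{2}/2\sigma^{2}$. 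Yours is the classical mean-shift argument and works for any convex nonincreasing kernel profile; the paper's exploits the Gaussian form and telescopes directly in $\log\ps$.

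This difference is also the source of a small slip. Your bound $\sum_k\|x_{k+1}-x_k\|^{2}\le2\sigma^{2}\log(\sup\ps/\ps(x_0))$ (with $x_0$ the start of the orbit) follows from the paper's Jensen form, not from your $1+t$ form. Your form only gives $\sum_k\log\bigl(1+\|x_{k+1}-x_k\|^{2}/2\sigma^{2}\bigr)\le\log(\sup\ps/\ps(x_0))$. That still forces the steps to zero. Alternatively, you can telescope additively, $\ps(x_{k+1})-\ps(x_k)\ge\ps(x_0)\|x_{k+1}-x_k\|^{2}/2\sigma^{2}$. Either way the rest of your argument is unaffected.
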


Proposition~\ref{prop:meanshift-formal} combines four statements, which we prove in turn: Tweedie's
identities (Proposition~\ref{prop:tweedie}), the fixed points and their stability
(Proposition~\ref{prop:stab}), the ascent property (Lemma~\ref{lem:ascent}) and its consequence for
orbits (Corollary~\ref{cor:orbits}).

\begin{proposition}[Tweedie]
\label{prop:tweedie}
$\D(x)=x+\sigma^{2}\nabla\log\ps(x)$ and $\nabla\D(x)=\sigma^{-2}\Cov[x_0\mid x]$.
\end{proposition}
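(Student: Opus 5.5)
The plan is to differentiate the smoothed density under the integral sign, as the setting permits because $S$ is bounded. Everything follows from the Gaussian identity $\nabla_x\varphi_\sigma(x-x_0)=-\sigma^{-2}(x-x_0)\,\varphi_\sigma(x-x_0)$.

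\emph{First identity.} Differentiating $\ps(x)=\int\varphi_\sigma(x-x_0)\,p(dx_0)$ gives
\[
\nabla\ps(x)=-\sigma^{-2}\int (x-x_0)\,\varphi_\sigma(x-x_0)\,p(dx_0).
\]
Dividing by $\ps(x)>0$ turns the right-hand side into an expectation under the posterior $\pi_x$:
\[
\nabla\log\ps(x)=-\sigma^{-2}\big(x-\E[x_0\mid x]\big).
\]
Rearranging yields $\D(x)=x+\sigma^{2}\nabla\log\ps(x)$.

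\emph{Jacobian.} I write
\[
\D(x)=\frac{N(x)}{\ps(x)},\qquad N(x)=\int x_0\,\varphi_\sigma(x-x_0)\,p(dx_0),
\]
and apply the quotient rule:
\[
\nabla\D=\frac{\nabla N}{\ps}-\frac{N\,(\nabla\ps)^{\top}}{\ps^{2}}.
\]
Differentiating under the integral gives
\[
\nabla N(x)=\sigma^{-2}\int x_0(x_0-x)^{\top}\varphi_\sigma\,p(dx_0),
\]
so that $\nabla N/\ps=\sigma^{-2}\big(\E[x_0x_0^{\top}\mid x]-\E[x_0\mid x]\,x^{\top}\big)$. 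Similarly $\nabla\ps/\ps=\sigma^{-2}\big(\E[x_0\mid x]-x\big)$, so the second term equals $\sigma^{-2}\,\E[x_0\mid x]\big(\E[x_0\mid x]-x\big)^{\top}$. Subtracting, the terms in $x^{\top}$ cancel and what remains is
\[
\nabla\D(x)=\sigma^{-2}\big(\E[x_0x_0^{\top}\mid x]-\E[x_0\mid x]\,\E[x_0\mid x]^{\top}\big)=\sigma^{-2}\Cov[x_0\mid x].
\]

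\emph{Cross-check.} Differentiating the first identity gives $\nabla\D=I+\sigma^{2}\nabla^{2}\log\ps$, and the standard formula $\nabla^{2}\log\ps=\sigma^{-4}\Cov[x_0\mid x]-\sigma^{-2}I$ reproduces the same result.

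\emph{Justification of the interchanges.} The only non-routine point is exchanging derivative and integral. Because $p$ is supported on the compact set $S$, on any bounded neighbourhood of $x$ the integrands $\varphi_\sigma$, $(x-x_0)\varphi_\sigma$ and $x_0(x_0-x)^{\top}\varphi_\sigma$ are uniformly bounded, together with their $x$-derivatives. Dominated convergence therefore applies. Positivity of $\ps$ makes both the posterior and the division well defined.
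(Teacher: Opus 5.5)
Your proposal is correct and follows the paper's route. The first identity comes from differentiating $\ps$ under the integral and dividing by $\ps$. For the Jacobian, your quotient rule on $N/\ps$ is the expanded form of the paper's step $\nabla_x\log[\varphi_\sigma(x-x_0)/\ps(x)]=\sigma^{-2}(x_0-\D(x))$, and your dominated-convergence justification is the one the paper takes from the bounded support of $p$.
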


\begin{proof}[Proof of Proposition~\ref{prop:tweedie}]
This is Tweedie's formula \citep{robbins1956empirical,efron2011tweedie}. Since
$\nabla_x\varphi_\sigma(x-x_0)=\sigma^{-2}(x_0-x)\varphi_\sigma(x-x_0)$, integrating against $p$ gives
$\nabla\ps(x)=\sigma^{-2}\ps(x)(\D(x)-x)$, which is the first identity. For the second, note that
$\nabla_x\log[\varphi_\sigma(x-x_0)/\ps(x)]=\sigma^{-2}(x_0-\D(x))$. Differentiating
$\D(x)=\int x_0\,\pi_x(dx_0)$ under the integral therefore gives
$\nabla\D(x)=\sigma^{-2}\E[x_0(x_0-\D(x))^{\top}\mid x]=\sigma^{-2}\Cov[x_0\mid x]$.
\end{proof}

Combining the two identities of Proposition~\ref{prop:tweedie},
\begin{equation}
\nabla^{2}\log\ps(x)=\sigma^{-2}\big(\nabla\D(x)-I\big)=\sigma^{-4}\Cov[x_0\mid x]-\sigma^{-2}I .
\label{eq:tweedie2}
\end{equation}
For a finite $p=\sum_i\mu_i\delta_{x_i}$, where $\mu_i\propto m_i$ and $m_i$ is the multiplicity of
$x_i$, the posterior is discrete and its mean is
\begin{equation}
\M(x)=\sum_iw_i(x)\,x_i,\qquad
w_i(x)=\frac{m_i\,e^{-\|x-x_i\|^{2}/2\sigma^{2}}}{\sum_jm_j\,e^{-\|x-x_j\|^{2}/2\sigma^{2}}} .
\label{eq:meanshift-m}
\end{equation}
This is~\eqref{eq:meanshift} when all $m_i=1$. The weighted form is the one used for $\sigemp$.

\begin{proposition}[Fixed points and stability]
\label{prop:stab}
$x^\ast$ is a fixed point of $\M$ iff $\nabla\ps(x^\ast)=0$. The Jacobian $\nabla\M$ is symmetric
positive semidefinite everywhere, so the linearised map neither rotates nor oscillates. A fixed point
is an attractor iff $\Cov[x_0\mid x^\ast]\prec\sigma^{2}I$, equivalently iff $x^\ast$ is a
nondegenerate mode of $\ps$. Saddles of $\ps$ carry the basin boundaries.
\end{proposition}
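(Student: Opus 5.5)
The plan is to read everything off Tweedie's identities (Proposition~\ref{prop:tweedie}). First, the fixed-point characterization is immediate from the first identity: $\M(x^\ast)=x^\ast$ iff $\sigma^{2}\nabla\log\ps(x^\ast)=0$ iff $\nabla\ps(x^\ast)=0$, since $\ps>0$ everywhere. Next, the second identity gives $\nabla\M(x)=\sigma^{-2}\Cov[x_0\mid x]$. A covariance matrix is symmetric positive semidefinite, so $\nabla\M$ is symmetric PSD at every $x$. Hence its eigenvalues are real and nonnegative, which is precisely the statement that the linearisation has no rotational (complex) or oscillating (negative) modes.

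For the stability claim I use the definition of attractor adopted in this appendix: all eigenvalues of $\nabla\M(x^\ast)$ have modulus $<1$. Because the eigenvalues are nonnegative reals, this is equivalent to $\lambda_{\max}(\sigma^{-2}\Cov[x_0\mid x^\ast])<1$, i.e.\ $\Cov[x_0\mid x^\ast]\prec\sigma^{2}I$. By \eqref{eq:tweedie2}, $\nabla^{2}\log\ps(x^\ast)=\sigma^{-4}\bigl(\Cov[x_0\mid x^\ast]-\sigma^{2}I\bigr)$. This matrix is negative definite exactly when $\Cov[x_0\mid x^\ast]\prec\sigma^{2}I$. At a critical point, negative definiteness of $\nabla^{2}\log\ps$ makes $x^\ast$ a strict local maximum of $\log\ps$, hence of $\ps$, and this is the definition of a nondegenerate mode. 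That closes the chain of equivalences. I might add a remark that this linearised notion implies the basin notion of Definition~\ref{def:map}, by the standard contraction argument for a $C^{1}$ map whose Jacobian has spectral radius $<1$.

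The main obstacle is the last sentence, "saddles of $\ps$ carry the basin boundaries", which is qualitative. I would make it precise as follows. At a nondegenerate critical point that is not a mode, $\nabla\M$ has an eigenvalue $>1$ in the direction where $\nabla^{2}\log\ps$ is positive. Combined with the $C^{\infty}$ smoothness of $\M$ and the ascent/convergence property to be proved later (Lemma~\ref{lem:ascent}, Corollary~\ref{cor:orbits}), every orbit converges to some critical point. The set of points converging to a non-attracting critical point is then the stable set of a hyperbolic-unstable fixed point, which by the stable manifold theorem is a manifold of positive codimension. So the complement of the union of the attractors' open basins consists of points flowing to saddles (or other non-modes), and these sets separate the basins. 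If the ascent lemma is not yet available at this point, I would state this part as a consequence to be completed after Corollary~\ref{cor:orbits}, keeping the linear-algebra part self-contained.
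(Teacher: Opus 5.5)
Your proposal is correct and follows the paper's proof essentially step for step: the fixed-point criterion from the first Tweedie identity, the symmetric PSD Jacobian from the second, the eigenvalues-in-$[0,1)$ condition turned into $\nabla^{2}\log\ps(x^\ast)\prec0$ via \eqref{eq:tweedie2}, and the saddle claim from an eigenvalue of $\nabla\M$ above $1$ together with the forward reference to Corollary~\ref{cor:orbits}.

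Your stable-manifold version of the last part is sharper than the paper's ``generically a saddle'' remark, but two points need care if you keep it. First, Corollary~\ref{cor:orbits} gives convergence of every orbit only when the fixed points are isolated. Second, the global stable set is a union of preimages of the local stable manifold rather than a single manifold; it still has positive codimension because $\M$ depends only on the projection onto the affine hull of the support, and there it is a local diffeomorphism, since $\sigma^{-2}\Cov[x_0\mid x]$ is positive definite on that hull's directions.
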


\begin{proof}[Proof of Proposition~\ref{prop:stab}]
\emph{Fixed points.} By Proposition~\ref{prop:tweedie}, $\M(x)-x=\sigma^{2}\nabla\log\ps(x)$. Since
$\ps>0$, $\M(x^\ast)=x^\ast$ iff $\nabla\ps(x^\ast)=0$.

\emph{Jacobian.} $\nabla\M=\sigma^{-2}\Cov[x_0\mid x]$ is symmetric positive semidefinite. Its
eigenvalues are therefore real, so the linearised map has no rotation, and non-negative, so it has no
sign-reversing (oscillating) direction.

\emph{Stability.} Because the eigenvalues are non-negative, they all lie in $[0,1)$ iff
$\Cov[x_0\mid x^\ast]\prec\sigma^{2}I$. By \eqref{eq:tweedie2} this holds iff
$\nabla^{2}\log\ps(x^\ast)\prec0$. At a critical point, that is a nondegenerate local maximum.

\emph{Saddles.} If $\nabla^{2}\log\ps(x^\ast)$ has a positive eigenvalue, \eqref{eq:tweedie2} gives
$\nabla\M(x^\ast)$ an eigenvalue greater than $1$, so $x^\ast$ repels along that direction. A saddle
attracts along some directions and repels along others. By Corollary~\ref{cor:orbits} below, a point
in no attractor's basin converges to such a non-maximal critical point. Generically this is a saddle,
and the sets converging to saddles separate the basins.
\end{proof}

The linearisation is local. The global behaviour comes from the classical ascent property of Gaussian
mean shift \citep{comaniciu2002}: each step climbs $\ps$ by at least its own squared
length.

\begin{lemma}[Ascent]
\label{lem:ascent}
For every $x$, $\ \log\ps(\M(x))-\log\ps(x)\ge\|\M(x)-x\|^{2}/2\sigma^{2}$.
\end{lemma}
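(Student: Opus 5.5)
The plan is to write $\log\ps(\M(x))-\log\ps(x)$ as the log of a ratio of integrals against the posterior $\pi_x$, and then apply Jensen's inequality. Fix $x$ and set $y=\M(x)=\D(x)=\E[x_0\mid x]$. For each clean signal $x_0$ the Gaussian kernel gives
\[
\frac{\varphi_\sigma(y-x_0)}{\varphi_\sigma(x-x_0)}=\exp\!\Big(\frac{\|x-x_0\|^{2}-\|y-x_0\|^{2}}{2\sigma^{2}}\Big),
\]
and therefore
\[
\frac{\ps(y)}{\ps(x)}=\int\frac{\varphi_\sigma(y-x_0)}{\varphi_\sigma(x-x_0)}\,\pi_x(dx_0)=\E\Big[\exp\Big(\frac{\|x-x_0\|^{2}-\|y-x_0\|^{2}}{2\sigma^{2}}\Big)\,\Big|\,x\Big].
\]
This is legitimate because $\ps>0$ and $\pi_x$ has density $\varphi_\sigma(x-x_0)/\ps(x)$ with respect to $p$.

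By Jensen (convexity of $\exp$, i.e. concavity of $\log$),
\[
\log\ps(y)-\log\ps(x)\ge\frac{1}{2\sigma^{2}}\,\E\big[\|x-x_0\|^{2}-\|y-x_0\|^{2}\mid x\big].
\]
Expanding, $\|x-x_0\|^{2}-\|y-x_0\|^{2}=\|x\|^{2}-\|y\|^{2}-2\langle x-y,x_0\rangle$. Its posterior mean is obtained by substituting $\E[x_0\mid x]=y$:
\[
\|x\|^{2}-\|y\|^{2}-2\langle x-y,y\rangle=\|x\|^{2}-2\langle x,y\rangle+\|y\|^{2}=\|x-y\|^{2}.
\]
This gives exactly $\|\M(x)-x\|^{2}/2\sigma^{2}$.

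The main point to verify is the choice of the reference measure: the expectation must be taken under $\pi_x$, the posterior at the starting point, so that its mean equals the new point $y$ and the cross term collapses. The remaining checks are routine. Integrability holds because $p$ is supported on the compact set $S$, so every integrand is bounded. Jensen applies in the form $\log\E[e^{Z}]\ge\E[Z]$ for the bounded random variable $Z$. In the finite case the integrals become the weighted sums of \eqref{eq:meanshift-m} with weights $w_i(x)$, and the argument is unchanged.
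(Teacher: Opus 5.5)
Your proof is correct and follows essentially the same route as the paper's. Both write $\ps(\M(x))/\ps(x)$ as a $\pi_x$-posterior expectation of $\exp\{(\|x-x_0\|^{2}-\|\M(x)-x_0\|^{2})/2\sigma^{2}\}$, apply Jensen, and use $\E[x_0\mid x]=\M(x)$ to reduce the mean exponent to $\|x-\M(x)\|^{2}/2\sigma^{2}$; your integrability remarks only make explicit what the paper leaves implicit.
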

\begin{proof}
For any $y$,
$\ps(y)/\ps(x)=\E\big[\exp\{(\|x-x_0\|^{2}-\|y-x_0\|^{2})/2\sigma^{2}\}\mid x\big]$. Jensen's
inequality bounds its logarithm below by the posterior mean of the exponent. That mean equals
$\big(\|x\|^{2}-\|y\|^{2}-2(x-y)^{\top}\D(x)\big)/2\sigma^{2}$, which at $y=\D(x)$ is
$\|x-\D(x)\|^{2}/2\sigma^{2}$.
\end{proof}

\begin{corollary}[Orbits of the exact map]
\label{cor:orbits}
Let $x_{k+1}=\M(x_k)$. Then (i) $\M$ has no periodic orbit other than fixed points; (ii) every limit
point of $(x_k)$ is a fixed point, and if the fixed points are isolated, $(x_k)$ converges; (iii)
every attractor is a local maximum of $\ps$.
\end{corollary}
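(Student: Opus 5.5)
The plan is to make the orbit $(\log\ps(x_k))_{k\ge0}$ serve as a Lyapunov function, using Lemma~\ref{lem:ascent}, and then to read off (i)--(iii) in turn.

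\emph{Step 1: bounded orbit and bounded potential.} By Lemma~\ref{lem:facts} (F1), every iterate $x_k$ with $k\ge1$ lies in the compact set $S$. The function $\ps$ is continuous and strictly positive, so $\log\ps$ is bounded on $S$. Lemma~\ref{lem:ascent} says that $\log\ps(x_k)$ is nondecreasing, so it converges. Summing the ascent inequality over $k$ then gives
\[
\sum_{k\ge0}\|x_{k+1}-x_k\|^{2}\le 2\sigma^{2}\big(\sup_S\log\ps-\log\ps(x_0)\big)<\infty ,
\]
and therefore $\|x_{k+1}-x_k\|\to0$.

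\emph{Step 2: part (i).} Suppose $x_0$ lies on a periodic orbit of period $n$. Then $\log\ps(x_n)=\log\ps(x_0)$. The ascent inequality forces every step to have zero length, $\|x_{k+1}-x_k\|=0$ for all $k<n$, so $x_0$ is a fixed point.

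\emph{Step 3: part (ii).} Let $x^\ast$ be a limit point of the orbit, with $x_{k_j}\to x^\ast$. Since $\M$ is continuous (it is smooth by Proposition~\ref{prop:tweedie}), we get $x_{k_j+1}=\M(x_{k_j})\to\M(x^\ast)$. Step 1 gives $\|x_{k_j+1}-x_{k_j}\|\to0$, so $\M(x^\ast)=x^\ast$.

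For convergence, use that the set $L$ of limit points of a bounded sequence with $\|x_{k+1}-x_k\|\to0$ is compact and connected (the standard Ostrowski argument). Step 3 shows $L$ consists of fixed points, which are assumed isolated. Compactness then makes $L$ finite, and connectedness makes it a single point. A bounded sequence with a single limit point converges.

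This connectedness step is the only delicate point, and I would state it explicitly. The alternative is to separate finitely many isolated fixed points by disjoint neighbourhoods and argue that, because the steps shrink to zero, the orbit cannot jump between those neighbourhoods infinitely often without also accumulating somewhere outside all of them. That would create a limit point that is not a fixed point, contradicting Step 3.

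\emph{Step 4: part (iii).} Let $x^\ast$ be an attractor. It has a neighbourhood $U$ whose orbits converge to $x^\ast$; this holds under either the basin definition or the spectral one, since spectral radius below $1$ gives local contraction in a suitable norm. For any $y\in U$, monotonicity of $\ps$ along the orbit and continuity of $\ps$ give $\ps(y)\le\lim_k\ps(\M^k(y))=\ps(x^\ast)$. Hence $x^\ast$ is a local maximum of $\ps$.
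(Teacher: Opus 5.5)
Your proof is correct and follows the paper's argument essentially step for step. Both use the ascent lemma as a Lyapunov function and sum the squared steps, then use continuity of $\M$ to show limit points are fixed points, connectedness of the limit set (Ostrowski) for convergence, and monotonicity of $\ps$ along orbits for (iii). Your check that the appendix's spectral definition of attractor also gives a neighbourhood of convergent orbits covers a point the paper's proof of (iii) leaves implicit. The finiteness step via compactness is harmless but unnecessary, since a nonempty connected set in which every point is isolated is already a singleton.
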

\begin{proof}
(i) By Lemma~\ref{lem:ascent}, $\ps$ strictly increases along any step that moves, so an orbit
cannot return to its starting point. (ii) Summing Lemma~\ref{lem:ascent} over $k$ and using
$\ps\le(2\pi\sigma^{2})^{-d/2}$ gives $\sum_k\|x_{k+1}-x_k\|^{2}<\infty$, so the steps tend to $0$.
By (F1) the orbit is bounded. If $x_{k_j}\to z$, then also $x_{k_j+1}\to z$, and continuity of $\M$
gives $\M(z)=z$. The limit set of a bounded sequence whose steps tend to $0$ is connected, and a
connected set of isolated points is a single point. (iii) If every orbit started in a neighbourhood
$U$ of $x^\ast$ converges to $x^\ast$, then $\ps(x)\le\ps(x^\ast)$ for all $x\in U$, because $\ps$
increases along these orbits.
\end{proof}

\begin{proof}[Proof of Proposition~\ref{prop:meanshift-formal}]
(i) is the first identity of Proposition~\ref{prop:tweedie}, and~\eqref{eq:meanshift} is
\eqref{eq:meanshift-m} with all $m_i=1$. (ii) is Proposition~\ref{prop:stab}. (iii) is
Lemma~\ref{lem:ascent} together with Corollary~\ref{cor:orbits}(i)--(ii).
\end{proof}

\paragraph{Both ends of the scale axis.}
\label{app:ends}
At large $\sigma$ every posterior is nearly the whole data distribution, so the map barely depends
on $x$ and collapses everything to one point. At small $\sigma$ the posterior near a datum is almost
entirely that datum, so the map is nearly constant on a small ball around it, which then holds one
attractor. Both statements are contraction arguments.

\begin{proposition}
\label{prop:ends}
(a) If $\sigma>R/2$, $\M$ has exactly one fixed point, and every orbit converges to it. As
$\sigma\to\infty$ this fixed point tends to $\E_p[x_0]$.
(b) Let $p=\sum _i\mu_i\delta_{x_i}$ with pairwise distances at least $\delta$, let
$B_i=\{x:\|x-x_i\|\le\delta/4\}$, and let
$\epsilon_i=\frac{1-\mu_i}{\mu_i}e^{-\delta^{2}/4\sigma^{2}}$. If $\epsilon_iR\le\delta/4$ and
$\epsilon_iR^{2}<\sigma^{2}$, which holds for all small enough $\sigma$, then $B_i$ contains exactly
one fixed point. It is an attractor, and it lies within $\epsilon_iR$ of $x_i$.
\end{proposition}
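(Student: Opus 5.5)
Both parts are contraction arguments built on Proposition~\ref{prop:tweedie}, which gives $\nabla\M(x)=\sigma^{-2}\Cov[x_0\mid x]$, together with the facts (F1)--(F2) of Lemma~\ref{lem:facts}.

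\emph{Part (a).} By (F2), $\|\nabla\M(x)\|_{\mathrm{op}}\le R^{2}/(4\sigma^{2})$ for every $x$. When $\sigma>R/2$ this bound is some $q<1$. Since $\R^{d}$ is convex, integrating the Jacobian along segments shows $\M$ is a $q$-contraction of $\R^{d}$. Banach's fixed point theorem then yields a unique fixed point, to which every orbit converges geometrically.

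For the limit $\sigma\to\infty$, write $x^\ast_\sigma\in S$ for the fixed point; it lies in $S$ by (F1). Uniformly for $x\in S$ and $x_0\in S$, the weights satisfy $e^{-\|x-x_0\|^{2}/2\sigma^{2}}\in[e^{-R^{2}/2\sigma^{2}},1]$. So the posterior $\pi_x$ has density with respect to $p$ within $e^{\pm R^{2}/2\sigma^{2}}$ of $1$. Hence $\|\D(x)-\E_p[x_0]\|\le R\,(e^{R^{2}/\sigma^{2}}-1)\to0$ uniformly on $S$. Applying this at $x=x^\ast_\sigma$ gives the claim.

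\emph{Part (b).} I would show that $\M$ maps $B_i$ into itself and is a contraction there, then invoke Banach again.

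\begin{itemize}
\item \textbf{Weight bound.} For $x\in B_i$ and $j\ne i$, $\|x-x_j\|\ge\|x_i-x_j\|-\delta/4\ge\tfrac34\|x_i-x_j\|$. Hence
\[
\|x-x_j\|^{2}-\|x-x_i\|^{2}\ge\tfrac{9}{16}\delta^{2}-\tfrac1{16}\delta^{2}=\tfrac12\delta^{2}.
\]
This gives
\[
\frac{\sum_{j\ne i}\mu_j e^{-\|x-x_j\|^{2}/2\sigma^{2}}}{\mu_i e^{-\|x-x_i\|^{2}/2\sigma^{2}}}\le\frac{1-\mu_i}{\mu_i}e^{-\delta^{2}/4\sigma^{2}}=\epsilon_i,
\]
so the posterior mass off $x_i$ satisfies $1-w_i(x)\le\epsilon_i$.

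\item \textbf{Self-map.} We have $\M(x)-x_i=\sum_{j\ne i}w_j(x)(x_j-x_i)$, so $\|\M(x)-x_i\|\le\epsilon_iR\le\delta/4$. Thus $\M(B_i)\subset B_i$, and any fixed point in $B_i$ lies within $\epsilon_iR$ of $x_i$.

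\item \textbf{Contraction.} For a random variable equal to $x_i$ with probability $w_i\ge1-\epsilon_i$ and lying in $S$ otherwise, I would bound the variance by the second moment about $x_i$:
\[
\Cov[x_0\mid x]\preceq\E[(x_0-x_i)(x_0-x_i)^{\top}\mid x]\preceq(1-w_i)R^{2}I\preceq\epsilon_iR^{2}I.
\]
So $\|\nabla\M\|\le\epsilon_iR^{2}/\sigma^{2}<1$ on the convex set $B_i$, and Banach gives exactly one fixed point in $B_i$.

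\item \textbf{Attractor.} At that fixed point $\Cov\prec\sigma^{2}I$, so it is an attractor by Proposition~\ref{prop:stab}.

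\item \textbf{Small $\sigma$.} Both hypotheses hold for small $\sigma$, since $\epsilon_i$ decays like $e^{-\delta^{2}/4\sigma^{2}}$, which beats $\sigma^{-2}$.
\end{itemize}

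The main care point is the distance inequality in the weight bound: the constant must come out as exactly $\delta^{2}/4$ in the exponent so that it matches the stated $\epsilon_i$. The cruder bound $(3\delta/4)^{2}-(\delta/4)^{2}=\delta^{2}/2$ suffices, and after division by $2\sigma^{2}$ it gives precisely $e^{-\delta^{2}/4\sigma^{2}}$.
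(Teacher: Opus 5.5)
Your proposal is correct and follows the paper's proof essentially step for step: the bound (F2) plus Banach for (a), and for (b) the same weight bound $\sum_{j\ne i}w_j\le\epsilon_i$, the self-map estimate $\|\M(x)-x_i\|\le\epsilon_iR$, and the Jacobian bound obtained from the second moment about $x_i$. The differences are only cosmetic. You apply Banach on all of $\R^{d}$ rather than on $S$, and you make the uniform convergence $\pi_x\to p$ on $S$ quantitative. The factor $R$ in your bound needs $x_0$ to be centred at a point of $S$, which works because the posterior and prior both have total mass one; for the limit itself any bounded constant would do.
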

\begin{proof}
(a) By (F2), $\|\nabla\M(x)\|\le R^{2}/4\sigma^{2}<1$ everywhere, and by (F1) $\M$ maps $S$ into
itself. By Banach's fixed-point theorem, $\M$ has a unique fixed point $x^\ast_\sigma$ in $S$. Every
orbit enters $S$ after one step, so every orbit converges to it, and by (F1) there are no fixed
points outside $S$. As $\sigma\to\infty$, $\pi_x\to p$ uniformly in $x\in S$, so
$x^\ast_\sigma=\E[x_0\mid x^\ast_\sigma]\to\E_p[x_0]$.

(b) Let $x\in B_i$ and $j\ne i$. Then $\|x-x_j\|\ge3\delta/4$ and $\|x-x_i\|\le\delta/4$, so
$\|x-x_j\|^{2}-\|x-x_i\|^{2}\ge\delta^{2}/2$ and
\begin{equation}
\frac{w_j(x)}{w_i(x)}\le\frac{\mu_j}{\mu_i}\,e^{-\delta^{2}/4\sigma^{2}},
\qquad\text{hence}\qquad
\sum_{j\ne i}w_j(x)\le\frac{1-\mu_i}{\mu_i}\,e^{-\delta^{2}/4\sigma^{2}}=\epsilon_i .
\end{equation}
Two bounds follow, using $\Cov[x_0\mid x]\preceq\E[(x_0-x_i)(x_0-x_i)^{\top}\mid x]$ for the second:
\begin{align}
\|\M(x)-x_i\|&=\Big\|\sum_{j\ne i}w_j(x)(x_j-x_i)\Big\|\le\epsilon_iR\le\delta/4,\\
\|\nabla\M(x)\|&\le\sigma^{-2}\Big\|\sum_{j\ne i}w_j(x)(x_j-x_i)(x_j-x_i)^{\top}\Big\|\le\frac{\epsilon_iR^{2}}{\sigma^{2}}<1 .
\end{align}
The first gives $\M(B_i)\subset B_i$, so $\M$ is a contraction of the convex set $B_i$ into itself,
and Banach's theorem gives a unique fixed point there. Its Jacobian has norm below $1$, so it is an
attractor, and by the first bound it lies within $\epsilon_iR$ of $x_i$.
\end{proof}

%----------------------------------------------------------------------
\subsection{The critical scale and mass monotonicity (Sections~\ref{sec:sigc}--\ref{sec:sigc:raise})}
\label{app:critscale}

\begin{proposition}[Critical scale at $K=1$]
\label{prop:k1}
For the exact denoiser, the critical scale with a single iteration is
\begin{equation}
\sigc^{(1)}(x)=\sup\big\{\sigma>0:\ \sigma^{2}\,\|\nabla\log\ps(x)\|\le r\|x\|\big\}.
\label{eq:k1}
\end{equation}
\end{proposition}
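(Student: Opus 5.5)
The plan is to unfold Definition~\ref{def:sigc} at $K=1$ and then rewrite the single displacement using Tweedie's identity. The retention condition asks that $\|\M^{k}(x)-x\|\le r\|x\|$ for every $k\le K$. With $K=1$ only $k=0$ and $k=1$ occur. The case $k=0$ is trivial, since $\M^{0}(x)=x$ and $0\le r\|x\|$. So retention at scale $\sigma$ is equivalent to the single inequality $\|\M(x)-x\|\le r\|x\|$.

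Next I will apply the first identity of Proposition~\ref{prop:tweedie}, equivalently Proposition~\ref{prop:meanshift-formal}(i): for the exact denoiser, $\M(x)-x=\sigma^{2}\nabla\log\ps(x)$ for every $x$ and every $\sigma>0$. Taking norms gives $\|\M(x)-x\|=\sigma^{2}\|\nabla\log\ps(x)\|$. Hence the set of scales at which $x$ is retained with $K=1$ is exactly
\[
\big\{\sigma>0:\ \sigma^{2}\|\nabla\log\ps(x)\|\le r\|x\|\big\}.
\]
Taking the supremum of this set yields \eqref{eq:k1}.

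The only place needing care is that $\ps$ depends on $\sigma$, so the set need not be an interval. The statement, however, is phrased with the same supremum as Definition~\ref{def:sigc}, so no monotonicity argument is required: the equality is an equality of sets before taking the supremum. I would remark on two conventions. First, when $x=0$ the condition reduces to $x$ being a critical point of $\ps$. Second, the supremum may be $+\infty$, or the set may be empty, with the usual conventions in either case. Beyond these points I expect no real obstacle; the proof is a direct substitution.
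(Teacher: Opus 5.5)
Your proposal is correct and follows the paper's proof: at $K=1$, retention reduces to $\|\M(x)-x\|\le r\|x\|$, Tweedie's identity turns this into $\sigma^{2}\|\nabla\log\ps(x)\|\le r\|x\|$, and the supremum over the retained scales gives the formula. Your observations that the $k=0$ case is trivial and that no interval or monotonicity property is needed are accurate; they add care but do not change the argument.
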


\begin{proof}[Proof of Proposition~\ref{prop:k1}]
At $K=1$ the probe is retained at $\sigma$ iff $\|\M(x)-x\|\le r\|x\|$, and by
Proposition~\ref{prop:tweedie} $\|\M(x)-x\|=\sigma^{2}\|\nabla\log\ps(x)\|$. Take the supremum over
the retained scales.
\end{proof}

Two direct consequences of Definition~\ref{def:sigc} are worth recording. $\sigc$ is nonincreasing in
$K$, since retention for $K+1$ steps implies retention for $K$. The retention set need not be an
interval, which is why we verify it empirically (Appendix~\ref{app:remarks}).

\begin{theorem}[Mass monotonicity; formal version of Theorem~\ref{thm:mass}]
\label{thm:mass-formal}
Let $p_\theta=(1-\theta)q+\theta\delta_{x^\ast}$ put mass $\theta$ on $x^\ast$, with $q$ any
distribution of bounded support. Then $\|\D(x^\ast)-x^\ast\|$ is strictly decreasing in $\theta$, and
$\sigc^{(1)}(x^\ast)$ is nondecreasing in $\theta$.
\end{theorem}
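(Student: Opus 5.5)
We will compute the exact denoiser at the atom directly and observe that mass on $x^\ast$ enters only as a scalar weight. Write $\varphi=\varphi_\sigma$ and set $A:=\varphi(0)=(2\pi\sigma^{2})^{-d/2}$, $B:=\int\varphi(x^\ast-x_0)\,q(dx_0)>0$ and $V:=\int(x_0-x^\ast)\varphi(x^\ast-x_0)\,q(dx_0)\in\R^{d}$. All three quantities are finite because $q$ has bounded support, and none depends on $\theta$. The posterior mean formula at $x=x^\ast$ then gives
\[
\D(x^\ast)-x^\ast=\frac{(1-\theta)V}{\theta A+(1-\theta)B},
\]
because the atom at $x^\ast$ contributes weight $\theta A$ and zero displacement. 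This is the paper's intuition made explicit: only the other data pull the map away from the example, and the mass on the example dilutes that pull.

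\textbf{Step 1 (monotonicity of the displacement).} The norm is $g(\theta)\|V\|$ with $g(\theta)=(1-\theta)/(\theta A+(1-\theta)B)$. Dividing numerator and denominator by $1-\theta$ gives $g(\theta)=1/\big(B+A\,\theta/(1-\theta)\big)$ on $[0,1)$. Since $\theta/(1-\theta)$ is strictly increasing and $A>0$, the map $g$ is strictly decreasing. Hence $\|\D(x^\ast)-x^\ast\|$ is strictly decreasing whenever $V\neq0$.

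\textbf{Step 2 (the degenerate case $V=0$).} This is the main subtlety: if $x^\ast$ is the $\varphi$-weighted barycentre of $q$ at scale $\sigma$, the displacement is identically $0$ and strictness fails. I would handle it by stating strictness for $V\ne0$, equivalently whenever $x^\ast$ is not already a fixed point of the map for $q$ alone. Because the quantities are scale-dependent, I would phrase the claim pointwise in $\sigma$: for every fixed $\sigma$ the displacement is nonincreasing in $\theta$, and strictly decreasing unless $V(\sigma)=0$. In the degenerate case it is constant at $0$, which is harmless for the second claim.

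\textbf{Step 3 (the critical scale).} By Proposition~\ref{prop:k1}, $x^\ast$ is retained at scale $\sigma$ with $K=1$ iff $\|\D(x^\ast)-x^\ast\|\le r\|x^\ast\|$. By Steps 1--2 the left side at each fixed $\sigma$ is nonincreasing in $\theta$. So if $\theta<\theta'$, every $\sigma$ retained under $p_\theta$ is also retained under $p_{\theta'}$, and the retention sets are nested. Taking suprema gives $\sigc^{(1)}(x^\ast;\theta)\le\sigc^{(1)}(x^\ast;\theta')$. This step needs no interval structure of the retention set, only set inclusion, which avoids the issue noted after Proposition~\ref{prop:k1}.
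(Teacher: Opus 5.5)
Your proposal is correct and is essentially the paper's proof: the paper writes the posterior at $x^\ast$ as the mixture $\lambda_\theta\delta_{x^\ast}+(1-\lambda_\theta)\pi^{q}_{x^\ast}$ with $\lambda_\theta=\theta A/(\theta A+(1-\theta)B)$. This gives $D_\sigma(x^\ast)-x^\ast=(1-\lambda_\theta)\big(D^{q}(x^\ast)-x^\ast\big)$, which is your formula since $V=B\,\big(D^{q}(x^\ast)-x^\ast\big)$, and the paper likewise restricts strictness to scales where $D^{q}(x^\ast)\neq x^\ast$ (your $V\neq0$) and concludes by the same nesting of the $K=1$ retention sets.
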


Duplicating a datum $m$ times among $N$ others corresponds to $\theta=m/(N+m)$.

\paragraph{Intuition for Theorem~\ref{thm:mass-formal}.}
At the probe, the posterior splits its vote between the probe's own atom and the rest of the data.
Only the rest pulls the probe away, and it pulls with the same force whatever the probe's mass. More
mass on the probe only shrinks the share of the vote the rest receives. The proof makes this exact.

\begin{proof}[Proof of Theorem~\ref{thm:mass-formal}]
Let $D^{q}$ and $\pi^{q}_x$ be the exact denoiser and the posterior of $q$, and set
$c=\varphi_\sigma(0)$ and $B=\int\varphi_\sigma(x^\ast-x_0)\,q(dx_0)>0$. At $x^\ast$ the posterior of
$p_\theta$ is a mixture of the atom and the posterior of $q$:
\begin{equation}
\pi_{x^\ast}=\lambda_\theta\,\delta_{x^\ast}+(1-\lambda_\theta)\,\pi^{q}_{x^\ast},
\qquad
\lambda_\theta=\frac{\theta c}{\theta c+(1-\theta)B}.
\end{equation}
Taking means,
\begin{equation}
\D^{\theta}(x^\ast)-x^\ast=(1-\lambda_\theta)\big(D^{q}(x^\ast)-x^\ast\big).
\label{eq:massdecomp}
\end{equation}
The vector $D^{q}(x^\ast)-x^\ast$ does not depend on $\theta$, and $1-\lambda_\theta$ is strictly
decreasing in $\theta$. Hence the displacement
\begin{equation}
f_\theta(\sigma):=\|\D^{\theta}(x^\ast)-x^\ast\|=(1-\lambda_\theta)\,\|D^{q}(x^\ast)-x^\ast\|
\end{equation}
is strictly decreasing in $\theta$ at every $\sigma$ with $D^{q}(x^\ast)\ne x^\ast$, and zero at every
other $\sigma$. For $\theta<\theta'$ this gives $f_{\theta'}\le f_\theta$, so the $K=1$ retention sets
are nested,
\begin{equation}
\{\sigma:\ f_\theta(\sigma)\le r\|x^\ast\|\}\ \subseteq\ \{\sigma:\ f_{\theta'}(\sigma)\le r\|x^\ast\|\},
\end{equation}
and their suprema, $\sigc^{(1)}(x^\ast)$ by Proposition~\ref{prop:k1}, are nondecreasing in $\theta$.
\end{proof}

\begin{lemma}[Strict mass monotonicity]
\label{lem:strict}
In Theorem~\ref{thm:mass-formal}, suppose $x^\ast\ne0$ and $\bar\sigma=\sigc^{(1)}(x^\ast)$ under $\theta$
is finite. Then $\sigc^{(1)}(x^\ast)$ under any $\theta'>\theta$ is strictly larger than $\bar\sigma$.
\end{lemma}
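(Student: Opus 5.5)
The plan is to work with the one-step displacement from the proof of Theorem~\ref{thm:mass-formal},
\[
f_\theta(\sigma)=(1-\lambda_\theta(\sigma))\,g(\sigma),\qquad g(\sigma):=\|D^{q}(x^\ast)-x^\ast\|,
\]
and to show that at $\bar\sigma$ the retention inequality is tight for $\theta$ and therefore strict for $\theta'$. Continuity in $\sigma$ then pushes retention under $\theta'$ slightly past $\bar\sigma$. By Proposition~\ref{prop:k1}, retention at $K=1$ means $f(\sigma)\le r\|x^\ast\|$, and since $x^\ast\neq0$ the threshold $\tau:=r\|x^\ast\|$ is strictly positive.

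\emph{Step 1 (continuity in $\sigma$).} I would first check that $\sigma\mapsto f_\theta(\sigma)$ is continuous on $(0,\infty)$ for every fixed $\theta$. Dividing numerator and denominator by $c=\varphi_\sigma(0)$ gives
\[
\lambda_\theta(\sigma)=\frac{\theta}{\theta+(1-\theta)\,\beta(\sigma)},\qquad \beta(\sigma)=\int e^{-\|x^\ast-x_0\|^{2}/2\sigma^{2}}\,q(dx_0).
\]
Here $\beta(\sigma)\in(0,1]$ is continuous by dominated convergence, using the bounded support of $q$. The same argument applied to $D^{q}(x^\ast)=\int x_0e^{-\|x^\ast-x_0\|^2/2\sigma^2}q(dx_0)/\int e^{-\|x^\ast-x_0\|^2/2\sigma^2}q(dx_0)$ gives continuity of $g$. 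The degenerate case $\theta'=1$ is trivial: then $f_{\theta'}\equiv0$, every $\sigma$ is retained, and the critical scale is infinite. So I assume $\theta<\theta'<1$. In this range $\lambda$ is strictly increasing in $\theta$, hence $1-\lambda_{\theta'}(\sigma)<1-\lambda_\theta(\sigma)$ at every $\sigma$.

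\emph{Step 2 (tightness at $\bar\sigma$).} Since $\bar\sigma$ is a finite supremum, there are retained scales $\sigma_n\uparrow\bar\sigma$ (or $\bar\sigma$ itself is retained). By continuity, $f_\theta(\bar\sigma)\le\tau$. Every $\sigma>\bar\sigma$ is not retained, so $f_\theta(\sigma)>\tau$ there, and letting $\sigma\downarrow\bar\sigma$ gives $f_\theta(\bar\sigma)\ge\tau$. Hence $f_\theta(\bar\sigma)=\tau>0$, which forces $g(\bar\sigma)>0$. This is the step I expect to need the most care. It is exactly where $x^\ast\ne0$ enters: without it the threshold could be $0$, $g(\bar\sigma)$ could vanish, and the strict inequality of Theorem~\ref{thm:mass-formal} would be unavailable.

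\emph{Step 3 (strict gain).} With $g(\bar\sigma)>0$, the decomposition \eqref{eq:massdecomp} gives
\[
f_{\theta'}(\bar\sigma)=\frac{1-\lambda_{\theta'}(\bar\sigma)}{1-\lambda_\theta(\bar\sigma)}\,\tau<\tau .
\]
By Step 1, $f_{\theta'}$ is continuous at $\bar\sigma$, so there is $\eta>0$ with $f_{\theta'}(\sigma)<\tau$ for all $\sigma\in[\bar\sigma,\bar\sigma+\eta)$. Every such $\sigma$ is retained under $\theta'$. Therefore $\sigc^{(1)}(x^\ast)$ under $\theta'$ is at least $\bar\sigma+\eta>\bar\sigma$, which is the claim.
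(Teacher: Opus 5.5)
Your proof is correct and follows the paper's argument step for step. As in the paper, continuity and escape above $\bar\sigma$ force $f_\theta(\bar\sigma)=\tau>0$, hence $D^{q}(x^\ast)\ne x^\ast$ at $\bar\sigma$; the mass decomposition then gives $f_{\theta'}(\bar\sigma)<\tau$, and continuity extends retention past $\bar\sigma$. Your dominated-convergence justification of continuity and your separate treatment of $\theta'=1$ fill in points the paper leaves implicit.
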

\begin{proof}
Let $\tau=r\|x^\ast\|>0$. The retention set is closed because $f_\theta$ is continuous, and every
$\sigma>\bar\sigma$ escapes, so
\begin{equation}
f_\theta(\bar\sigma)\le\tau
\quad\text{and}\quad
f_\theta(\sigma)>\tau\ \ (\sigma>\bar\sigma)
\qquad\Longrightarrow\qquad
f_\theta(\bar\sigma)=\tau>0 .
\end{equation}
In particular $D^{q}(x^\ast)\ne x^\ast$ at $\bar\sigma$, and \eqref{eq:massdecomp} gives
\begin{equation}
f_{\theta'}(\bar\sigma)=\frac{1-\lambda_{\theta'}}{1-\lambda_\theta}\,f_\theta(\bar\sigma)<\tau .
\end{equation}
By continuity $f_{\theta'}<\tau$ on an interval above $\bar\sigma$.
\end{proof}

%----------------------------------------------------------------------
\subsection{Two atoms (Theorem~\ref{thm:twoatom})}
\label{app:twoatom}

\begin{theorem}[Two atoms; formal version of Theorem~\ref{thm:twoatom}]
\label{thm:twoatom-formal}
Let $p=\theta\delta_{+a}+(1-\theta)\delta_{-a}$ with $a=d/2$ and $\theta\le\tfrac12$: the probe
carries mass $\theta$ and its only neighbour sits at distance $d$. The probe carries its own attractor
iff $\sigma<\sigc(\theta)$, where $\sigc$ is strictly increasing on $(0,\tfrac12]$,
$\sigc(\tfrac12)=d/2$, and
\[
\sigc(\theta)=\frac{d}{\sqrt{2\log(1/\theta)}}\,(1+o(1))\qquad\text{as }\theta\to0 .
\]
\end{theorem}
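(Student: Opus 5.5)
The plan is to reduce the statement to a scalar fixed-point equation and then analyse the saddle-node bifurcation of that equation explicitly.

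\emph{Reduction to one dimension.} Take $e=a/\|a\|$ and write $x=se+x_\perp$. By \eqref{eq:meanshift-m}, $\M(x)=w_+(x)\,a-w_-(x)\,a$ always lies on the line $\R e$. The weights depend on $x$ only through $\|x-a\|^2-\|x+a\|^2=-4\,a^\top x$. Hence
\[
\M(x)=a\tanh\!\big(a^\top x/\sigma^{2}+h\big)\quad\text{with}\quad h=\tfrac12\log\tfrac{\theta}{1-\theta}\le0 .
\]
So all fixed points lie on the line. The posterior covariance is rank one along $e$, and by Proposition~\ref{prop:tweedie} the Jacobian $\nabla\M$ vanishes on $e^\perp$. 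Consequently the stability criterion of Proposition~\ref{prop:stab} reduces to the scalar condition $|\frac{d}{ds}\M|<1$ on the line.

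Next set $\beta=\|a\|^2/\sigma^2=d^2/4\sigma^2$ and $v=\|a\|s/\sigma^2+h$. The fixed-point equation becomes
\[
\tanh v=(v-h)/\beta ,
\]
that is, a line of slope $1/\beta$ meeting $\tanh$. The derivative of the map at a fixed point is $\beta\,\mathrm{sech}^2 v$. I will define ``the probe carries its own attractor'' as the existence of an attractor with $v>0$, i.e.\ on the probe's side, near $+a$ for small $\sigma$ as in Proposition~\ref{prop:ends}(b).

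\emph{Case analysis.} If $\beta\le1$, the function $v\mapsto\tanh v-(v-h)/\beta$ is nonincreasing, so there is a single fixed point; this also follows from Proposition~\ref{prop:ends}(a) when $\sigma>d/2$. When $h<0$ this unique fixed point lies on the neighbour's side, so the probe has no attractor of its own. If $\beta>1$, concavity of $\tanh$ on $v>0$ shows there are either zero, one (tangency) or two intersections with $v>0$. When there are two, the outer one has $\beta\,\mathrm{sech}^2v<1$ and is the probe's attractor, while the inner one is a saddle. The boundary case is tangency:
\[
\tanh v=(v-h)/\beta,\qquad \beta\,\mathrm{sech}^2v=1 .
\]
Parametrise by $t=\tanh v\in(0,1)$, so that $\beta=1/(1-t^2)$. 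The tangency condition then reads $h=g(t)$ with
\[
g(t):=\operatorname{artanh}t-\frac{t}{1-t^{2}} .
\]
A direct computation gives $g'(t)=-2t^2/(1-t^2)^2<0$ and $g(0)=0$. Since the line moves up as $h$ increases, the attractor exists iff $h>g(t(\beta))$. Because $t(\beta)$ is increasing in $\beta$, this is equivalent to $\beta>\beta_c(h)$, where $\beta_c$ is strictly decreasing in $h$. Now $h$ is strictly increasing in $\theta$. Therefore $\sigc(\theta)=(d/2)/\sqrt{\beta_c}$ is strictly increasing, and the probe has its attractor iff $\sigma<\sigc$. At $\theta=\tfrac12$ we have $h=0$, which forces $t=0$, hence $\beta_c=1$ and $\sigc=d/2$.

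\emph{Asymptotics.} As $\theta\to0$ we have $h\to-\infty$, which forces $t\to1$ and $\beta\to\infty$. In this regime $\operatorname{artanh}t=\tfrac12\log\frac{1+t}{1-t}=\tfrac12\log(4\beta)+o(1)$ and $t/(1-t^2)=\beta t=\beta+O(1)$. So $g=-\beta+O(\log\beta)$, while $h=\tfrac12\log\theta+o(1)$. This gives $\beta_c=\tfrac12\log(1/\theta)\,(1+o(1))$, and therefore
\[
\sigc=\frac{d/2}{\sqrt{\beta_c}}=\frac{d}{\sqrt{2\log(1/\theta)}}\,(1+o(1)).
\]

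\emph{Main obstacle.} I expect the main difficulty to be the careful ``iff'' bookkeeping rather than the computation. Three points need care. First, at the tangency itself the fixed point is degenerate (derivative exactly $1$), so it is not an attractor; this is what makes the inequality $\sigma<\sigc$ strict. Second, the counting argument for $\beta>1$ and $v>0$ must rule out extra fixed points, which I will do via concavity. Third, at $\theta=\tfrac12$ the symmetric pitchfork replaces the saddle-node, and this case must be checked separately.
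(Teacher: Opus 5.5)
Your proposal is correct and follows essentially the paper's route: the same reduction to the scalar map $u\mapsto\tanh(\beta u+h)$, the same saddle-node criterion (your tangency curve $h=g(t)$ at $t=u_\beta$ is the paper's condition $|h|=h^\ast(\beta)$, since $g(u_\beta)=-h^\ast(\beta)$, and your $g'<0$ is the paper's $dh^\ast/d\beta=u_\beta>0$ via the chain rule), and the same expansion $\operatorname{artanh}u_\beta=\tfrac12\log(4\beta)+o(1)$ for the asymptotics. One small slip: the line $(v-h)/\beta$ moves \emph{down} as $h$ increases, not up, although the criterion you draw from it, $h>g(t(\beta))$, is correct.
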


\paragraph{Intuition.}
With two atoms the map lives on the line through them. There it is a sigmoid of the position, and the
sigmoid's slope is the separation over the noise, $\beta=a^{2}/\sigma^{2}$, while its offset is the
log-mass ratio $h$. A steep sigmoid crosses the diagonal three times: one attractor at each atom and
a saddle between them. As $\sigma$ grows the sigmoid flattens, and the crossing near the lighter atom
disappears by merging with the saddle. The lighter atom survives while the slope beats the offset,
which is why separation (through $\beta$) enters linearly and mass (through $h$) only logarithmically.

\begin{proof}[Proof of Theorem~\ref{thm:twoatom-formal}]
Place the atoms at $\pm ae$ for a unit vector $e$, with the probe at $+ae$ carrying mass
$\theta\le\frac12$.

\emph{Step 1 (reduction to a line).} By (F1) all fixed points lie on the segment $[-ae,ae]$. Write
$x=aue+x_\perp$ with $x_\perp\perp e$. Then
\begin{equation}
\|x\mp ae\|^{2}=\|x_\perp\|^{2}+a^{2}(u\mp1)^{2},
\qquad
\|x+ae\|^{2}-\|x-ae\|^{2}=4a^{2}u ,
\end{equation}
so the factor depending on $x_\perp$ cancels from the weights of \eqref{eq:meanshift-m}, and
\begin{equation}
\frac{w_+(x)}{w_-(x)}=\frac{\theta}{1-\theta}\,e^{2a^{2}u/\sigma^{2}}=e^{2(\beta u+h)},
\qquad
\M(x)=a\,(w_+-w_-)\,e=a\tanh(\beta u+h)\,e,
\end{equation}
with
\begin{equation}
\beta=\frac{a^{2}}{\sigma^{2}},\qquad h=\frac12\log\frac{\theta}{1-\theta}\le0 .
\end{equation}
$\M$ depends on $x$ only through $u$, and $\nabla\M$ vanishes orthogonally to $e$. Fixed points and
their stability are therefore those of the scalar map $g(u)=\tanh(\beta u+h)$ on $[-1,1]$.

\emph{Step 2 (fixed points).} For $|u|<1$,
\begin{equation}
u=g(u)\iff H(u)=h,\qquad H(u)=\operatorname{arctanh}u-\beta u,\qquad H'(u)=\frac{1}{1-u^{2}}-\beta ,
\end{equation}
and $H\to\mp\infty$ as $u\to\mp1$. If $\beta\le1$, $H$ is increasing and there is exactly one fixed
point. If $\beta>1$, let $u_\beta=\sqrt{1-1/\beta}$ and
\begin{equation}
h^\ast(\beta):=\beta u_\beta-\operatorname{arctanh}u_\beta>0 .
\end{equation}
Then $H$ increases on $(-1,-u_\beta)$ up to $h^\ast(\beta)$, decreases on $(-u_\beta,u_\beta)$ down to
$-h^\ast(\beta)$, and increases on $(u_\beta,1)$.

\emph{Step 3 (stability).} At a fixed point $\tanh(\beta u+h)=u$, so
\begin{equation}
g'(u)=\beta(1-u^{2})\ge0,\qquad g'(u)<1\iff H'(u)>0 .
\end{equation}
Fixed points on the increasing branches of $H$ are attractors, and a fixed point on the decreasing
branch is repelling: it is the saddle between the two basins.

\emph{Step 4 (the probe's attractor).} On $(u_\beta,1)$, $H$ takes every value in $(-h^\ast,\infty)$
exactly once. Since $h\le0$, the probe carries its own attractor iff
\begin{equation}
\beta>1\quad\text{and}\quad h^\ast(\beta)>|h| .
\label{eq:twoatom-cond}
\end{equation}
Otherwise the unique fixed point has $u\le0$, on the neighbour's side or at the midpoint.

\emph{Step 5 (the critical scale).} Using $1/(1-u_\beta^{2})=\beta$,
\begin{equation}
\frac{dh^\ast}{d\beta}=u_\beta+\beta u_\beta'-\frac{u_\beta'}{1-u_\beta^{2}}=u_\beta>0,
\qquad h^\ast(1)=0,\qquad h^\ast(\infty)=\infty .
\end{equation}
So $\sigma\mapsto h^\ast(a^{2}/\sigma^{2})$ decreases strictly from $\infty$ at $\sigma\to0$ to $0$ at
$\sigma=a$, and \eqref{eq:twoatom-cond} holds iff $\sigma<\sigc(\theta)$, where $\sigc(\theta)$ is the
unique solution of
\begin{equation}
h^\ast\!\big(a^{2}/\sigc(\theta)^{2}\big)=|h(\theta)|=\frac12\log\frac{1-\theta}{\theta}.
\end{equation}
The right-hand side decreases strictly on $(0,\frac12]$, so $\sigc(\theta)$ increases strictly, and
$h(\frac12)=0$ gives $\sigc(\frac12)=a=d/2$.

\emph{Step 6 (asymptotics).} As $\beta\to\infty$,
\begin{align}
u_\beta&=1-\frac{1}{2\beta}+O(\beta^{-2}),\qquad
\operatorname{arctanh}u_\beta=\frac12\log\frac{1+u_\beta}{1-u_\beta}=\frac12\log(4\beta)+O(\beta^{-1}),
\\
h^\ast(\beta)&=\beta-\tfrac12\log(4\beta)-\tfrac12+O(\beta^{-1}).
\label{eq:hstar}
\end{align}
As $\theta\to0$, $|h|=\frac12\log(1/\theta)+O(\theta)\to\infty$, so $\beta_{\mathrm c}\to\infty$ and
\eqref{eq:hstar} gives
\begin{equation}
\beta_{\mathrm c}=\tfrac12\log(1/\theta)\,(1+o(1)),
\qquad
\sigc=\frac{a}{\sqrt{\beta_{\mathrm c}}}=\frac{d}{\sqrt{2\log(1/\theta)}}\,(1+o(1)).
\qedhere
\end{equation}
\end{proof}

\paragraph{Remarks.}
(i) For $\theta>\frac12$ the atoms swap roles, and bistability ends at the same scale:
$\sigc(\theta)=\sigc(1-\theta)$. (ii) The $\log\beta$ term in \eqref{eq:hstar} makes the exact
$\sigc$ smaller than the leading-order formula: solving Step~5 numerically gives $0.74$, $0.83$ and
$0.87$ of the formula's value at $\theta=10^{-2},10^{-4},10^{-6}$. So the formula of Theorem~\ref{thm:twoatom} gives how
$\sigc$ scales (linearly in $d$, logarithmically in $\theta$), not its value. (iii) At $\sigc$ the
probe's attractor and the saddle merge at $u_{\beta_{\mathrm c}}$ and both disappear: a saddle-node.
The attractor is still at distance $a(1-u_{\beta_{\mathrm c}})$ from the probe when this happens.

Theorem~\ref{thm:twoatom-formal} locates the scale at which an attractor ceases to exist, while
Definition~\ref{def:sigc} thresholds a displacement. The next proposition says the two agree
whenever the attractor is still within the escape radius when it vanishes.

\begin{proposition}[Escape test in the two-atom model]
\label{prop:escape}
In the setting of Theorem~\ref{thm:twoatom-formal}, let $K=\infty$ and let the escape radius
$\tau=r\|x\|$ satisfy $\tau<a$. Then the probe is retained exactly for
$\sigma\in(0,\sigma_{\mathrm{esc}}]$, with $\sigma_{\mathrm{esc}}\le\sigc(\theta)$. Equality holds iff
$a(1-u_{\beta_{\mathrm c}})\le\tau$, where $\beta_{\mathrm c}=a^{2}/\sigc(\theta)^{2}$.
\end{proposition}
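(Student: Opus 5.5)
The plan is to reduce the escape test to the scalar map of the proof of Theorem~\ref{thm:twoatom-formal}, where the orbit from the probe turns out to be monotone. The probe is $x=+ae$, so $\|x\|=a$ and $\tau=r a<a$. By Step~1, every iterate stays on the line $\R e$ and $\M^{k}(x)=a u_k e$, with $u_0=1$ and $u_{k+1}=g(u_k)=\tanh(\beta u_k+h)$. Hence $\|\M^{k}(x)-x\|=a(1-u_k)$, and retention with $K=\infty$ means $1-u_k\le\tau/a$ for all $k$.

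\emph{Step 1 (the orbit is monotone).} Let $u^\ast(\sigma)$ be the largest fixed point of $g$ in $[-1,1]$. It exists because $g$ is continuous with $g(-1)>-1$ and $g(1)<1$. Since $g$ is increasing and $g(1)<1$, induction gives a strictly decreasing sequence $u_0>u_1>\dots$. The interval $[u^\ast,1]$ is mapped into itself, because $g(u)\ge g(u^\ast)=u^\ast$ there. So $u_k\downarrow$ some limit, which is a fixed point in $[u^\ast,1]$ and therefore equals $u^\ast$. Consequently the supremum of $1-u_k$ is $1-u^\ast$, and
\[
\text{the probe is retained at }\sigma\iff a\big(1-u^\ast(\sigma)\big)\le\tau .
\]

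\emph{Step 2 (identify $u^\ast$ across scales).} There are three regimes.
\begin{itemize}
\item For $\sigma<\sigc(\theta)$, Step~4 of Theorem~\ref{thm:twoatom-formal} shows that the largest fixed point is the probe's attractor. This is the unique solution of $H(u)=h$ on the branch $(u_\beta,1)$.
\item At $\sigma=\sigc(\theta)$ we have $h=-h^\ast(\beta_{\mathrm c})=H(u_{\beta_{\mathrm c}})$, so the largest fixed point is the saddle-node point $u_{\beta_{\mathrm c}}>0$.
\item For $\sigma>\sigc$ the unique fixed point has $u\le0$. Then $1-u^\ast\ge1>\tau/a$, so the probe escapes.
\end{itemize}
On the branch, $\partial_\beta H=-u<0$ and $\partial_u H=H'(u)>0$. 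Implicit differentiation therefore shows that $u^\ast$ is strictly increasing in $\beta$, i.e.\ strictly decreasing and continuous in $\sigma$ on $(0,\sigc)$. It tends to $1$ as $\sigma\to0$, and to $u_{\beta_{\mathrm c}}$ as $\sigma\uparrow\sigc$. The second limit holds because $u^\ast>u_\beta$ and $H(u^\ast)=h$ force convergence to the double root.

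\emph{Step 3 (conclude).} On $(0,\sigc]$ the function $\sigma\mapsto1-u^\ast(\sigma)$ is continuous and strictly increasing, starts near $0$, and ends at $1-u_{\beta_{\mathrm c}}$. Beyond $\sigc$ it jumps to a value at least $1>\tau/a$. Hence the retained set is $(0,\sigma_{\mathrm{esc}}]$ with $\sigma_{\mathrm{esc}}\le\sigc$. We get $\sigma_{\mathrm{esc}}=\sigc$ exactly when the endpoint value satisfies $a(1-u_{\beta_{\mathrm c}})\le\tau$. Otherwise $\sigma_{\mathrm{esc}}$ is the unique solution of $a(1-u^\ast)=\tau$, which lies strictly below $\sigc$.

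The main obstacle is the endpoint behaviour at $\sigc$. We must check that the largest fixed point there is still the degenerate point $u_{\beta_{\mathrm c}}$. We must also check that the branch solution converges continuously to it, rather than to the neighbour's fixed point. Both follow from the monotonicity of $H$ on $(u_\beta,1)$ established in Step~2 of Theorem~\ref{thm:twoatom-formal}. A further point is that convergence of the orbit to a degenerate fixed point is slow but still monotone. This is harmless for $K=\infty$, because only the supremum $1-u^\ast$ matters.
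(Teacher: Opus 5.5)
Your proposal is correct and follows the paper's proof essentially step for step. The orbit from $u=1$ decreases monotonically to the largest fixed point $u^\ast(\sigma)$, so retention reduces to $a(1-u^\ast(\sigma))\le\tau$; the rising-branch root then moves continuously left down to $u_{\beta_{\mathrm c}}$ at $\sigc$, and $u^\ast\le0$ beyond it. Your implicit differentiation ($\partial_\beta H=-u<0$) is just the paper's remark that $H$ increases pointwise on $u>0$ as $\sigma$ grows, made explicit.

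The only slip is cosmetic. At $\theta=\tfrac12$ one has $\beta_{\mathrm c}=1$ and $u_{\beta_{\mathrm c}}=0$, not $u_{\beta_{\mathrm c}}>0$. This leaves the conclusion intact, since then $a(1-u_{\beta_{\mathrm c}})=a>\tau$.
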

\begin{proof}
$g$ is increasing and $g(1)<1$, so the orbit started at $u=1$ decreases monotonically to the largest
fixed point $u^\ast(\sigma)\le1$. Its largest displacement from the probe is its limit, so
\begin{equation}
\sup_k\|\M^{k}(x)-x\|=a\big(1-u^\ast(\sigma)\big),
\qquad
\text{the probe is retained}\iff a\big(1-u^\ast(\sigma)\big)\le\tau .
\end{equation}
For $\sigma\le\sigc$, $u^\ast$ is the probe's attractor, the root of $H(u)=h$ on the rising branch.
As $\sigma$ grows, $H$ increases pointwise on $u>0$, so this root moves continuously left, down to
$u_{\beta_{\mathrm c}}$ at $\sigc$. For $\sigma>\sigc$, $u^\ast\le0$ and the displacement is at least
$a>\tau$. So the retention set is
\begin{equation}
\big\{\sigma\le\sigc:\ a\big(1-u^\ast(\sigma)\big)\le\tau\big\},
\end{equation}
an interval, which reaches $\sigc$ iff the displacement at $\sigc$ is at most $\tau$.
\end{proof}

Since $1-u_{\beta_{\mathrm c}}\approx1/(2\beta_{\mathrm c})\approx1/\log(1/\theta)$, the condition
holds for any fixed radius once $\theta$ is small. The escape test then returns the saddle-node
scale. This is a statement about the exact map, not about a trained network.

%----------------------------------------------------------------------
\subsection{From two atoms to a training set}
\label{app:balance}

\paragraph{Intuition.}
Remark~(ii) and~\eqref{eq:hstar} say that, at leading order, the lighter atom keeps its attractor
while it wins the posterior vote at its own location, $\beta>|h|$. For a whole training set the same
criterion reads: the probe's $m$ copies must outweigh the kernel mass of all other rows at the probe,
\begin{equation}
S(\sigma):=\textstyle\sum_jm_j\,e^{-\|x-x_j\|^{2}/2\sigma^{2}}<m ,
\label{eq:balance}
\end{equation}
the sum running over the other rows. Each term of $S$ increases with $\sigma$, so the balance
holds on an interval $(0,\sigma_{\mathrm{bal}})$. This criterion is a heuristic. It is leading-order,
it looks only at the probe's own location, and it has no escape threshold. What can be proved is how
the closed forms of Section~\ref{sec:sigc:raise} relate to it.

\begin{proposition}
\label{prop:balance}
Let $n>m$ be the number of other rows, $d_{\mathrm{nn}}$ the distance to the nearest of them, and
$\sigma_1$ the solution of $S(\sigma_1)=1$. Set $d_{\mathrm{kern}}=\sigma_1\sqrt{2\log n}$. Then
\begin{equation}
\sigma_{\mathrm{bal}}\ \ge\ \frac{d_{\mathrm{nn}}}{\sqrt{2\log(n/m)}}
\qquad\text{and}\qquad d_{\mathrm{kern}}\ \ge\ d_{\mathrm{nn}},
\end{equation}
with equality in both iff all other rows are at distance $d_{\mathrm{nn}}$. Moreover, a single atom of
$n$ rows at distance $d_{\mathrm{kern}}$ has the same kernel mass as the actual rows at $\sigma_1$.
\end{proposition}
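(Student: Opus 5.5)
Each claim of Proposition~\ref{prop:balance} reduces to a one-line inequality on the function $S(\sigma)=\sum_j m_j e^{-\|x-x_j\|^2/2\sigma^2}$. I read the sum as running over the $n$ other rows, counted with multiplicity, so $\sum_j m_j=n$. I also use two facts throughout: every term of $S$ is strictly increasing in $\sigma$, and $S(\sigma)\to0$ as $\sigma\to0$ while $S(\sigma)\to n$ as $\sigma\to\infty$. Since $n>m\ge1$, it follows that $\sigma_{\mathrm{bal}}$ (where $S=m$) and $\sigma_1$ (where $S=1$) are each well defined as the unique crossing point.

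\emph{First bound.} Every other row is at distance at least $d_{\mathrm{nn}}$, so each exponent satisfies $e^{-\|x-x_j\|^2/2\sigma^2}\le e^{-d_{\mathrm{nn}}^2/2\sigma^2}$. Summing gives $S(\sigma)\le n\,e^{-d_{\mathrm{nn}}^2/2\sigma^2}$. Define $\sigma_\star=d_{\mathrm{nn}}/\sqrt{2\log(n/m)}$, so that $n\,e^{-d_{\mathrm{nn}}^2/2\sigma_\star^2}=m$. Then $S(\sigma_\star)\le m$, and monotonicity of $S$ gives $\sigma_{\mathrm{bal}}\ge\sigma_\star$.

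\emph{Second bound.} The same pointwise inequality at $\sigma_1$ gives $1=S(\sigma_1)\le n\,e^{-d_{\mathrm{nn}}^2/2\sigma_1^2}$. Taking logarithms yields $d_{\mathrm{nn}}^2\le 2\sigma_1^2\log n$, that is, $d_{\mathrm{kern}}\ge d_{\mathrm{nn}}$.

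\emph{Equality cases.} In both bounds the only inequality used is the termwise one. Its slack is $\sum_j m_j\big(e^{-d_{\mathrm{nn}}^2/2\sigma^2}-e^{-\|x-x_j\|^2/2\sigma^2}\big)\ge0$, and this vanishes iff every term vanishes, i.e.\ iff every other row is at distance exactly $d_{\mathrm{nn}}$. If some row is farther away, the slack is strictly positive. For the first bound this gives $S(\sigma_\star)<m$, and strict monotonicity of $S$ then gives $\sigma_{\mathrm{bal}}>\sigma_\star$. For the second bound it gives $1<n\,e^{-d_{\mathrm{nn}}^2/2\sigma_1^2}$ directly, hence $d_{\mathrm{kern}}>d_{\mathrm{nn}}$. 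Conversely, if all other rows are at distance $d_{\mathrm{nn}}$, then $S(\sigma)=n\,e^{-d_{\mathrm{nn}}^2/2\sigma^2}$ exactly and both bounds are equalities.

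\emph{Last claim.} A single atom carrying $n$ rows at distance $d_{\mathrm{kern}}$ has kernel mass $n\,e^{-d_{\mathrm{kern}}^2/2\sigma_1^2}=n\,e^{-\log n}=1=S(\sigma_1)$ at $\sigma_1$, which is what is claimed.

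Nothing here is genuinely hard. The one step that needs care is the strict case of the first bound, which relies on strict monotonicity of $S$ and hence on $S$ having at least one term. That is guaranteed by $n>m\ge1$.
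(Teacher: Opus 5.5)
Your proof is correct and follows the paper's argument. Both rest on the single termwise bound $S(\sigma)\le n\,e^{-d_{\mathrm{nn}}^{2}/2\sigma^{2}}$, applied at $\sigma_\star$ (the paper phrases this as the balance holding on all of $(0,\sigma_\star)$) and at $\sigma_1$, plus the direct computation for the single atom; your handling of the strict case via strict monotonicity of $S$ is more explicit than the paper's, which simply asserts the equality condition.
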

\begin{proof}
Every other row is at distance at least $d_{\mathrm{nn}}$, so
\begin{equation}
S(\sigma)\le n\,e^{-d_{\mathrm{nn}}^{2}/2\sigma^{2}},
\end{equation}
with equality iff all of them are at distance $d_{\mathrm{nn}}$. The three claims follow from this
bound:
\begin{align}
n\,e^{-d_{\mathrm{nn}}^{2}/2\sigma^{2}}<m
&\iff\sigma<\frac{d_{\mathrm{nn}}}{\sqrt{2\log(n/m)}}
&&\Longrightarrow\ \ \sigma_{\mathrm{bal}}\ge\frac{d_{\mathrm{nn}}}{\sqrt{2\log(n/m)}},\\
1=S(\sigma_1)\le n\,e^{-d_{\mathrm{nn}}^{2}/2\sigma_1^{2}}
&\iff d_{\mathrm{nn}}\le\sigma_1\sqrt{2\log n}=d_{\mathrm{kern}},\\
n\,e^{-d_{\mathrm{kern}}^{2}/2\sigma_1^{2}}&=1=S(\sigma_1). &&\qedhere
\end{align}
\end{proof}

So~\eqref{eq:ceiling}, with $n\approx N$, is a lower bound on the balance scale: lumping every row
at the nearest distance overcounts the competition. Lumping them at $d_{\mathrm{kern}}$ instead gives
$d_{\mathrm{kern}}/\sqrt{2\log(n/m)}$, which is exact at $m=1$ and approximate otherwise. In words,
$d_{\mathrm{kern}}$ is the distance at which one atom carrying all other rows would weigh as much as
the actual neighbours do against a single copy. Because the balance itself is only a heuristic, the
ceiling $\sigemp$ is not taken from either formula. It is computed by running \eqref{eq:meanshift-m}
through the network's own escape test (Appendix~\ref{app:exact}).

%----------------------------------------------------------------------
\subsection{Conditional models and guidance (Section~\ref{sec:cond})}\label{app:cond}

No argument above uses that $p$ is a marginal distribution, so every result holds for
$p(\cdot\mid c)$. For $p(x_0\mid c)=\sum_iw_i(c)\delta_{x_i}$, \eqref{eq:meanshift-m} with
$\mu_i=w_i(c)$ gives the caption-reweighted mean shift
\begin{equation}
\M(x;c)=\sum_i\tilde w_i(x,c)\,x_i,\qquad
\tilde w_i(x,c)\propto w_i(c)\exp\!\big(-\|x-x_i\|^{2}/2\sigma^{2}\big).
\label{eq:condmeanshift}
\end{equation}
The Gaussian kernel, which carries $\sigma$, is unchanged, and the caption enters only through the
prior weights $w_i(c)$.

\begin{corollary}[Caption gap; formal version of Corollary~\ref{cor:capgap}]
\label{cor:capgap-formal}
At $K=1$, and when the caption changes the weight of $x_i$ but not the relative weights of the other
examples, $\Delta\log\sigc(x_i;c)>0$ iff $w_i(c)>w_i(\varnothing)$.
\end{corollary}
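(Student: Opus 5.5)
The plan is to reduce the corollary to Theorem~\ref{thm:mass-formal} and Lemma~\ref{lem:strict}, after rewriting the caption's effect as a change of mass on a single atom.

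\emph{Step 1: rewrite the conditional distribution.} By \eqref{eq:condmeanshift}, the conditional map $\M(\cdot\,;c)$ is the exact mean shift of
\[
p(\cdot\mid c)=\sum_j w_j(c)\delta_{x_j}.
\]
Write this as
\[
p(\cdot\mid c)=(1-\theta_c)\,q_c+\theta_c\,\delta_{x_i},\qquad \theta_c=w_i(c),\qquad q_c=\sum_{j\ne i}\frac{w_j(c)}{1-w_i(c)}\,\delta_{x_j}.
\]
The hypothesis that the caption leaves the relative weights of the other examples unchanged says precisely that $q_c=q_\varnothing=:q$. So $c$ and $\varnothing$ define two members of the same one-parameter family $p_\theta=(1-\theta)q+\theta\delta_{x_i}$, with $\theta=w_i(c)$ and $\theta=w_i(\varnothing)$ respectively. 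The case $w_i=1$ is degenerate: $q$ is undefined and $\sigc=\infty$. I would exclude it, or treat it as a convention.

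\emph{Step 2: the direction $w_i(c)>w_i(\varnothing)\Rightarrow$ positive gap.} Theorem~\ref{thm:mass-formal} gives $\sigc^{(1)}(x_i;c)\ge\sigc^{(1)}(x_i;\varnothing)$. For strictness I invoke Lemma~\ref{lem:strict}. This needs $x_i\neq0$ and $\sigc^{(1)}(x_i;\varnothing)<\infty$.

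Finiteness holds whenever $q$ is not concentrated at $x_i$. In that case $\|D^{q}(x_i)-x_i\|$ tends to $\|\E_q x_0-x_i\|>0$ as $\sigma\to\infty$. Hence $f_\theta(\sigma)\to(1-\theta)\|\E_qx_0-x_i\|$, while the factor $1-\lambda_\theta\to1-\theta$. Then $\sigma^2\|\nabla\log\ps\|$ eventually exceeds $r\|x_i\|$ only if $(1-\theta)\|\E_q x_0-x_i\|>r\|x_i\|$.

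This is the delicate point. If the limiting displacement stays below the threshold, $\sigc^{(1)}=\infty$ for both captions and the gap is $0$ (or undefined). I would therefore state the needed nondegeneracy explicitly: $\sigc^{(1)}(x_i;\varnothing)<\infty$ and $x_i\ne0$. Under it, the strict inequality follows and $\Delta\log\sigc>0$.

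\emph{Step 3: the converse.} If $w_i(c)\le w_i(\varnothing)$, apply the same argument with the roles of $c$ and $\varnothing$ swapped. Equality of the weights gives identical maps, so the gap is $0$. Strict inequality gives, by Theorem~\ref{thm:mass-formal} and Lemma~\ref{lem:strict}, $\sigc^{(1)}(x_i;c)<\sigc^{(1)}(x_i;\varnothing)$ whenever $\sigc^{(1)}(x_i;c)$ is finite. In both cases the gap is $\le0$, which proves ``only if''.

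The main obstacle is this finiteness and nondegeneracy bookkeeping, since Lemma~\ref{lem:strict} only yields strictness at a finite critical scale. I would handle it by adding, as a standing assumption, that the $K=1$ critical scales involved are finite and $x_i\ne0$. I would also argue that finiteness is monotone along the family, because $f_\theta$ decreases in $\theta$. It then suffices to assume finiteness at the larger weight $\max(w_i(c),w_i(\varnothing))$, which covers both directions at once.
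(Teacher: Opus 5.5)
Your proposal is correct and follows the paper's proof. Both write the two conditionals as members of the family $p_\theta$ of Theorem~\ref{thm:mass-formal} with a common remainder, then apply Lemma~\ref{lem:strict} in whichever direction the weight moves, under finiteness of the critical scales; your finiteness condition at the larger weight is equivalent to the paper's ``when both are finite''. Your explicit hypotheses $x_i\neq0$ and $w_i<1$ are ones the paper leaves implicit, though you should drop the claim that finiteness holds whenever $q$ is not concentrated at $x_i$, since $\E_q x_0$ can coincide with $x_i$ and the limiting displacement must in any case exceed $r\|x_i\|$.
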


\paragraph{Intuition for Corollary~\ref{cor:capgap-formal}.}
A caption acts on the empirical mean shift only through the prior weights, exactly as a duplication
count does. If it raises the weight of $x_i$ and leaves the rest of the data as it was, it is a
duplication of $x_i$, and Theorem~\ref{thm:mass-formal} applies.

\begin{proof}[Proof of Corollary~\ref{cor:capgap-formal}]
By assumption, $w_j(c)/w_j(\varnothing)$ is the same for every $j\ne i$, so the normalized remainder
is the same under both conditions:
\begin{equation}
p(\cdot\mid c')=w_i(c')\,\delta_{x_i}+\big(1-w_i(c')\big)\,q_i,\quad c'\in\{c,\varnothing\},
\qquad
q_i=\frac{\sum_{j\ne i}w_j(\varnothing)\,\delta_{x_j}}{1-w_i(\varnothing)} .
\end{equation}
These are the family $p_\theta$ of Theorem~\ref{thm:mass-formal} at $\theta=w_i(c)$ and at
$\theta=w_i(\varnothing)$. By Lemma~\ref{lem:strict}, applied in whichever direction the weight moves,
\begin{equation}
\sigc^{(1)}(x_i;c)>\sigc^{(1)}(x_i;\varnothing)\iff w_i(c)>w_i(\varnothing),
\end{equation}
when both are finite.
\end{proof}

Without the assumption, \eqref{eq:massdecomp} still describes the step, but the caption also changes
the pull of the remainder $q_i$.

\paragraph{Guidance.}
With the guided noise prediction $\epsilon_w=\epsilon_\varnothing+w(\epsilon_c-\epsilon_\varnothing)$,
the map becomes
\begin{equation}
\Mg{w}(x)=x+\sigma^{2}\nabla\log\Big[\ps(x)^{1-w}\ps(x\mid c)^{w}\Big].
\label{eq:cfg}
\end{equation}
It climbs the tilted density $\ps(x)\big(\ps(x\mid c)/\ps(x)\big)^{w}$, whose critical points for
$w>1$ are not those of $\ps(\cdot\mid c)$. Only $\Mg{1}$ describes the fixed points of the
conditional model.

\begin{lemma}[Guided map]
\label{lem:cfg}
The guided map is~\eqref{eq:cfg}, and its Jacobian is
\begin{equation}
\nabla\Mg{w}(x)=\frac{(1-w)\Cov[x_0\mid x]+w\Cov[x_0\mid x,c]}{\sigma^{2}} .
\end{equation}
For $0\le w\le1$ it is positive semidefinite; for $w>1$ it is a difference of positive semidefinite
matrices and may be indefinite.
\end{lemma}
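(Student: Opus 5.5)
Throughout, $\ps(x)$ denotes the unconditional smoothed density and $\ps(x\mid c)$ the conditional one. The posteriors $\pi_x$ and $\pi_{x,c}$ are built from $p$ and from $p(\cdot\mid c)$ respectively.

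The plan is to reduce everything to Tweedie's formula applied separately to the unconditional and conditional branches, and then use linearity. First, write the guided denoiser in denoiser form. Since $\epsilon_w=\epsilon_\varnothing+w(\epsilon_c-\epsilon_\varnothing)$ and $\D(y)=y-\sigma\epsilon(y)$, the guided denoiser is the affine combination
\[
\Mg{w}=(1-w)\M(\cdot;\varnothing)+w\,\M(\cdot;c).
\]
Here the weights $1-w$ and $w$ sum to one, so the identity part $y$ is reproduced exactly.

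Next, apply the first identity of Proposition~\ref{prop:tweedie} to each branch. It holds for $p(\cdot\mid c)$ because, as noted at the start of this subsection, no argument used that $p$ is a marginal. This gives $\M(x;\varnothing)=x+\sigma^{2}\nabla\log\ps(x)$ and $\M(x;c)=x+\sigma^{2}\nabla\log\ps(x\mid c)$. Substituting, $\Mg{w}(x)=x+\sigma^{2}\big[(1-w)\nabla\log\ps(x)+w\nabla\log\ps(x\mid c)\big]$. Because the gradient is linear, this equals $x+\sigma^{2}\nabla\log[\ps(x)^{1-w}\ps(x\mid c)^{w}]$, which is \eqref{eq:cfg}.

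For the Jacobian, differentiate the affine combination term by term. The second identity of Proposition~\ref{prop:tweedie}, applied to each branch, gives $\nabla\M(x;\varnothing)=\sigma^{-2}\Cov[x_0\mid x]$ and $\nabla\M(x;c)=\sigma^{-2}\Cov[x_0\mid x,c]$, with the covariances taken under $\pi_x$ and $\pi_{x,c}$. The claimed Jacobian formula follows.

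Sign properties. For $0\le w\le1$, both coefficients $1-w$ and $w$ are nonnegative, and covariance matrices are positive semidefinite. A nonnegative combination of PSD matrices is PSD, so the Jacobian is PSD. For $w>1$, write the Jacobian as $\sigma^{-2}\big(w\Cov[x_0\mid x,c]-(w-1)\Cov[x_0\mid x]\big)$, which is a difference of PSD matrices. To show it may be indefinite, a one-line example suffices. Take a two-atom $p$ as in Theorem~\ref{thm:twoatom-formal} and a caption that puts all its mass on a single atom. Then $\Cov[x_0\mid x,c]=0$, while $\Cov[x_0\mid x]\ne0$ at any interior point. The Jacobian is then negative semidefinite and nonzero, so it is not PSD.

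The only delicate point is interpretive rather than technical. One must check that the guided noise prediction, converted through $\D(y)=y-\sigma\epsilon(y)$, is indeed the stated affine combination of denoisers, using the same input rescaling convention in both branches. Once that is fixed, the rest is linearity.
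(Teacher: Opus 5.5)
Your proposal is correct and follows the paper's proof: write the guided map as the affine combination $(1-w)\D(\cdot)+w\,\D(\cdot\,;c)$ (the weights sum to one), apply both identities of Proposition~\ref{prop:tweedie} to each branch, and read the sign off the coefficients. Your explicit example for $w>1$ goes beyond the paper, which only notes that $1-w<0$, but strictly it gives a nonzero negative semidefinite Jacobian rather than an indefinite one; to get both signs, take $p$ uniform on $\{\pm a e_1,\pm a e_2\}$ and $p(\cdot\mid c)$ uniform on $\{\pm a e_1\}$, which gives at $x=0$ the eigenvalues $a^{2}(1+w)/2\sigma^{2}>0$ along $e_1$ and $a^{2}(1-w)/2\sigma^{2}<0$ along $e_2$.
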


\begin{proof}[Proof of Lemma~\ref{lem:cfg}]
Both denoisers are affine in the noise prediction, $\D(x)=x-\sigma\epsilon_\varnothing$ and
$\D(x;c)=x-\sigma\epsilon_c$, so guidance with $\epsilon_w=(1-w)\epsilon_\varnothing+w\epsilon_c$ gives
\begin{equation}
\Mg{w}=(1-w)\,\D(\cdot)+w\,\D(\cdot\,;c).
\end{equation}
Applying Proposition~\ref{prop:tweedie} to each term gives the map and its Jacobian. For
$0\le w\le1$ the Jacobian is a convex combination of positive semidefinite matrices; for $w>1$ the
coefficient $1-w$ is negative.
\end{proof}

The consequence for the dynamics: for $0\le w\le1$ everything in this appendix carries over to the
tilted density $\ps^{1-w}\ps(\cdot\mid c)^{w}$. For $w>1$ a negative eigenvalue lets the map
overshoot and oscillate, and a maximum of the tilted density is an attractor only if every eigenvalue
of its Jacobian stays above $-1$.

\section{Remarks on the critical scale}
\label{app:remarks}

\begin{figure}[t]
\centering
\includegraphics[width=\linewidth]{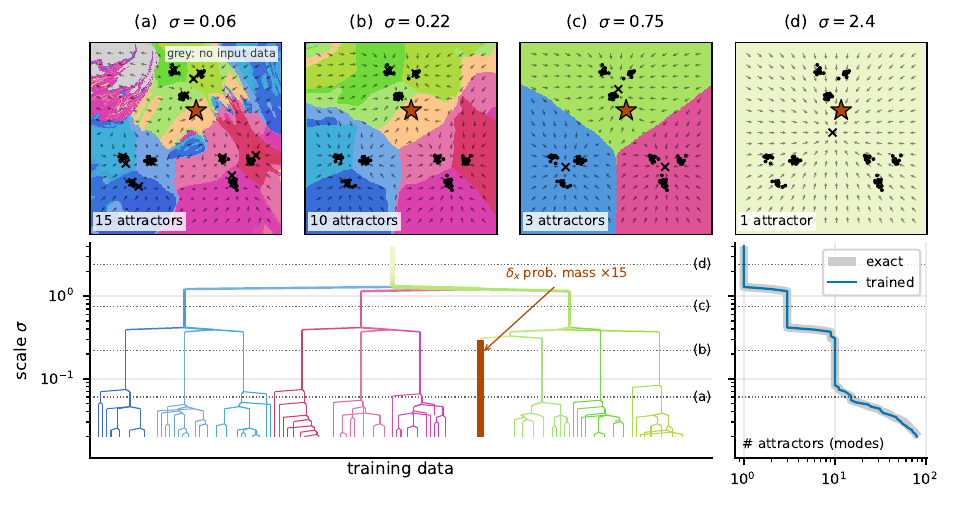}
\caption{\textbf{A trained denoiser reproduces the scale space of Fig.~\ref{fig:scalespace}.} Same
training set and same panels as Fig.~\ref{fig:scalespace}, with the exact map $\M$ replaced by
$\M(x)=\D(x)$ for an EDM-preconditioned MLP trained on the $141$ points. \emph{Top:} basins at the
four scales of Fig.~\ref{fig:scalespace}, coloured by the attractor ($\times$) reached from the data;
grey marks the grid points that reach none of them, in the corners of the domain where the network
saw no training data. \emph{Bottom left:} merge tree of the trained map. \emph{Bottom right:} its
attractor count against that of the exact map. The two maps agree: the counts are equal at $80\%$ of
the $90$ scales, the partitions of the training set they induce have adjusted Rand index $0.99$ on
average ($0.76$ at worst, at the smallest scales where single data split), and the duplicated datum
keeps its attractor over the same range of scales in both.}
\label{fig:scalespacelearned}
\end{figure}

\paragraph{Computing the critical scale.}
We compute $\sigc$ by bisection in $\log\sigma$ on the escape event of Definition~\ref{def:sigc}.
The definition is most natural when the set of retained scales is an interval, i.e.\ when
the escape event is monotone in $\sigma$; in that case bisection returns $\sigc$
exactly. We do not assume monotonicity but verify it on the trained models, and observe no violation
on either model (on Stable Diffusion, none for $16$ images over the noise grid). For $K>1$ the escape
event depends on the whole orbit, and $\sigc$ has no closed form analogous to
Proposition~\ref{prop:k1}.

\paragraph{Related iterations.}
The map of Definition~\ref{def:map} feeds the clean estimate back to the denoiser without adding
noise, so that it is deterministic and noise cannot create spurious endpoints. Two related
iterations have different fixed points and are not equivalent to $\M$. In the renoising iteration
$x_{k+1}=\D(x_k+\sigma\varepsilon_k)$ the endpoints depend on the noise realisation, and the round trip
$x_{k+1}=\mathrm{Sample}_{\sigma\to0}(x_k+\sigma\varepsilon_k)$ is a generative operator.

%======================================================================
\section{The exact denoiser and the retention coefficient}
\label{app:exact}

The theorems of Section~\ref{sec:sigc:raise} describe the exact denoiser of a fixed distribution.
The exact denoiser of a training set, $\M^{\mathrm{emp}}$, is the multiplicity-weighted mean shift
\eqref{eq:meanshift-m} over the number of samples: the map of a model that has stored its training set exactly.
It costs one softmax over the training set per step, so it can be iterated from the same images and
scored with the same escape test as the network (settings in Appendix~\ref{app:details}). We write
\begin{equation}
\sigemp(x)\;=\;\sigc(x)\ \text{computed with }\M^{\mathrm{emp}}\text{ in place of the network, at the same }K\text{ and }r .
\label{eq:sigemp}
\end{equation}
Section~\ref{app:exact:theory} checks the theory on this map; Section~\ref{app:coef} compares trained
networks with it.

\subsection{The theory on the exact denoiser}
\label{app:exact:theory}

\paragraph{A bifurcation, not a threshold.}
For this map a training image is an exact fixed point at every scale below the transition, so its
critical scale is the saddle-node of Theorem~\ref{thm:twoatom}: on the
duplication training set $\sigemp$ is unchanged between escape radii $r=0.1$ and $r=0.5$ and between
$K=20$ and $K=500$ iterations (largest group-median difference $1\%$).

\paragraph{Distance and multiplicity.}
With the nearest training image as the only neighbour, Theorem~\ref{thm:twoatom} gives, for an
example of multiplicity $m$ among $N$ rows,
\begin{equation}
\sigemp(x)\;\approx\;\frac{d_{\mathrm{nn}}(x)}{\sqrt{2\log(N/m)}},\qquad
d_{\mathrm{nn}}(x)=\min_{x_i\ne x}\|x-x_i\| ,
\label{eq:ceiling}
\end{equation}
refined by the balance \eqref{eq:balance} of Appendix~\ref{app:balance}. The critical scale should thus
grow linearly with distance and only logarithmically with multiplicity. The exact denoiser follows
this law without fitting (Figure~\ref{fig:outexact}): doubling the kernel distance doubles $\sigemp$,
whereas $32$ copies increase it by a factor $1.26$. The log--log slope in distance is $1.03$
$[1.00,1.05]$ for one copy and $0.88$ $[0.79,0.93]$ for $32$ copies, against the predicted $1$, and a
joint fit over all $288$ images of the rarity experiment gives $1.95$ per doubling of distance against
$1.06$ per doubling of multiplicity, where the law predicts $2$ and $1.05$.

\paragraph{Which distance the theory means.}
Solving the balance \eqref{eq:balance} image by image predicts $\sigemp$ at rank correlation
$0.99$--$1.00$ at every multiplicity of the rarity experiment, with an error of $7$--$10\%$, and on the
duplication training set at $\rho=0.99$ with a constant factor $0.83$. The nearest-neighbour formula
\eqref{eq:ceiling} reaches $\rho=0.87$--$0.95$ with an error of $43\%$, and lies below the computed
value, as Proposition~\ref{prop:balance} requires. The gaussian kernel distance $d_{\mathrm{kern}}$, is the notion of distance the theory uses.

\begin{figure}[t]
\centering
\includegraphics[width=0.52\linewidth]{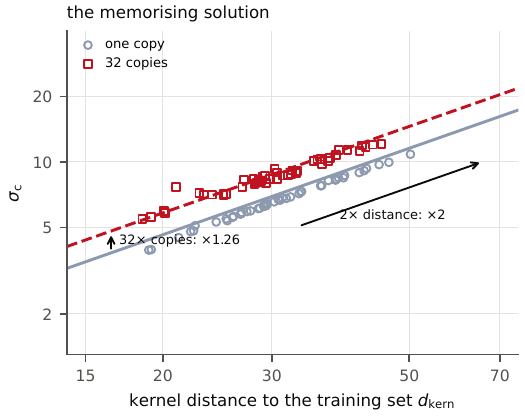}
\caption{\textbf{The exact denoiser follows the law.} $\sigemp$ of the added images of the rarity
experiment against their kernel distance $d_{\mathrm{kern}}$ to the training set
(Eq.~\ref{eq:balance}), for images added once (grey) and $32$ times (red), under the escape test with
$K=1$, $r=0.2$. The lines are the law $\sigc=d_{\mathrm{kern}}/\sqrt{2\log(N/m)}$ of
Theorem~\ref{thm:twoatom}, drawn without fitting. Doubling the distance doubles the critical scale,
whereas $32$ copies increase it by a factor $1.26$.}
\label{fig:outexact}
\end{figure}

\subsection{Trained networks against the exact denoiser}
\label{app:coef}

\paragraph{The retention coefficient.}
A network should not hold an image longer than the memorizing solution does, which suggests reporting
\begin{equation}
\reten(x)\;=\;\frac{\sigc(x)}{\sigemp(x)} ,
\label{eq:coef}
\end{equation}
the fraction of that solution's retention the network reaches at $x$: near $0$ where the model
generalizes through $x$, approaching $1$ where it has stored $x$. Both terms are measured with the
same test and the same threshold $r\|x\|$. A held-out image has no atom in the training set and
leaves the exact map at the first step at every $\sigma$, so $\reten$ is defined for training images
only.

\paragraph{The bound holds.}
We use the $1047$ training images of Section~\ref{sec:evidence}: $264$ from the duplication experiment,
$495$ from the overfitting experiment (five subsets of $100$, excluding five images that escape at the
lower end of the interval), and $288$ from the rarity experiment. No image has $\reten>1$. The largest
values are $0.78$ (duplication), $0.82$ (overfitting) and $0.65$ (rarity). The memorizing solution
therefore bounds every trained model we measure, and $\sigc$ can be read as a position between a model
that generalizes through the image and one that has stored it.

\paragraph{What moves, and what does not.}
Across the three interventions the ceiling $\sigemp$ stays within a factor $4.2$ per image and between
$4.9$ and $10.3$ in group median, whereas $\sigc$ varies by a factor $335$ (Table~\ref{tab:coef}).
Each intervention raises $\reten$, from $0.01$--$0.06$ for images that are not memorized to
$0.2$--$0.6$ for images that are. The training set fixes the scale on which retention is measured, and
training decides how far along it the model goes; this is why the overfitting sweep moves $\sigc$ by
two orders of magnitude while the exact denoiser of its subsets barely moves.

\begin{table}[t]
\centering\small
\caption{The three interventions on a common scale. $\sigemp$ is the critical scale of the exact
empirical denoiser of the training set of each model under the same escape test,
~\eqref{eq:sigemp}; $\reten=\sigc/\sigemp$ is the fraction of this solution that the network
implements. Group medians, training images only: the ceiling varies little, and the effect is carried
by $\reten$.}
\label{tab:coef}
\begin{tabular}{@{}llccc@{}}
\toprule
intervention & group & $\sigemp$ & $\sigc$ & $\reten$ \\
\midrule
duplication & $m=1$                     & $4.99$ & $0.142$ & $0.028$ \\
 & $m=8$                                & $5.60$ & $1.036$ & $0.194$ \\
 & $m=800$                              & $9.91$ & $1.431$ & $0.145$ \\
 & $m=800$, control model               & $9.91$ & $0.269$ & $0.024$ \\
\midrule
overfitting & $N=5\cdot10^4$            & $4.93$ & $0.056$ & $0.012$ \\
 & $N=2000$                             & $6.11$ & $0.243$ & $0.043$ \\
 & $N=100$                              & $7.84$ & $4.674$ & $0.598$ \\
\midrule
rarity & model not trained on the image & $7.68$ & $0.451$ & $0.058$ \\
 & CIFAR-10 added, $m=1$                & $7.28$ & $0.447$ & $0.076$ \\
 & colour MNIST added, $m=1$            & $6.85$ & $1.459$ & $0.182$ \\
 & colour MNIST added, $m=32$           & $10.30$ & $4.024$ & $0.432$ \\
\bottomrule
\end{tabular}\\[2pt]
{\footnotesize The rarity rows use $K=1$, $r=0.2$ and the others the settings of
Appendix~\ref{app:details}, so $\reten$ is comparable within a block but not across blocks.}
\end{table}

\paragraph{The two maps release an image for different reasons.}
On the exact denoiser a training image is a fixed point until the saddle-node, so its critical scale
does not depend on the iteration budget. On a network it does, and the dependence itself separates the
groups: at $m=800$ the median $\sigc$ is $1.431$ at both $K=50$ and $K=500$, whereas for $m=1$ and
held-out images it falls by factors of $165$ and $193$ over the same range. A memorized image is
retained because it is a fixed point of the network's map; an ordinary image is retained only
transiently, and a longer iteration releases it. Consequently $\reten$ compares two escape scales
measured the same way, not two maps: at a fixed budget it is a conservative reading of how much of the
memorizing solution the network implements, and for images without a basin it is an upper bound rather
than an estimate.

%======================================================================
\section{CIFAR-10: additional results}
\label{app:cifar}

This appendix reports additional results for the experiments of Section~\ref{sec:evidence}. Training and
evaluation details are given in Appendix~\ref{app:details}.

\subsection{Memorization from data duplication (Section\ref{sec:cifar:dup})}
\label{app:cifar:dup}

\paragraph{Copying and sample quality.}
With the copy criterion of \citet{carlini2023extracting}, $16\,605$ of $50\,000$ samples are copies.
The FID increases from $3.90$ of the control model  to $4.70$ for non  duplicated samples (when this are counted in the FID computation FID goes to $26.46$). Therefore aside for the planted memorized images the retained model still perform well on quality of generations.

\paragraph{Membership.}
$\sigc$ separates training images that appear once from held-out (test set) images with AUC $0.528$, i.e.\ it does
not detect membership in this case: this is due to the fact that not accounting for the memorized content the models have similar FID and fit well the CIFAR data distribution therefore there is few difference between training and test distribution. This is consistent with a model that copies neither group, and agrees with the
overfitting experiment at $N=5\cdot10^{4}$. Figure~\ref{fig:orbits3} illustrates this on one image
per group.

\begin{figure}[t]
\centering
\includegraphics[width=\linewidth]{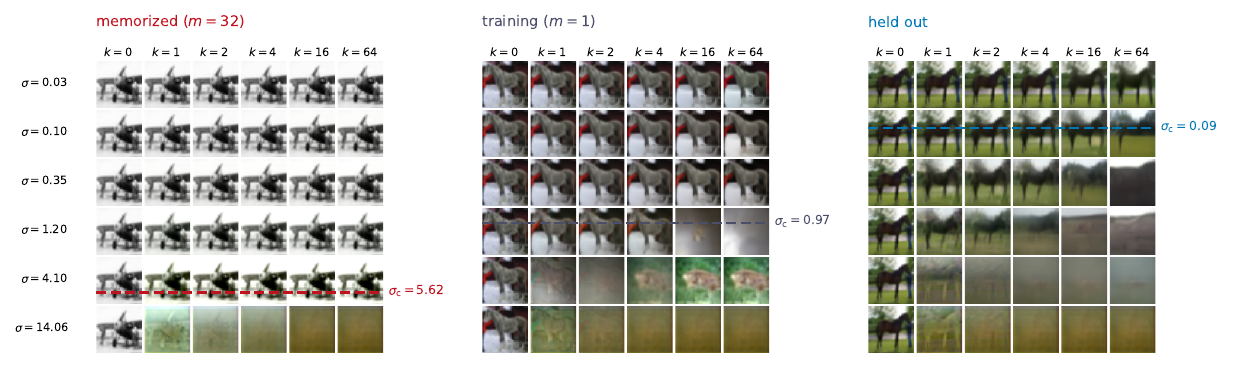}
\caption{\textbf{Orbits across scales, one image per group.} Orbits of the model trained with
duplicates, with $\sigma$ fixed along each row and the iteration index $k$ along each column, for a
memorized image ($m=800$), a training image with $m=1$ and a held-out image, each chosen at the median
$\sigc$ of its group. The dashed line marks the $\sigc$ of the image ($K=64$, $r=0.5$). The memorized
image is unchanged after $64$ iterations at $\sigma=1.43$, whereas the other two converge to a uniform
colour at $\sigma=0.24$, with nearly identical critical scales ($0.122$ and $0.118$).}
\label{fig:orbits3}
\end{figure}

\paragraph{Dependence on multiplicity.}
The median $\sigc$ relative to its value at $m=8$ is:
\begin{center}\small
\begin{tabular}{@{}lccccccc@{}}
\toprule
$m$ & $8$ & $25$ & $50$ & $100$ & $200$ & $400$ & $800$\\
\midrule
network                   & $1.00$ & $1.12$ & $1.18$ & $1.34$ & $1.24$ & $1.26$ & $1.38$\\
exact empirical denoiser  & $1.00$ & $1.11$ & $1.08$ & $1.23$ & $1.19$ & $1.29$ & $1.77$\\
\bottomrule
\end{tabular}
\end{center}
A hundredfold increase in the number of copies changes the critical scale by less than a factor $1.4$,
and the memorizing solution of the same training set saturates in the same way: mass enters the width
of a basin only logarithmically, so once an image has a basin, further copies cannot widen it much.
Copies still change what the sampler does, as the copy rate shows. Accordingly $\reten$ does not order
the duplicated images by multiplicity ($0.028$ at $m=1$, $0.19$ at $m=8$, $0.15$ at $m=800$;
$\rho=-0.01$), but it separates duplicated from non-duplicated images, and in the control model it lies
between $0.014$ and $0.037$ at every $m$.

\subsection{From memorization to generalization: memorization from overfitting}
\label{app:cifar:n}

\begin{figure}[ht]
\centering
\includegraphics[width=\linewidth]{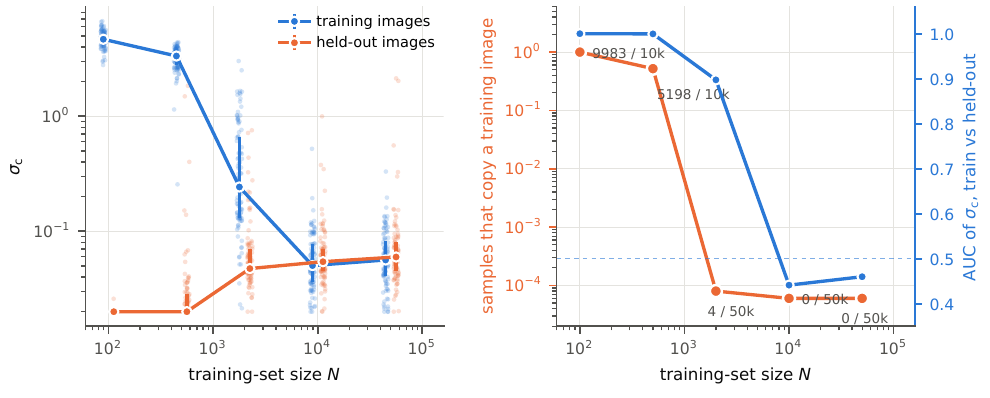}
\caption{\textbf{Overfitting: the same images across training-set sizes.} \emph{(a)} $\sigc$ of the
$100$ training images shared by all models (blue) and of $100$ held-out images (orange): medians with
interquartile range, individual images in the background. Images that escape at the lower end of the bisection interval are drawn
at that end, so the held-out medians for $N\le500$ are upper bounds. \emph{(b)} Fraction of generated
samples that are copies (orange, left axis; when no copy is found the point is drawn at the $95\%$ upper bound)
and AUC of $\sigc$ between training and held-out images (blue, right axis). At $N=2000$ sampling no
longer detects memorization while $\sigc$ still does.}
\label{fig:sigman}
\end{figure}

\paragraph{Setup.}
We study memorization induced by training on a small dataset, and compare $\sigc$ with memorization
measured by sampling. We train five models with identical architecture, optimiser and training
budget on nested subsets of CIFAR-10 of size $N\in\{100,\,500,\,2000,\,10^4,\,5\cdot10^4\}$ inducing a transition from a memorization (overfitting) to generalization phase  (\cite{fumero2026navigating,kadkhodaie2024generalization,pham2025memorization}). Since
only $N$ varies, a smaller $N$ means more passes over each image. The first $100$ images of the
permutation belong to every subset, so $\sigc$ is evaluated on the same $100$ training images in every
model, together with $100$ held-out test images. Independently of $\sigc$, we measure memorization as
the fraction of generated samples that are near-copies of a training image. We also evaluate the
public $50$M improved-diffusion model on the same images.

\paragraph{Results.}
The median $\sigc$ of the training images decreases by two orders of magnitude, from $4.67$ at $N=100$
to $0.056$ at $N=5\cdot10^4$, where it coincides with that of the held-out images ($0.060$;
Figure~\ref{fig:sigman}a). The decrease is not a power law: $\sigc$ is nearly constant up to
$N=500$, collapses between $N=500$ and $N=2000$, and is flat afterwards. At both ends $\sigc$ agrees
with sampling. For $N\le500$, $52$--$100\%$ of the samples are copies and $\sigc$ separates training
from held-out images with AUC $1.00$; for $N\ge10^4$ and for the public model, no sample is a copy and
$\sigc$ is at the held-out level. The two measures differ at $N=2000$: only $4$ of $5\cdot10^4$
samples are copies, yet $\sigc$ separates training from held-out images with AUC $0.897$
(Figure~\ref{fig:sigman}b).

Every image here appears once, so the training set alone changes little across the sweep: what
changes by two orders of magnitude is how closely the trained model reproduces the memorizing
solution at these images (Appendix~\ref{app:coef}).

\paragraph{Summary.}
Retention is lost later than generation: a model can keep its training images as attractors after
its sampler has stopped producing them. Since $\sigc$ measures retention, it detects memorization in a
regime in which the copy rate is already zero.

\paragraph{Comparison to full OpenAi checkpoint.}
The public model behaves like our models with $N\ge10^4$ (median $0.068$ against $0.072$, AUC $0.45$).
This is consistent with sampling: an exhaustive search finds no extractable training image, and the
number of copies remains at the held-out level up to $3\cdot10^5$ samples. The images that sampling
does recover from this model belong to clusters of near-duplicates in CIFAR-10, for which $\sigc$ is at
the held-out level.

\paragraph{Exact denoiser.}
Every image has $m=1$, so \eqref{eq:twoatom-cond} depends on $N$ only through $\log N$ and predicts
$\sigc(100)/\sigc(5\cdot10^4)=1.53$, against a measured ratio of $83$. The exact empirical denoiser of
each subset agrees with the formula: its critical scale decreases from $7.84$ to $4.93$, a factor of
$1.6$. Held-out images, which have no atom, leave the exact map immediately ($91$ to $100$ of $100$ at
every $N$), so their retention by the network is due to generalization.

\subsection{Memorization of rare samples (Section\ref{sec:cifar:out})}
\label{app:cifar:out}

\paragraph{Detection by sampling.}
No added image with $m\le4$ is copied more than $0.25$ times per $10^5$ samples for any source. At
$m=8$ the number of copies per image is between $1.3$ and $13.6$, and for $m\ge16$ every added image is
generated. For colour MNIST, $\sigc$ flags $8$ of $12$ images with a single copy and all $12$ with two.

\paragraph{Paired comparison.}
The median ratio between $\sigc$ in the model trained on the image and $\sigc$ in the other model
increases from $1.02$ (CIFAR-10, $m=1$) through $2.5$ (SVHN, $m=4$) and $3.0$ (MNIST, $m=1$) to between
$3.4$ and $8$ for $m\ge8$. For the two CIFAR sources with $m\le4$, the unpaired AUC against images of the
same source is between $0.36$ and $0.66$: these images cannot be detected in a single model, but they
are detected by the paired comparison. Figure~\ref{fig:crossover} shows three added images that are
never sampled, one per source, chosen at the median of their group.

\begin{figure}[t]
\centering
\includegraphics[width=\linewidth]{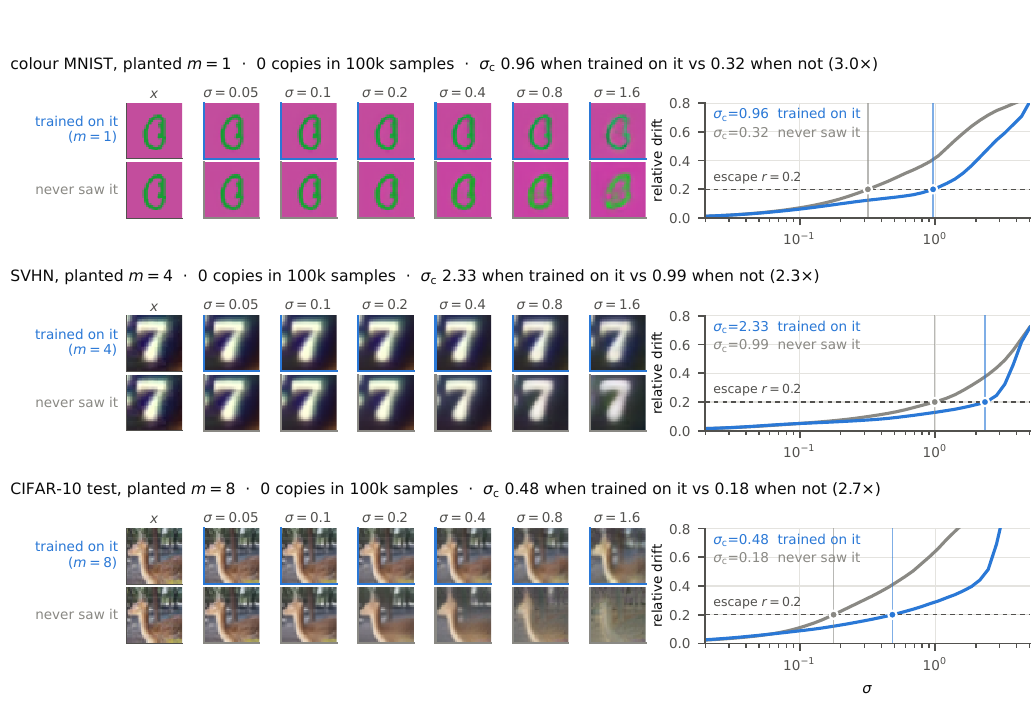}
\caption{\textbf{Paired comparison on three images that are never sampled.} One added image per
source, chosen at the median of its group. Each row shows $\M(x)$, one denoising step applied to the
clean image at noise level $\sigma$, in the model trained on the image (top, blue) and in the model not
trained on it (bottom, grey), together with the relative displacement $\|\M(x)-x\|/\|x\|$ whose
crossing of $r$ defines $\sigc$. The model not trained on the image moves it towards what it has
learned (blur, generic texture) at a much smaller $\sigma$.}
\label{fig:crossover}
\end{figure}

\paragraph{Isolation.}
Within a source and multiplicity, the increase of $\sigc$ caused by adding an image is correlated with
its distance to the nearest training image (Spearman $\rho=0.33$, $p=5\cdot10^{-5}$, $n=144$ for
$m\le4$). An isolated image does not share its basin at low noise with any training image, so a single
copy suffices to create it; a typical image is absorbed into the solution that the model generalizes
from its neighbours unless $m$ is large. The probability that the sampler reaches the basin from pure
noise, by contrast, grows with $m$ and not with isolation, which explains why the two detection
thresholds differ.

\paragraph{Distance and multiplicity in the trained models.}
The design spans five doublings of multiplicity and less than one doubling of distance, since the
sources were matched in norm and total variation, so the two factors are compared per doubling. A
joint fit over the $288$ added images gives a factor $4.25$ $[3.4,5.3]$ per doubling of kernel
distance against $1.49$ $[1.44,1.54]$ per doubling of multiplicity: in the trained models, as in the
memorizing solution (Appendix~\ref{app:exact:theory}), distance is the \emph{stronger} factor. The network
responds to multiplicity more than the exact denoiser does, which is expected, since each copy also adds training steps on the image. Pixel distances are correlated with $\|x\|$ (see \emph{Norm} below),

\paragraph{Agreement with the exact denoiser.}
Within a source and multiplicity the network orders the images as the exact map does ($\rho=0.78$).
The ordering of the sources is not an effect of the norm: at equal $\|x\|$, $\reten$ is larger than for
CIFAR-100 by a factor $3.1$ for colour MNIST and $2.2$ for SVHN ($t=13.7$ and $9.8$), and by a factor
$1.13$ for CIFAR-10 ($t=1.5$).

\paragraph{Escape radius.}
The radius $r=0.02$ selected in advance on the public model is not suitable for this noise schedule:
the escape occurs at $\sigma\approx0.025$, below the informative range, and all AUCs except that of
colour MNIST are close to $0.5$. The results of Section~\ref{sec:cifar:out} use $r=0.2$, selected from a
sweep over $(K,r)$. The separation is stable for $r\in[0.1,0.5]$ and $K\in[1,20]$.
The paired design determines the direction of the effect independently of
this choice, since the same image is evaluated with the same $r$ in two models.

\section{Experimental details}
\label{app:details}

\paragraph{Noise-prediction models as denoisers.}
DDPM and Stable Diffusion corrupt the data as $x_t=\sqrt{\bar\alpha_t}\,x_0+\sqrt{1-\bar\alpha_t}\,\varepsilon$
at discrete steps $t$ and train $\epsilon_\theta(x_t,t)$ to predict $\varepsilon$. Dividing by
$\sqrt{\bar\alpha_t}$ gives $y=x_0+\sigma\varepsilon$ with $\sigma(t)=\sqrt{(1-\bar\alpha_t)/\bar\alpha_t}$,
and the denoiser is $\D(y)=y-\sigma\,\epsilon_\theta(\sqrt{\bar\alpha_t}\,y,t)$.

\paragraph{CIFAR-10 duplication (Section\ref{sec:cifar:dup}).}
The model with duplicates is trained with the improved-diffusion CIFAR-10 configuration
\citep{nichol2021improved}, using the model and diffusion definitions of the original implementation:
$50$M parameters, $T=4000$ steps with a cosine schedule, learned variances, dropout $0.3$, $128$
channels at three resolutions, learning rate $10^{-4}$, batch size $128$, EMA $0.9999$, no horizontal
flips, and $500$k steps. The control is the released checkpoint \texttt{cifar10\_uncond\_50M\_500K.pt}.
Eight images are duplicated at each $m\in\{2,8,25,50,100,200,400,800\}$ in addition to the $50\,000$
training images, giving $62\,616$ training examples. The multiplicities are chosen so that $m=800$
corresponds to a per-image mass of $1.28\%$. Images in the lowest $5\%$ of nearest-neighbour distance
within CIFAR-10, i.e.\ the near-duplicates already present in the dataset, are excluded from both the
duplicated images and the $m=1$ images. $\sigc$ is computed on the $64$ duplicated images, $200$
training images and $200$ held-out images by bisection on $[0.02,12]$ with $r=0.5$. Copies are counted
with the criterion of \citet{carlini2023extracting} on $50\,000$ DDIM samples with $100$ steps, and FID
is computed against the training set with $50\,000$ samples per model. The two models are not exactly
matched: the held-out loss of the model with duplicates is $1$--$4.5\%$ higher than that of the
released checkpoint at every $t$, and its gap between training and held-out loss is about $5\%$, against
$0.8\%$ for the released model, although it sees each non-duplicated image $20\%$ less often. The
released checkpoint should therefore be regarded as a reference trained without duplicates rather than
as an exactly matched control.

\paragraph{CIFAR-10 overfitting (Section\ref{app:cifar:n}).}
Each model is a $25.8$M-parameter DDPM trained for $15$k steps at batch size $128$, so the number of
passes over the data decreases from $19\,200$ at $N=100$ to $38$ at $N=5\cdot10^4$. The subsets are
nested prefixes of a single permutation, so the subset with $N=100$ is contained in all others. $\sigc$
is computed by bisection on $[0.02,20]$ with $12$ steps, $K=120$ and $r=0.5$, on the first $100$ images
of the permutation and on $100$ CIFAR-10 test images. For the copy rate we draw DDIM samples with $100$
steps, $10^4$ per model for $N\le500$ and $5\cdot10^4$ otherwise. A sample is a copy when the ratio of
its distances to the nearest and second-nearest training images is below $1/3$; the same criterion
applied to a held-out set of the same size finds no copies at any $N$. The public model is the
improved-diffusion checkpoint \texttt{cifar10\_uncond\_50M\_500K}, evaluated on the same images with
the same settings.

\paragraph{CIFAR-10 rarity (Section\ref{sec:cifar:out}).}
The base set consists of $N=10^4$ images from the same permutation, and the models use the same
architecture and $15$k training steps. The added images bring each training set to $11\,512$
examples. Since $\sigc$ on CIFAR-10 depends strongly on $\|x\|$ and on typicality, unmatched images
would obtain high values for trivial reasons; the added images are therefore matched to the base set
in RMS norm and total variation, and images with a near-duplicate in the CIFAR-10 training set are
excluded. Colour MNIST cannot be matched exactly (RMS $0.56$ against $0.49$, total variation $0.038$
against $0.056$) and should be regarded as the extreme of the atypicality range. We draw $10^5$ DDIM
samples per model ($\eta=1$, $100$ steps). A sample is a copy when its nearest image among the CIFAR-10
training set and all images of the four sources is at RMS distance below $0.07$ on $[0,1]$, and below
$0.45$ times the mean distance to the next $50$ images; no image that was not added to a model was
copied in $2\cdot10^5$ samples. $\sigc$ is computed with $K=1$ and $r=0.2$ by bisection on the range of
the noise schedule up to $\sigma=20$, with $14$ steps.
\paragraph{Filtering the memorization dataset.}
For the Stable Diffusion experiments, we use the matching-verbatim (MV) and template-verbatim (TV)
examples released by \citet{webster2023reproducible}. The collection contains many duplicate or
near-duplicate entries, particularly among the TV examples, where the same memorized template
appears with different visual patterns. To prevent these repeated templates from being
overrepresented in our evaluation, we manually deduplicate the collection, retaining one MV copy
per image and one TV example per template.
We additionally remove a small number of mismatched image--caption pairs. The released collection
associates each memorized image with its training caption, but does not guarantee that sampling from
that caption reproduces that particular image. In several cases, the stored caption instead
reproduces a different memorized image. Because our method evaluates memorization at the level of
image--caption pairs, we exclude these mismatches. Figure~\ref{fig:dataset-filtering} shows
representative examples removed by both filtering steps.
\begin{figure}[t]
    \centering
    \setlength{\tabcolsep}{1.5pt}
    \renewcommand{\arraystretch}{1.05}

    \begin{minipage}[t]{0.48\linewidth}
        \vspace{0pt}
        \centering
        \begin{tabular}{@{}cccc@{}}
        \scriptsize Duplicated 1 & \scriptsize Duplicated 2  &
            \scriptsize Duplicated 3  & \scriptsize Duplicated 4  \\[1pt]
            \includegraphics[width=0.23\linewidth]{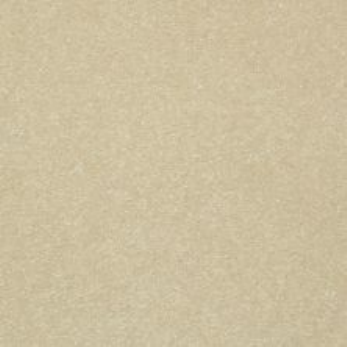} &
            \includegraphics[width=0.23\linewidth]{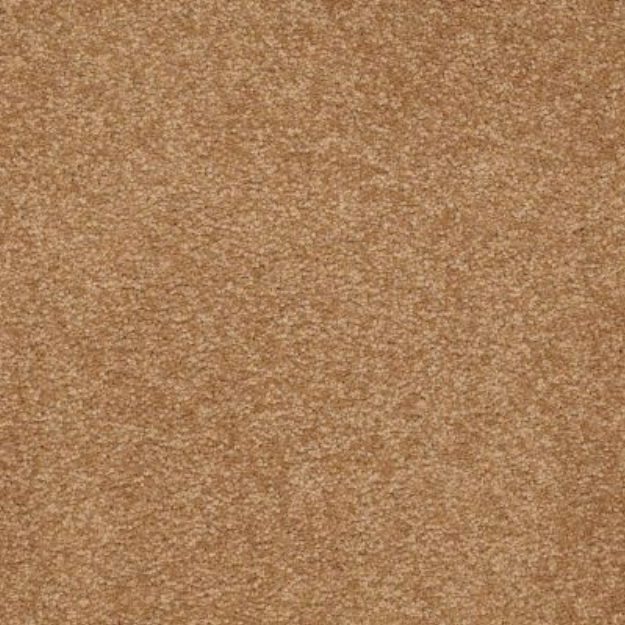} &
            \includegraphics[width=0.23\linewidth]{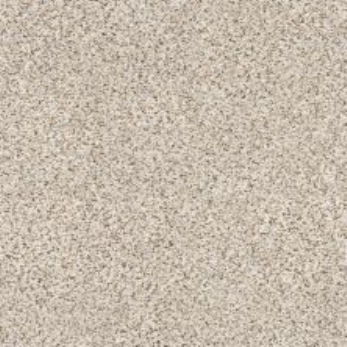} &
            \includegraphics[width=0.23\linewidth]{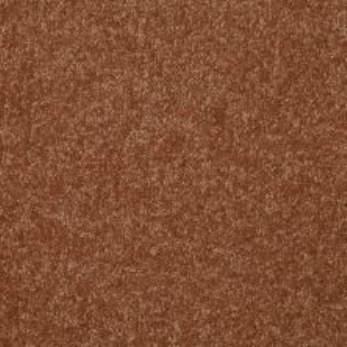} \\

            \includegraphics[width=0.23\linewidth]{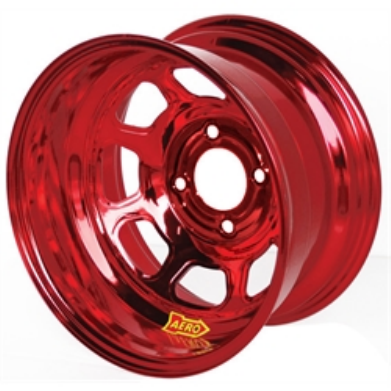} &
            \includegraphics[width=0.23\linewidth]{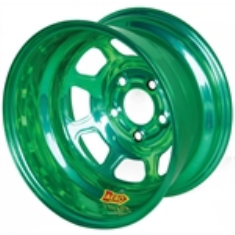} &
            \includegraphics[width=0.23\linewidth]{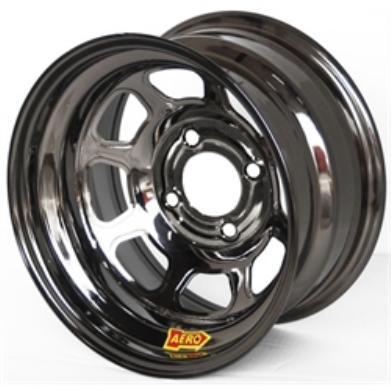} &
            \includegraphics[width=0.23\linewidth]{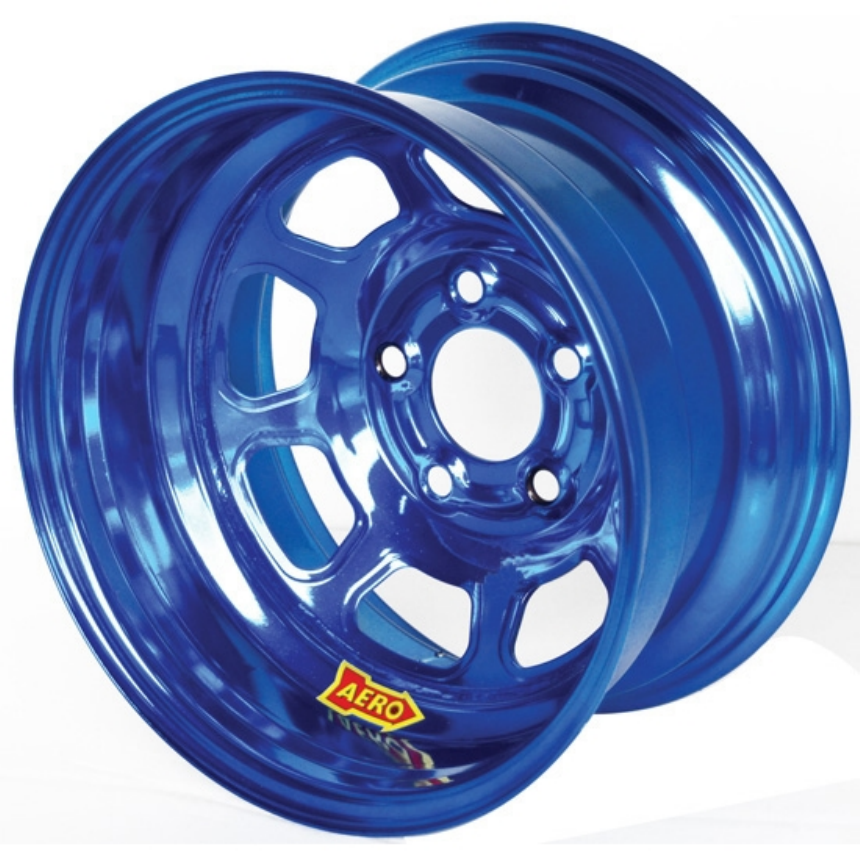} \\

            \includegraphics[width=0.23\linewidth]{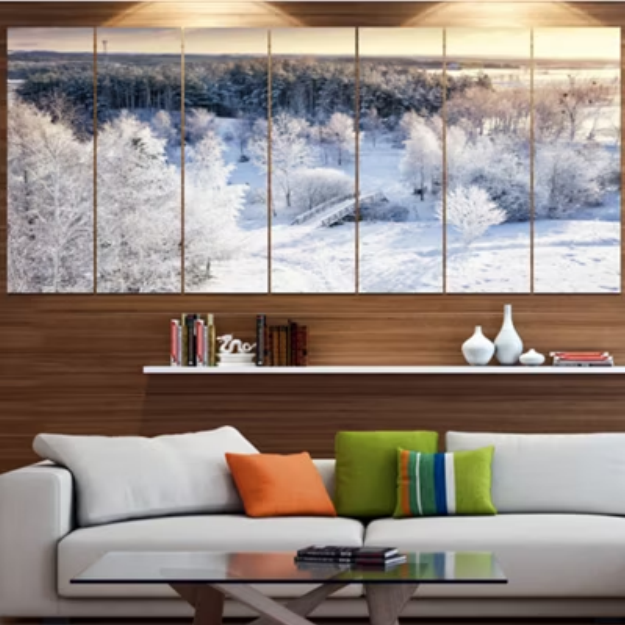} &
            \includegraphics[width=0.23\linewidth]{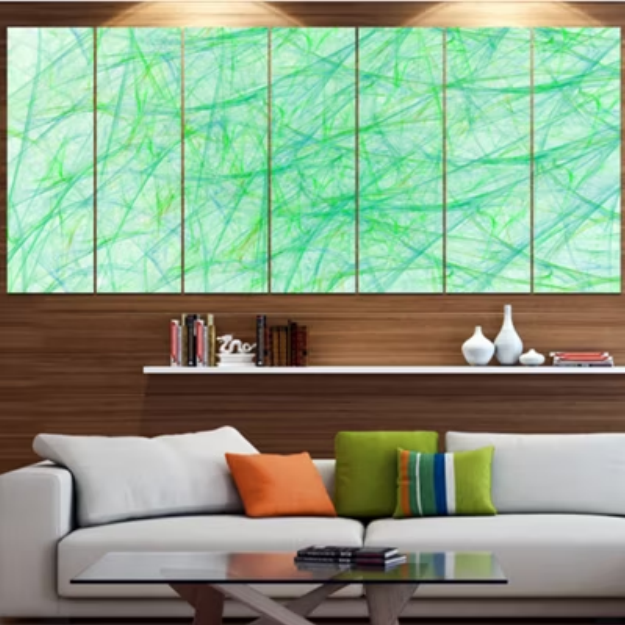} &
            \includegraphics[width=0.23\linewidth]{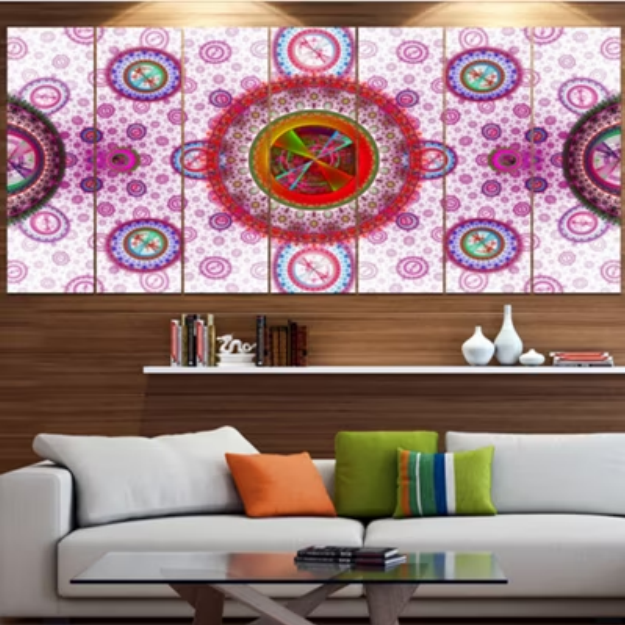} &
            \includegraphics[width=0.23\linewidth]{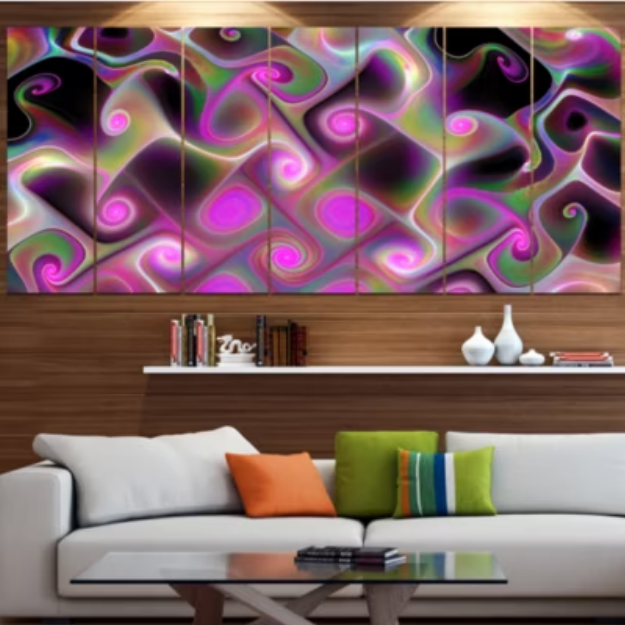}
        \end{tabular}

        \smallskip
        (a) Duplicate examples
    \end{minipage}
    \hfill
    \begin{minipage}[t]{0.48\linewidth}
        \vspace{0pt}
        \centering
        \begin{tabular}{@{}cccc@{}}
            \scriptsize Training & \scriptsize Generation 1 &
            \scriptsize Generation 2 & \scriptsize Generation 3 \\[1pt]

            \includegraphics[width=0.23\linewidth]{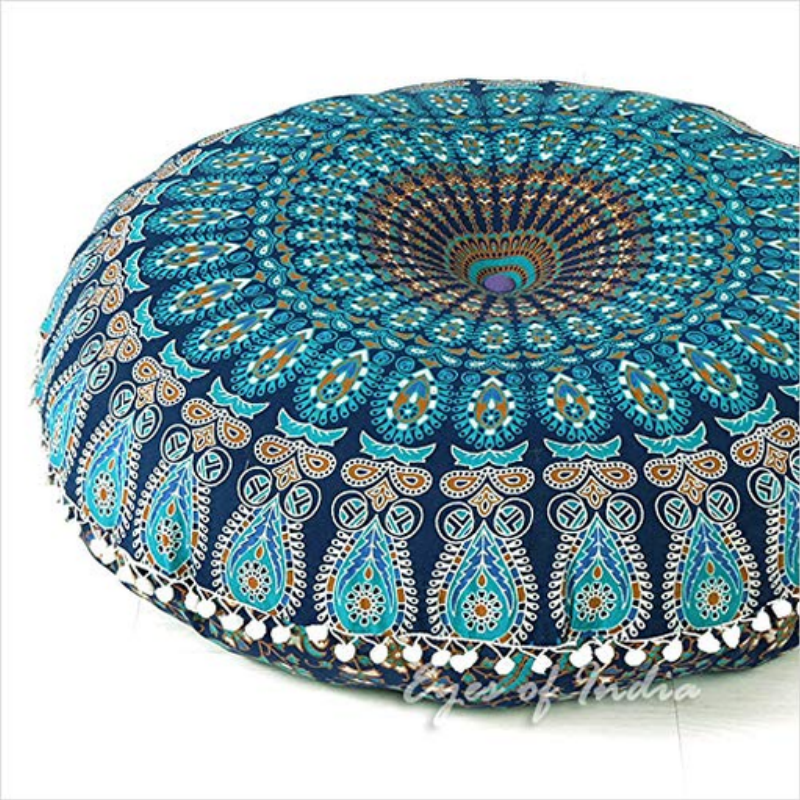} &
            \includegraphics[width=0.23\linewidth]{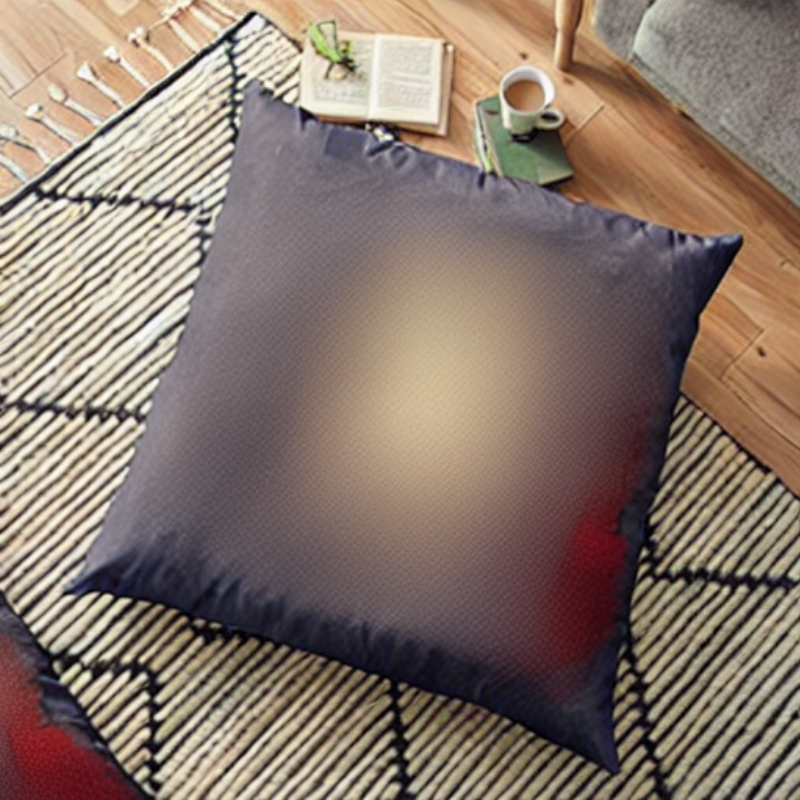} &
            \includegraphics[width=0.23\linewidth]{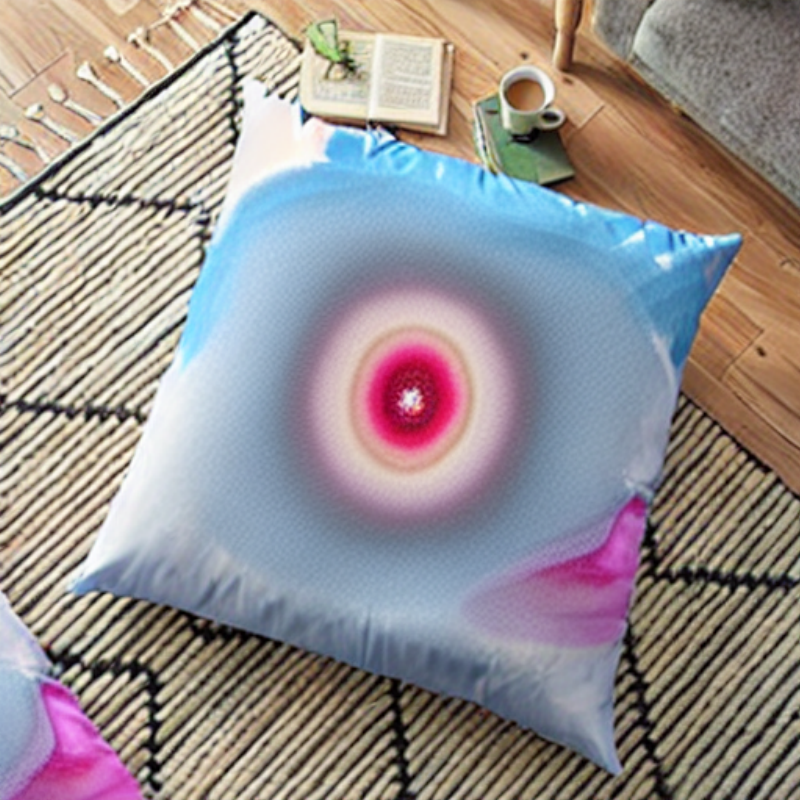} &
            \includegraphics[width=0.23\linewidth]{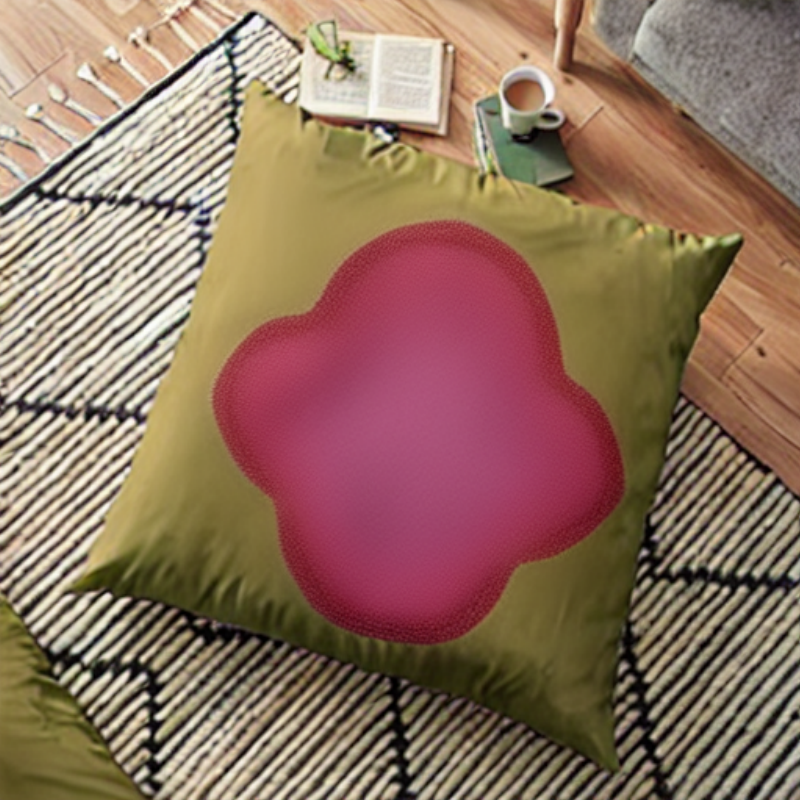} \\

            \includegraphics[width=0.23\linewidth]{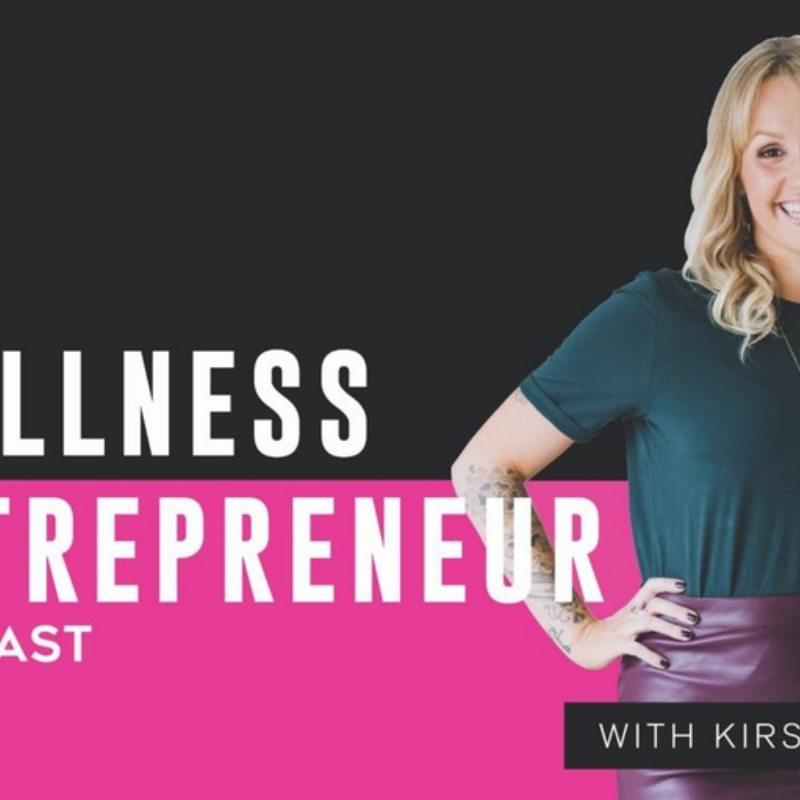} &
            \includegraphics[width=0.23\linewidth]{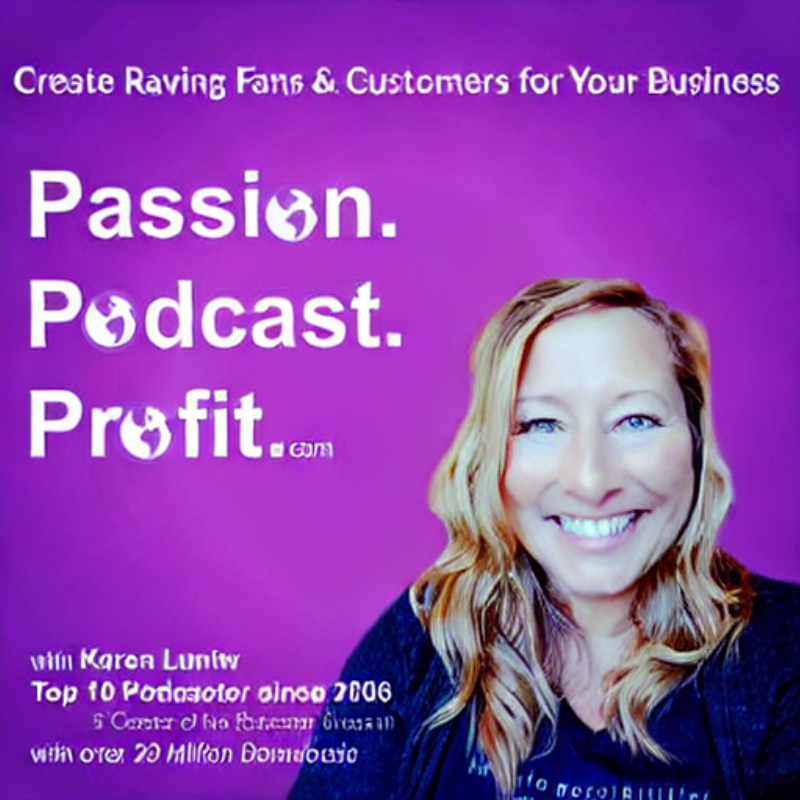} &
            \includegraphics[width=0.23\linewidth]{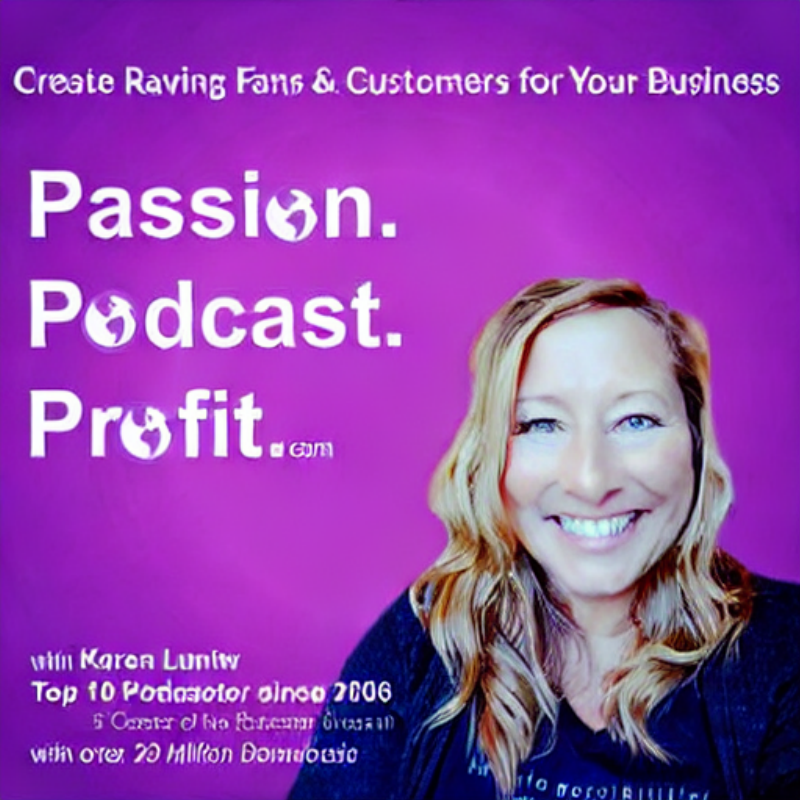} &
            \includegraphics[width=0.23\linewidth]{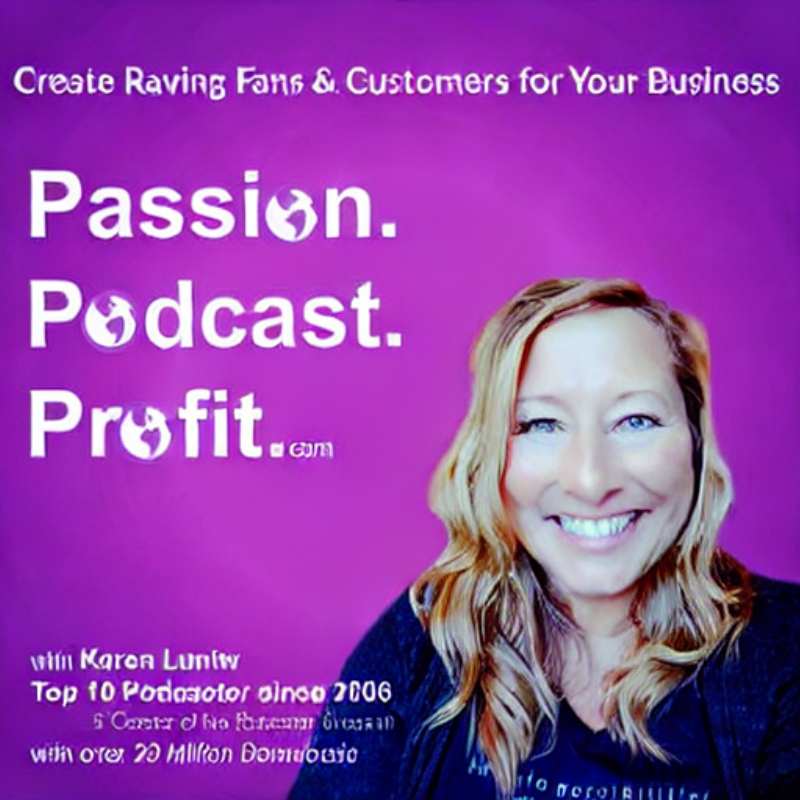} \\

            \includegraphics[width=0.23\linewidth]{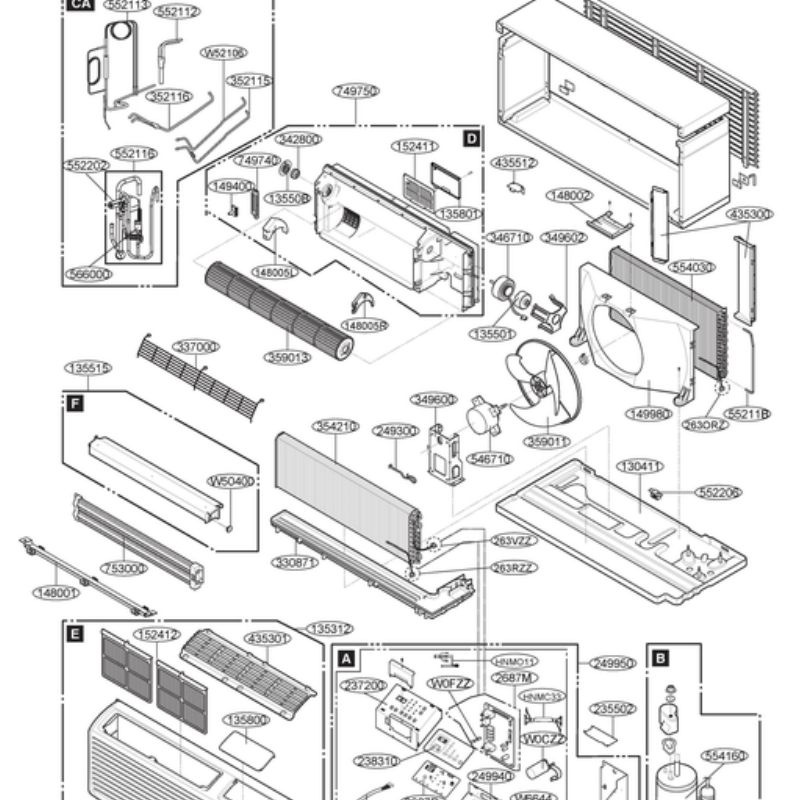} &
            \includegraphics[width=0.23\linewidth]{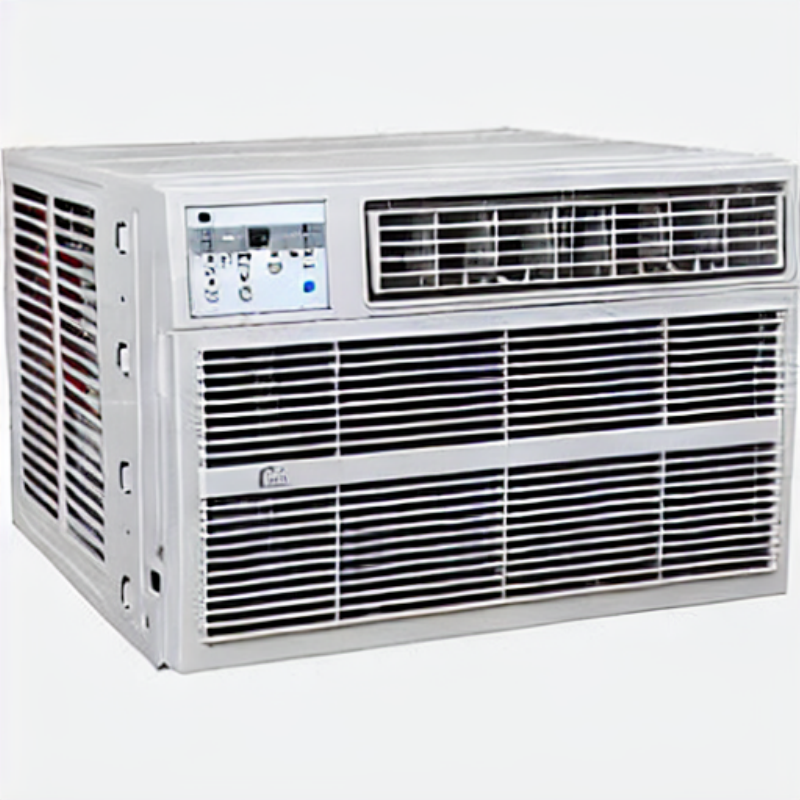} &
            \includegraphics[width=0.23\linewidth]{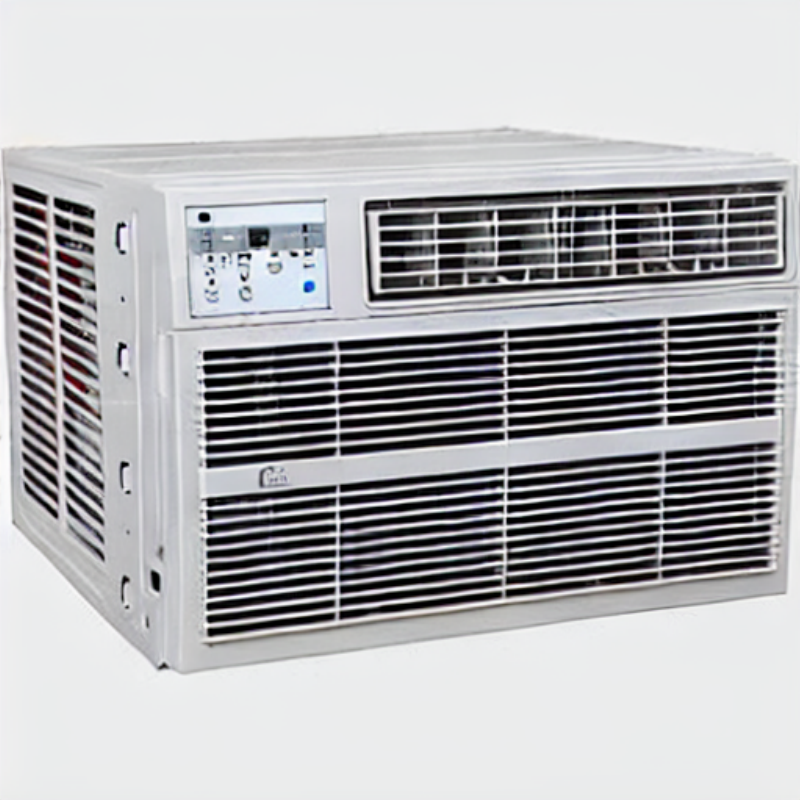} &
            \includegraphics[width=0.23\linewidth]{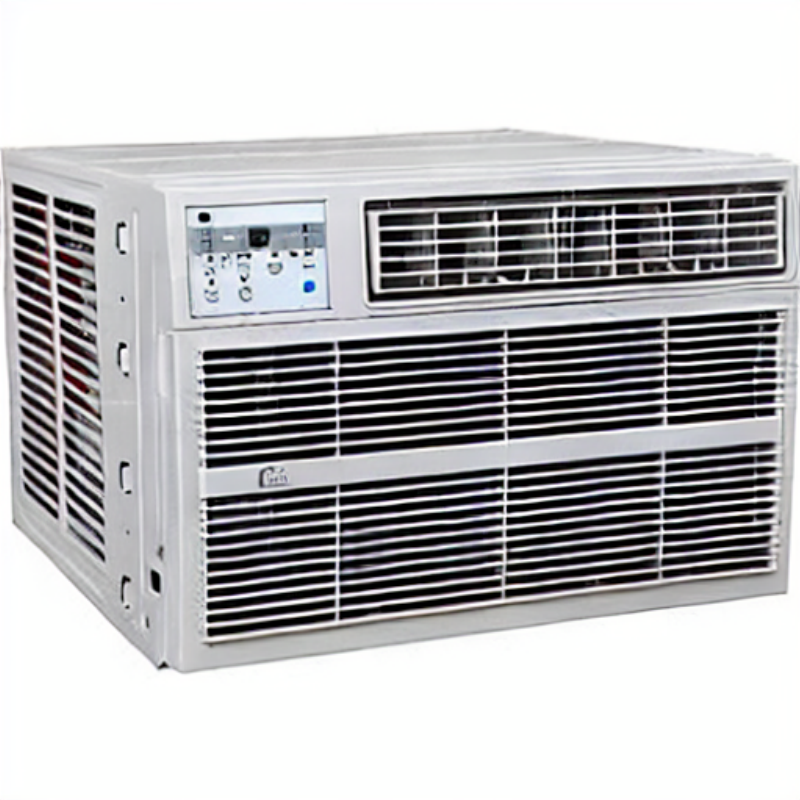}
        \end{tabular}

        \smallskip
        (b) Mismatched image--caption pairs
    \end{minipage}

    \caption{\textbf{Examples removed when filtering the memorization dataset.}
    \textbf{(a)} Each row shows a group of duplicate examples, of which we retain only one.
    \textbf{(b)} The first image in each row is the stored training image, followed by three
    generations from its training caption. We remove cases in which the caption does not reproduce
    the stored image.}
    \label{fig:dataset-filtering}
\end{figure}
\paragraph{Comparison to \cite{wen2024detecting}.} For the results reported in Table~\ref{tab:sd}, we report the best-performing configuration for both variants of \citet{wen2024detecting}: $n=32$ samples with first and first 10 steps for $\|\epsilon_c-\epsilon_\varnothing\| $ and $\|\epsilon_c(x_0)-\epsilon_\varnothing(x_0)\|$, respectively.
\paragraph{Local $\boldsymbol{|\Delta\log\sigma_c|}$.}
In order to retain the image local information, we apply the retention test of Definition~\ref{def:sigc} separately to each 4-dimensional patch $x_u$ of the latent space, $||M_\sigma^k(x_u)-x_u|| \leq r||x_u||$. We first apply the denoising map for $K$ iterations over a logarithmic grid of 24 noise levels. Then, we estimate the local $\sigma^u_c$ as the geometric midpoint between the noise level that first escapes the radius $r$ and the previous value in the grid, and define local $\lvert\Delta\log\sigma_c\rvert$ by averaging over locations. Unless otherwise stated, we use $K=16$ and $r=0.1$ for the local gap experiments.

\section{Stable Diffusion: additional experiments}
\label{app:sd}

\paragraph{Orbits across the $(\sigma,k)$ grid.}
Figure~\ref{fig:sigcpic} expands Figure~\ref{fig:teaser}: for the same memorized and control
pairs, every row is an orbit at one noise scale, so both the orbit (a row) and the end of the orbit
across scales (the last column) can be read off one grid. More qualitative examples are in Appendix \ref{app:examples}.

\begin{figure}[h]
\centering
\includegraphics[width=\linewidth]{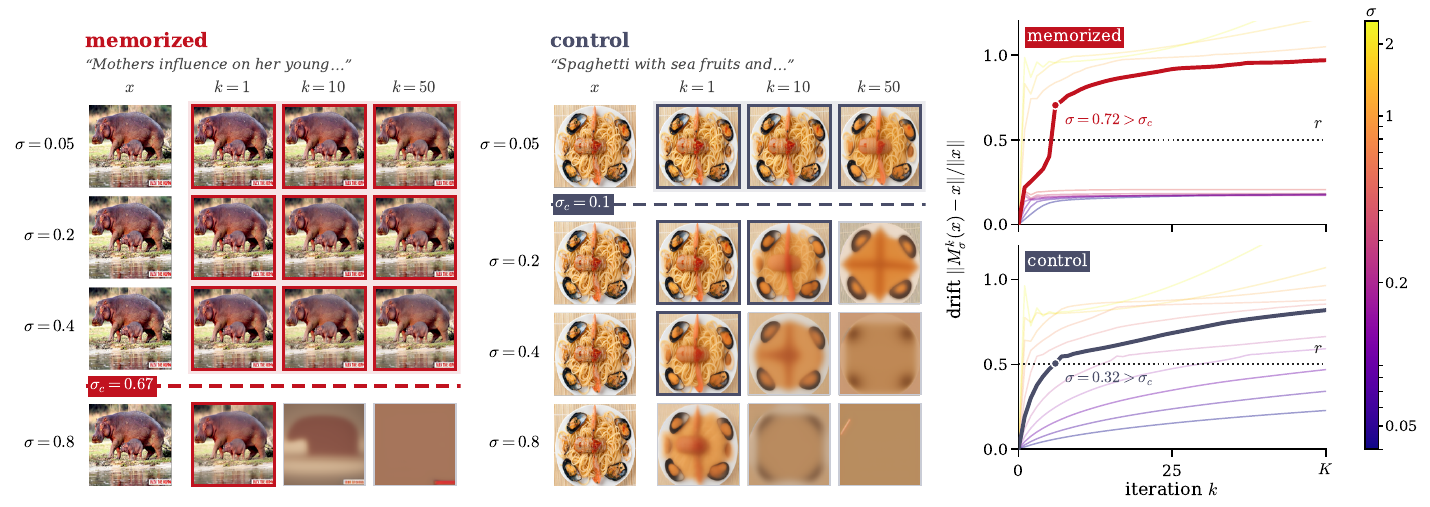}
\caption{The full $(\sigma,k)$ grid behind Figure~\ref{fig:teaser}: the critical scale on Stable Diffusion v1.4, for the same memorized caption--image pair (left)
and control pair (middle), each run with its own caption at guidance scale $1$. Each row iterates
$\M$ from the image $x$ at one noise scale $\sigma$; a framed iterate $\M^{k}(x)$ is still within
$r\|x\|$ of $x$. At small $\sigma$ the model keeps returning $x$, which therefore sits in a basin of
its own; at large $\sigma$ the orbit drifts away towards content shared with other data. The critical
scale $\sigc$ (dashed line) separates the two regimes: the memorized image withstands far more noise
than the control before it escapes. Right: the drift $\|\M^{k}(x)-x\|/\|x\|$ along the orbit, one
curve per scale coloured by $\sigma$; the thick curve is a scale above $\sigc$ and the dot marks
where it crosses $r$. Here $K=50$ and $r=0.5$.}
\label{fig:sigcpic}
\end{figure}

\subsection{Ablation}\label{app:sd-ablation} 
Figures~\ref{fig:ablation-K} and~\ref{fig:ablation-radius} assess sensitivity to the number of
iterations $K$ and escape radius $r$. Performance is stable around our primary choices:
$K=16$, with $r=0.25$ for global scores and $r=0.1$ for the local gap.

For $r=0.25$, the global scores remain broadly stable through $K=64$ and deteriorate at larger
values. With $r=0.1$, their performance is lower and begins to decline earlier; this sensitivity is
particularly pronounced for $\sigma_c$ on TV examples. By contrast, the local gap performs
similarly for both radii across most values of $K$, with only a modest decline beyond $K=128$.
Noticeably, mean $\sigma_c$ decreases with $K$, reducing separation between memorized and control
examples and causing the drop in detection performance at large $K$.

The radius ablation shows a similar pattern: global scores are more sensitive to $r$, especially on TV examples, whereas the local gap maintains strong performance as $r$ varies by an order of magnitude.

The main results and the $K$-ablation estimate $\sigma_c$ by log-space bisection. To evaluate many
radii efficiently, the radius ablation instead uses a logarithmic grid of 24 noise levels and takes
the geometric midpoint between the last retained level and the first escaped level. Overall, these results show that the reported performance is not specific to the selected
hyperparameters and that the local gap is particularly robust.
\begin{figure*}[t]
    \centering
    \setlength{\tabcolsep}{2pt}
    \renewcommand{\arraystretch}{1.05}
    \begin{tabular}{@{}>{\centering\arraybackslash}m{0.018\linewidth} c c c c c@{}}
        &
        \scriptsize $|\Delta\log\sigma_c|$ &
        \scriptsize Local gap &
        \scriptsize $\sigma_c$ &
        \scriptsize $\Delta\log\sigma_c$ &
        \scriptsize Mean $\sigma_c$ \\

        \rotatebox{90}{\scriptsize $r=0.1$} &
        \includegraphics[width=0.18\linewidth]
            {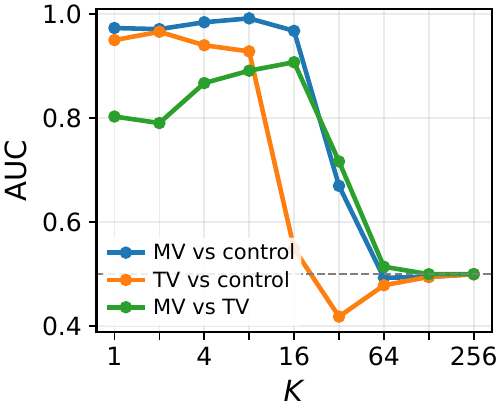} &
        \includegraphics[width=0.18\linewidth]
            {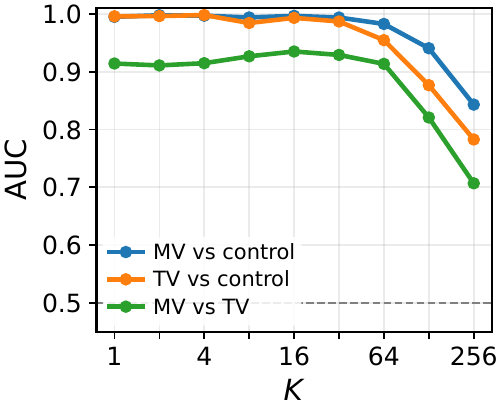} &
        \includegraphics[width=0.18\linewidth]
            {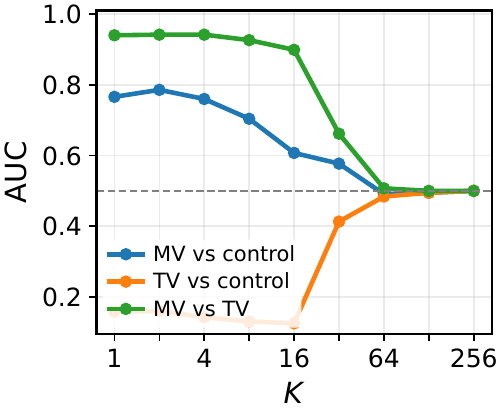} &
        \includegraphics[width=0.18\linewidth]
            {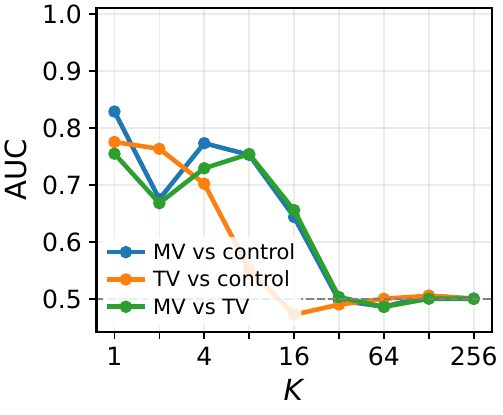} &
        \includegraphics[width=0.18\linewidth]
            {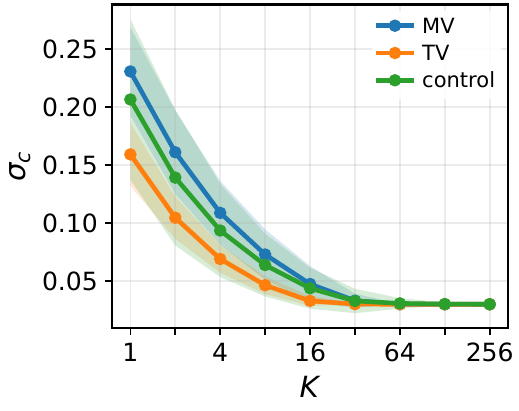} \\

        \rotatebox{90}{\scriptsize $r=0.25$} &
        \includegraphics[width=0.18\linewidth]
            {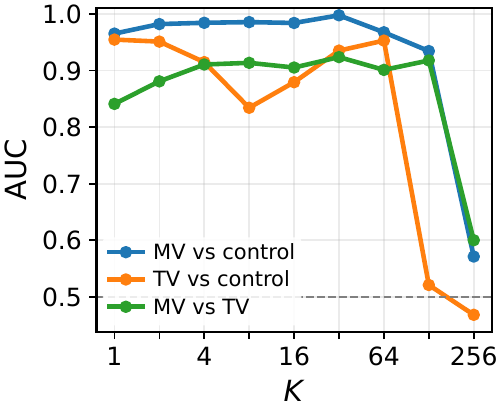} &
        \includegraphics[width=0.18\linewidth]
            {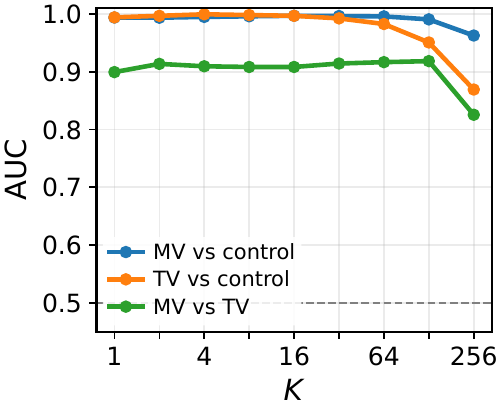} &
        \includegraphics[width=0.18\linewidth]
            {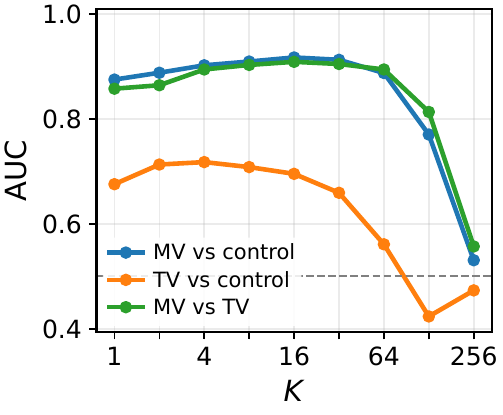} &
        \includegraphics[width=0.18\linewidth]
            {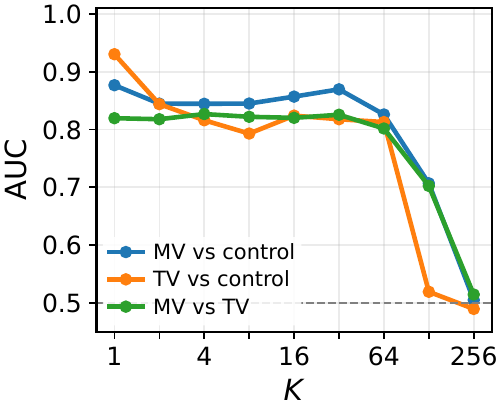} &
        \includegraphics[width=0.18\linewidth]
            {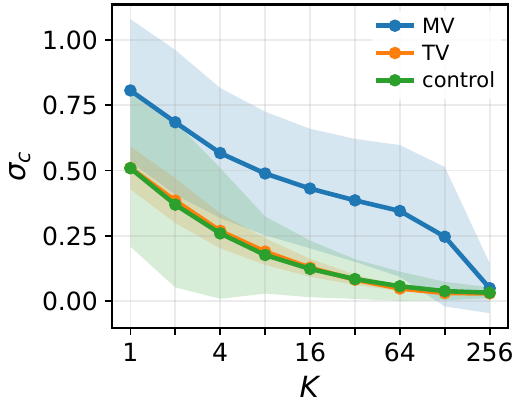}
    \end{tabular}

    \caption{\textbf{Ablation on the number of iterations $\boldsymbol{K}$.}
    Detection performance as a function of $K$ for escape radii $r=0.1$ and $r=0.25$.
    The first four columns report AUC for the different critical-scale scores; the final
    column reports the mean critical scale (solid line) $\pm$ one standard deviation (shaded region). Performance is stable around the primary choice $K=16, r=0.25$.}
    \label{fig:ablation-K}
\end{figure*}
\begin{figure*}[t]
    \centering
\begin{tabular}{@{}c c@{}}
    \scriptsize $\sigma_c$ &
    \scriptsize $\Delta\log\sigma_c$ \\

    \includegraphics[width=0.45\linewidth]
        {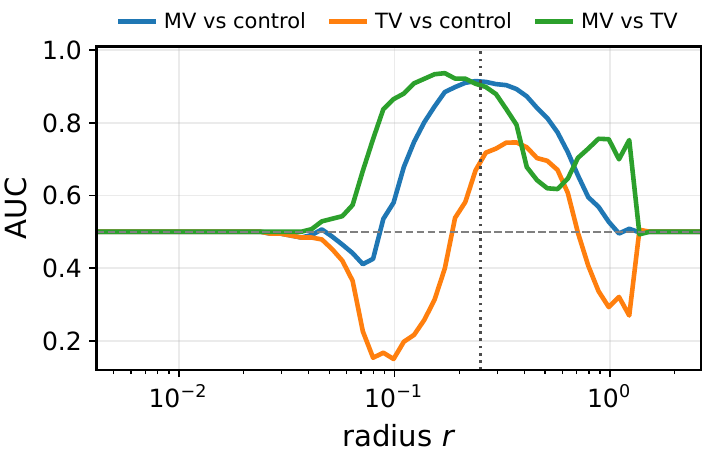} &
    \includegraphics[width=0.45\linewidth]
        {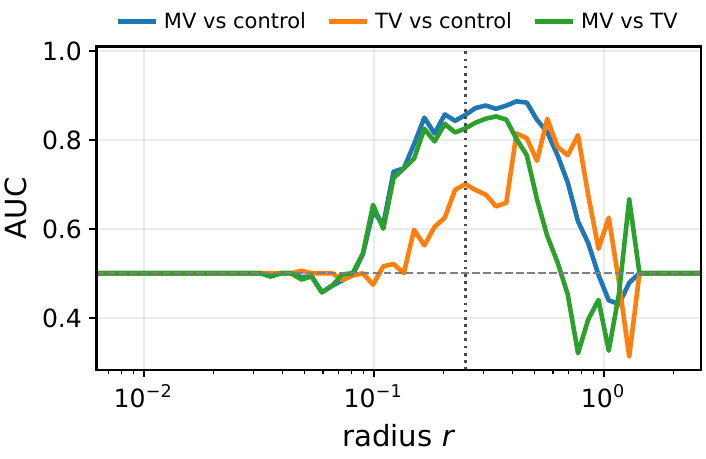} \\

    \scriptsize $|\Delta\log\sigma_c|$ &
    \scriptsize Local $|\Delta\log\sigma_c|$ \\

    \includegraphics[width=0.45\linewidth]
        {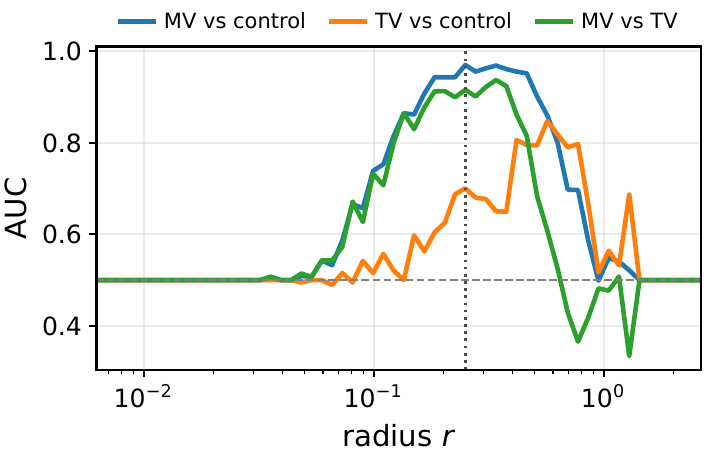} &
    \includegraphics[width=0.45\linewidth]
        {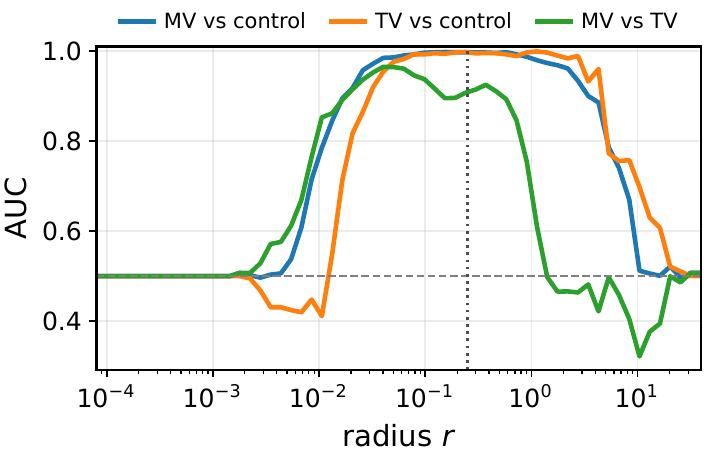}
\end{tabular}
    \caption{\textbf{Ablation on the escape radius \boldsymbol{$r$}.}
    Detection performance (AUC) as a function of $r$ using $K=16$. For this ablation,
    $\sigma_c$ is approximated over a logarithmic grid of 24 noise levels rather than
    estimated by log-space bisection. Performance is stable around the primary choice $K=16, r=0.25$ (global measures), $r=0.1$ (local $|\Delta\log\sigma_c|$).}
    \label{fig:ablation-radius}
\end{figure*}
\subsection{Further experimental results}\label{app:hist}
Figure~\ref{fig:sd-histograms} shows the score distributions for the main detection tasks using the
best critical-scale metric in each setting. For separating memorized examples from controls, we use
the local $|\Delta\log\sigma_c|$ with $K=16$ and $r=0.1$. For distinguishing MV and TV pairs from
their caption-swapped counterparts, we use the signed $\Delta\log\sigma_c$ with $K=16$ and
$r=0.25$.

The caption-swap distributions illustrate why the sign of the gap matters. Most swapped pairs have
a negative gap, indicating that an unrelated caption weakens retention relative to the unconditional
branch; this behavior is especially frequent for TV examples. Taking the absolute value folds these
negative scores onto the positive axis, obscuring the separation between original and swapped pairs.

\begin{figure*}[t]
    \centering

    \begin{subfigure}[t]{0.32\linewidth}
        \centering
        \includegraphics[width=\linewidth]
            {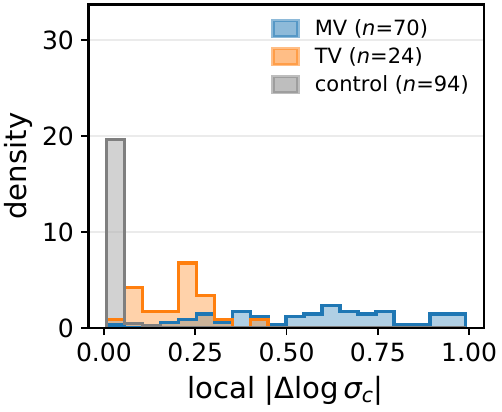}
        \label{fig:hist-mem-control}
    \end{subfigure}
    \hfill
    \begin{subfigure}[t]{0.32\linewidth}
        \centering
        \includegraphics[width=\linewidth]
            {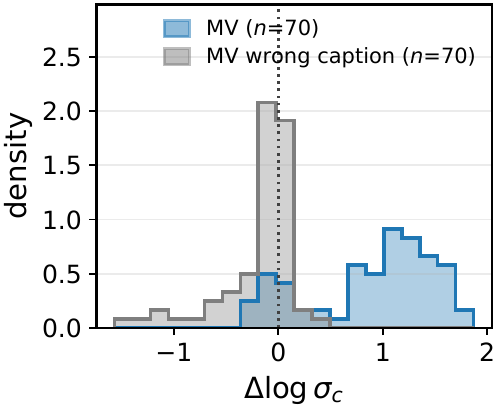}
        \label{fig:hist-mv-swap}
    \end{subfigure}
    \hfill
    \begin{subfigure}[t]{0.32\linewidth}
        \centering
        \includegraphics[width=\linewidth]
            {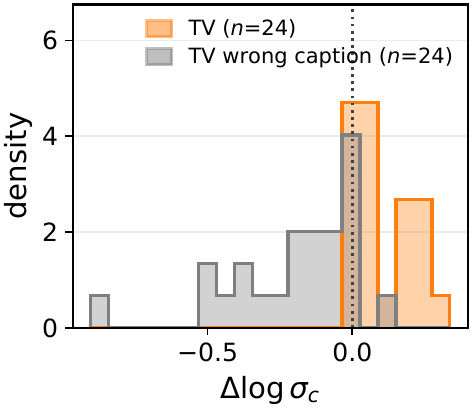}
        \label{fig:hist-tv-swap}
    \end{subfigure}
    \caption{\textbf{Critical-scale score distributions across evaluation groups.} Histograms of the scores used to separate the different groups.
    \textit{Left:} Local $|\Delta\log\sigma_c|$ for MV, TV, and control examples
    ($K=16$, $r=0.1$).
    \textit{Center-Right:} Signed $\Delta\log\sigma_c$ for MV and TV pairs with their
    original and swapped captions ($K=16$, $r=0.25$). Swapped pairs frequently have negative gaps.}
    \label{fig:sd-histograms}
\end{figure*}
\subsection{Detecting memorized regions: additional qualitative examples}\label{app:local-sigma-gap}
Figure~\ref{fig:additional-local-gap} presents four additional TV examples. As in
Figure~\ref{fig:sd-interpretability} (left), large local gaps align with content reproduced consistently across generations, whereas regions that vary tend to have gaps near zero. These examples further show that the local critical-scale gap captures the spatial extent of memorized content across different images.

\begin{figure*}[t]
    \centering
    \begin{subfigure}[t]{0.48\linewidth}
        \centering
        \includegraphics[width=\linewidth]
            {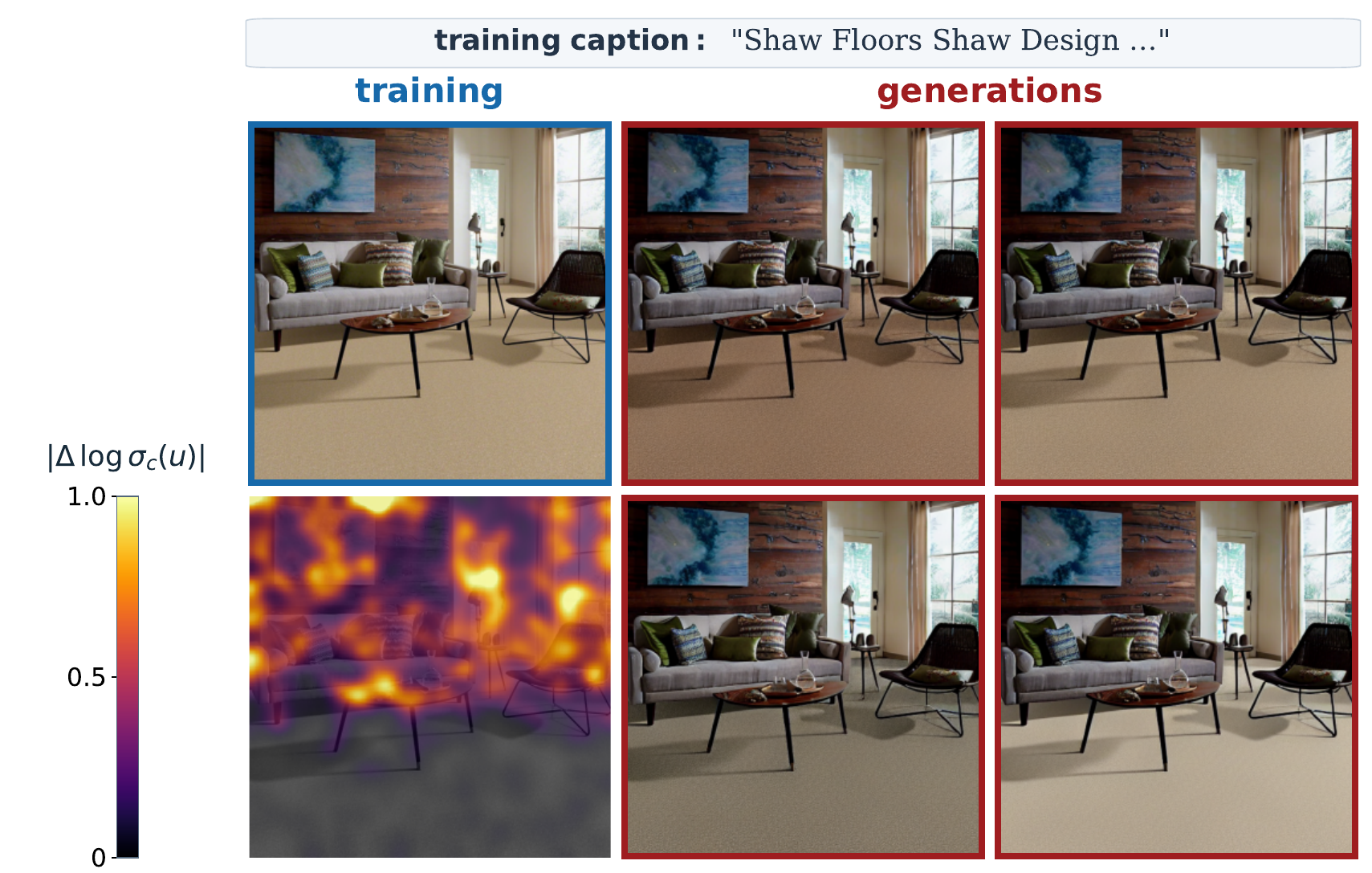}
    \end{subfigure}
    \hfill
    \begin{subfigure}[t]{0.48\linewidth}
        \centering
        \includegraphics[width=\linewidth]
            {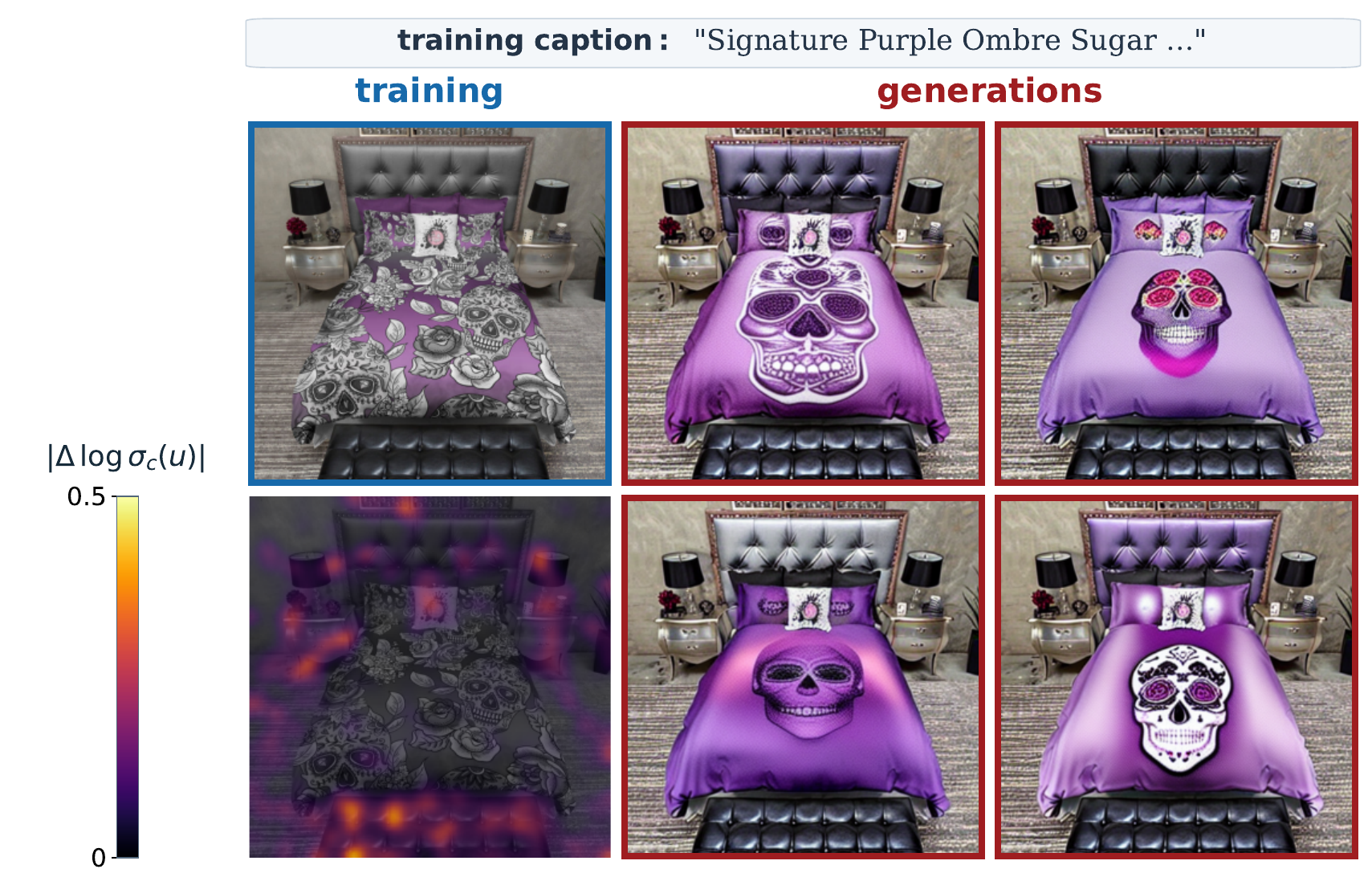}
    \end{subfigure}

    \vspace{4pt}

    \begin{subfigure}[t]{0.48\linewidth}
        \centering
        \includegraphics[width=\linewidth]
            {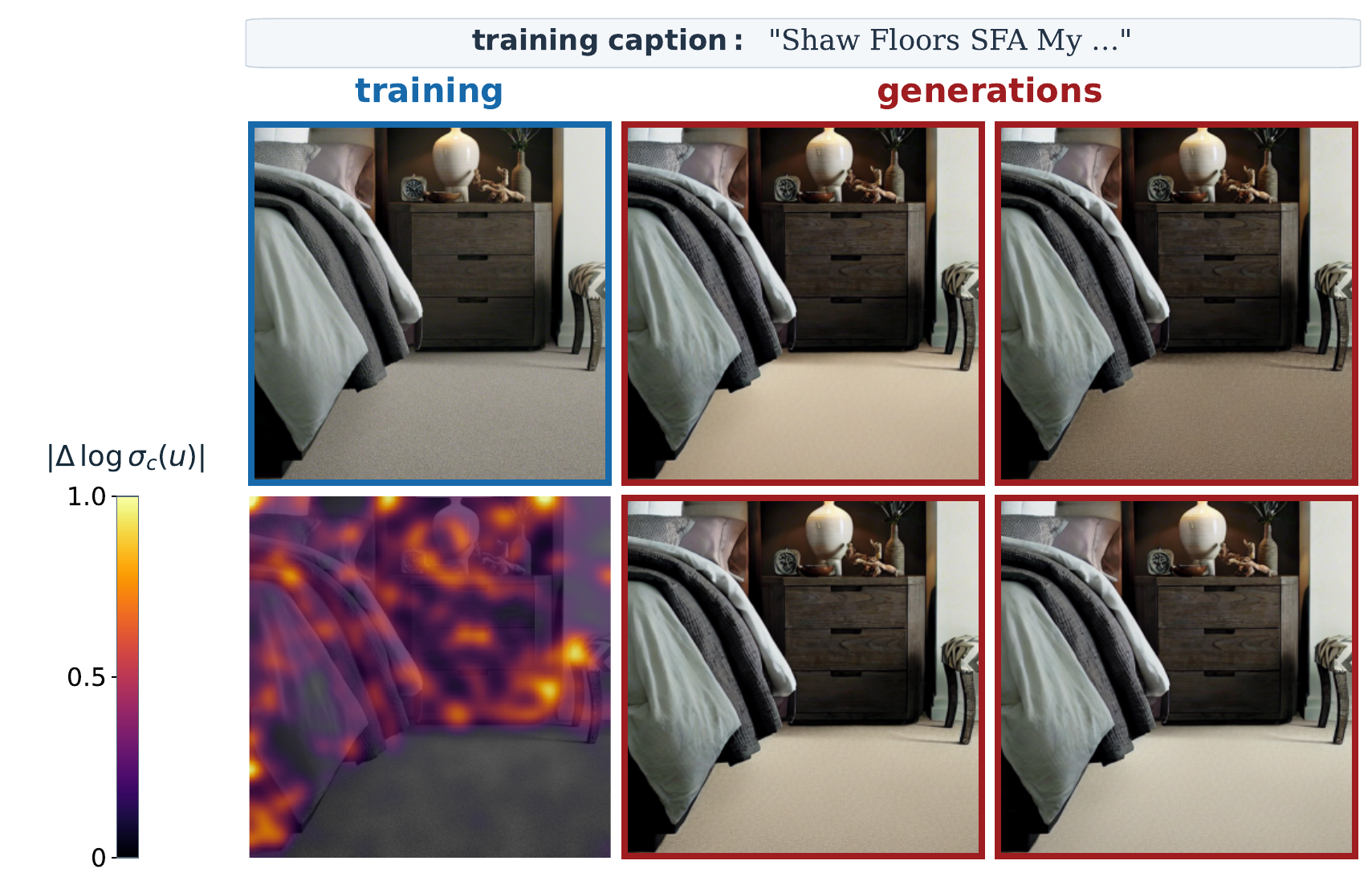}
    \end{subfigure}
    \hfill
    \begin{subfigure}[t]{0.48\linewidth}
        \centering
        \includegraphics[width=\linewidth]
            {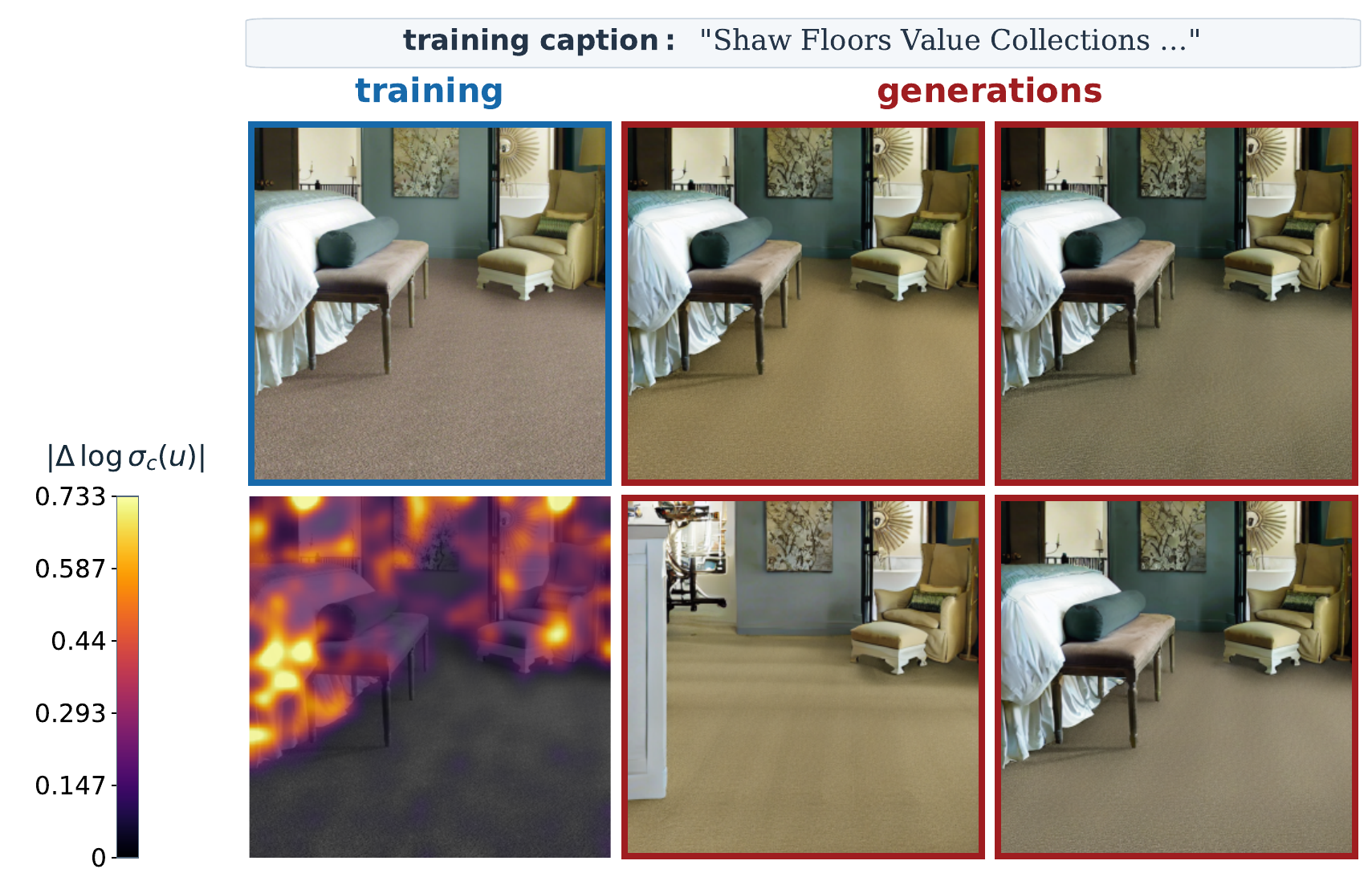}
    \end{subfigure}

    \caption{\textbf{Additional examples of localized memorization.}
    Each panel shows a TV training image, generations from its caption, and the corresponding
    local $|\Delta\log\sigma_c^u|$ map. Large gaps align with regions reproduced consistently
    across generations, whereas variable regions generally have gaps near zero.}
    \label{fig:additional-local-gap}
\end{figure*}
\subsection{Further analysis of mismatched caption gap}\label{app:capt-swap}
\paragraph{Ambiguity near zero.}
As shown in Figure~\ref{fig:sd-interpretability} (right), the gap between mismatched-caption and
unconditional critical scales is most informative when strongly negative. Values near zero,
however, can represent two different regimes. Both branches may retain the image, indicating unconditional memorization and therefore yielding successful inpainting, or neither branch may retain it, indicating the absence of unconditional memorization. Because the gap measures relative retention, it cannot distinguish between these cases. Figure~\ref{fig:near-zero-gap} (top) illustrates the first regime and Figure~\ref{fig:near-zero-gap} (bottom) the second: despite having similar gaps, the examples have markedly
different critical scales and inpainting performance.
\paragraph{Further examples with negative gap.}
Figure~\ref{fig:negative-gap-examples} provides two further examples with negative gaps. Both show
evidence of unconditional memorization, but the example with the more negative gap yields more
faithful reconstructions, consistent with stronger relative retention by the unconditional branch.

\begin{figure*}[t]
    \centering
    \begin{subfigure}[t]{0.98\linewidth}
        \centering
        \includegraphics[width=\linewidth]
            {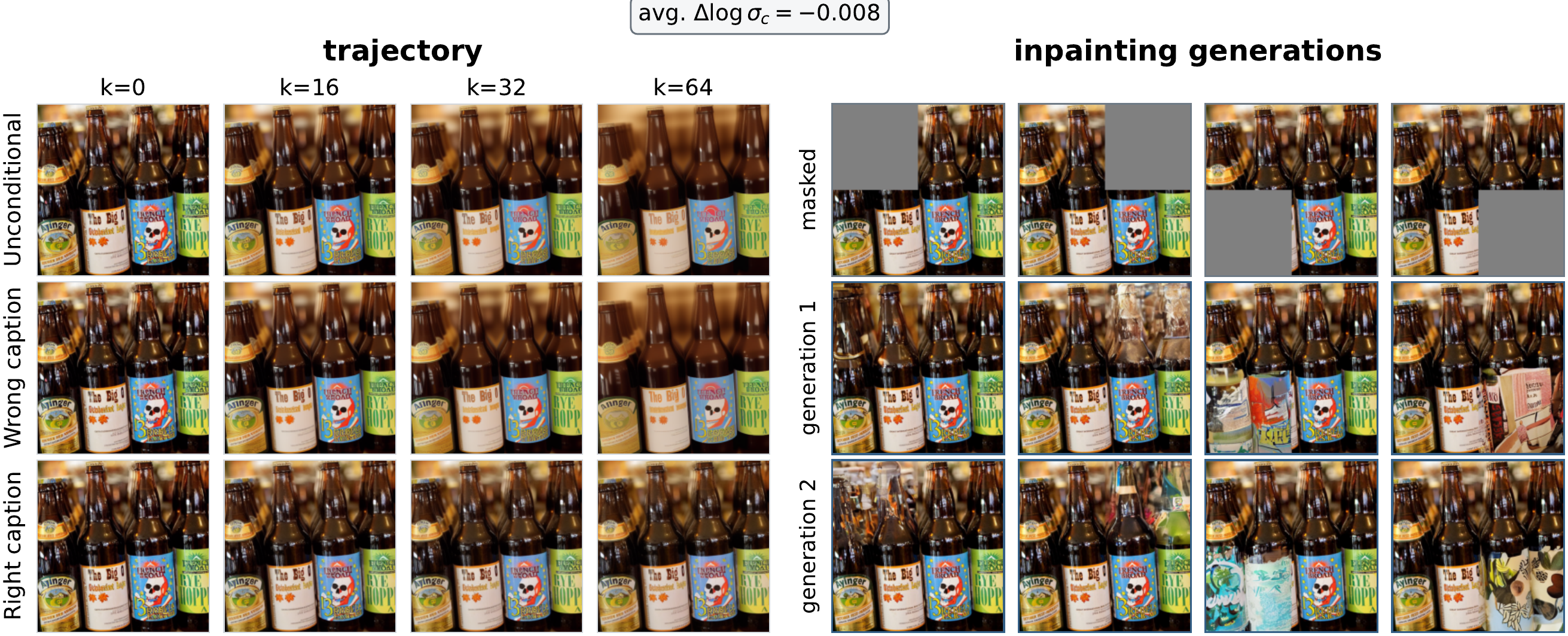}
    \end{subfigure}
    \hfill
    \vspace{1cm}
    \begin{subfigure}[t]{0.98\linewidth}
        \centering
        \includegraphics[width=\linewidth]
            {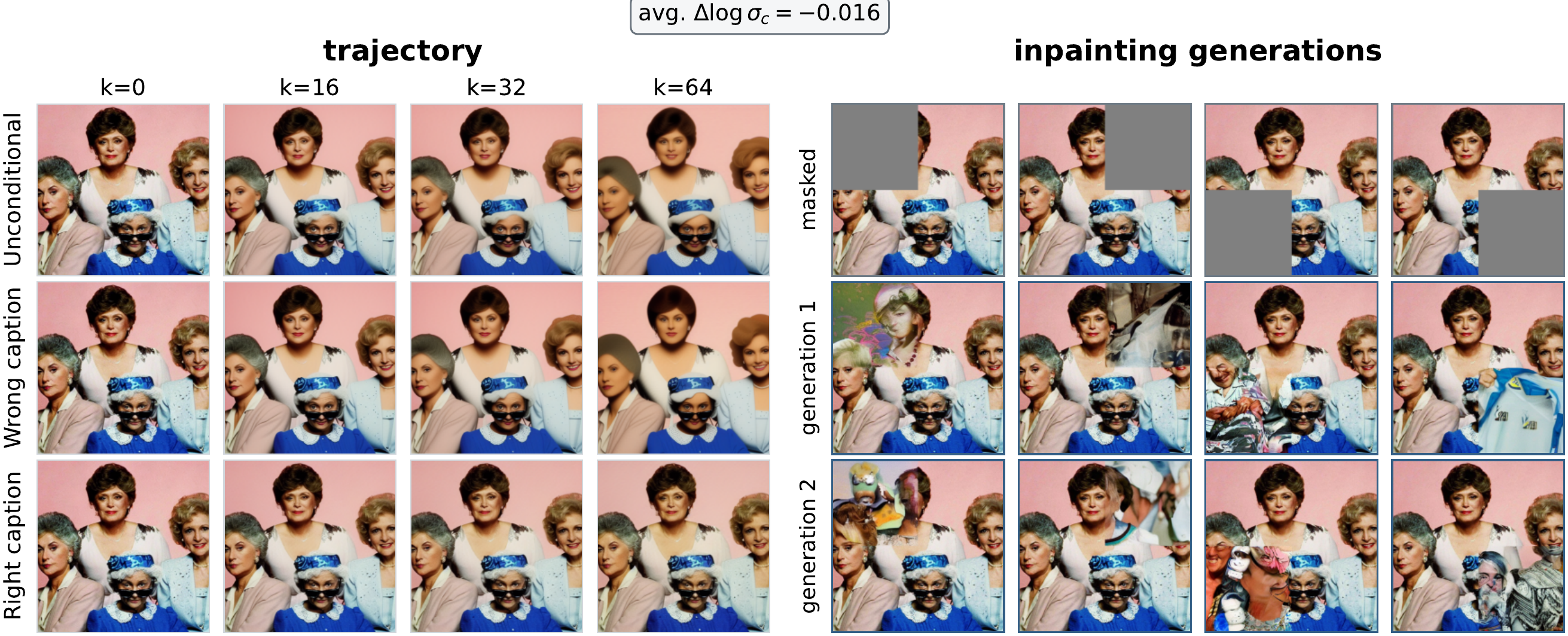}
    \end{subfigure}

    \caption{\textbf{Near-zero gaps are ambiguous.}
    The two examples have similar mismatched-caption gaps but different absolute retention.
    \textit{Top:} Both branches retain the image, and unconditional reconstruction succeeds.
    \textit{Bottom:} Neither branch retains the image, and unconditional reconstruction fails.}
    \label{fig:near-zero-gap}
\end{figure*}
\begin{figure*}[t]
    \centering
    \begin{subfigure}[t]{0.98\linewidth}
        \centering
        \includegraphics[width=\linewidth]
            {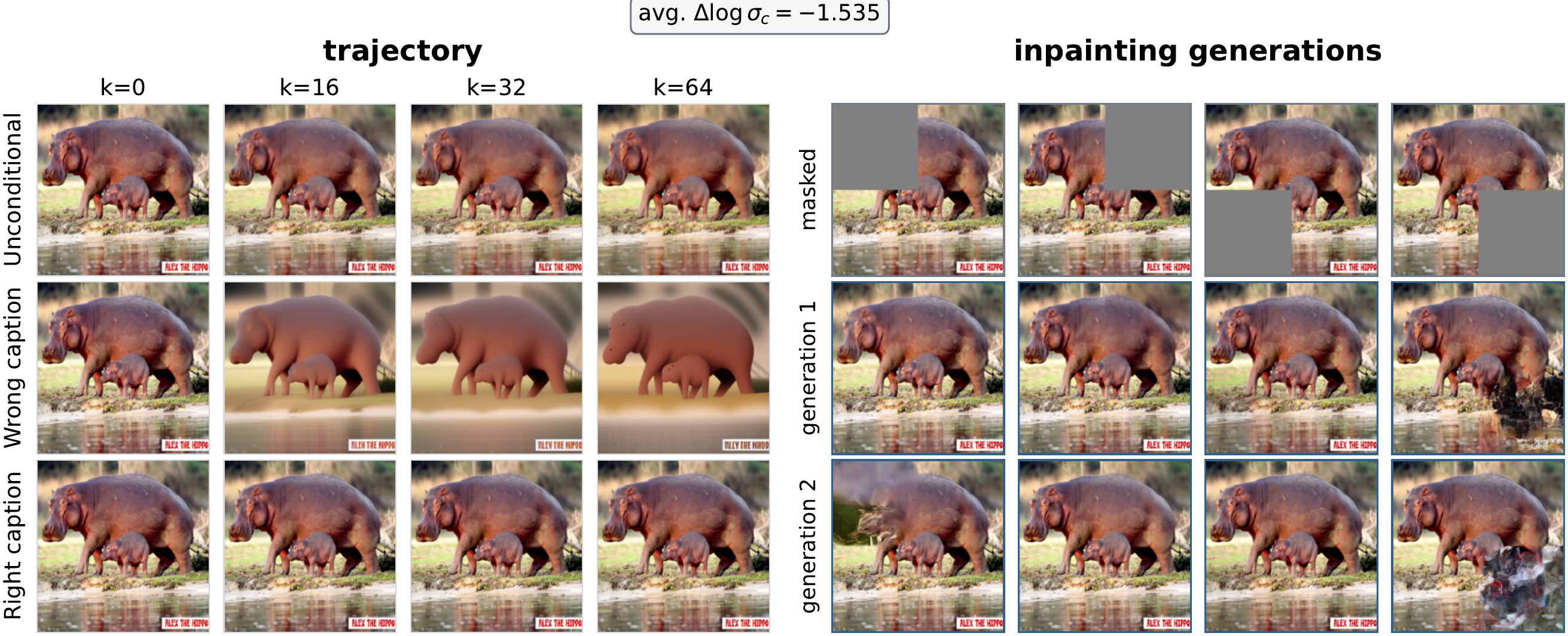}
    \end{subfigure}
    \hfill
    \begin{subfigure}[t]{0.98\linewidth}
        \centering
        \includegraphics[width=\linewidth]
            {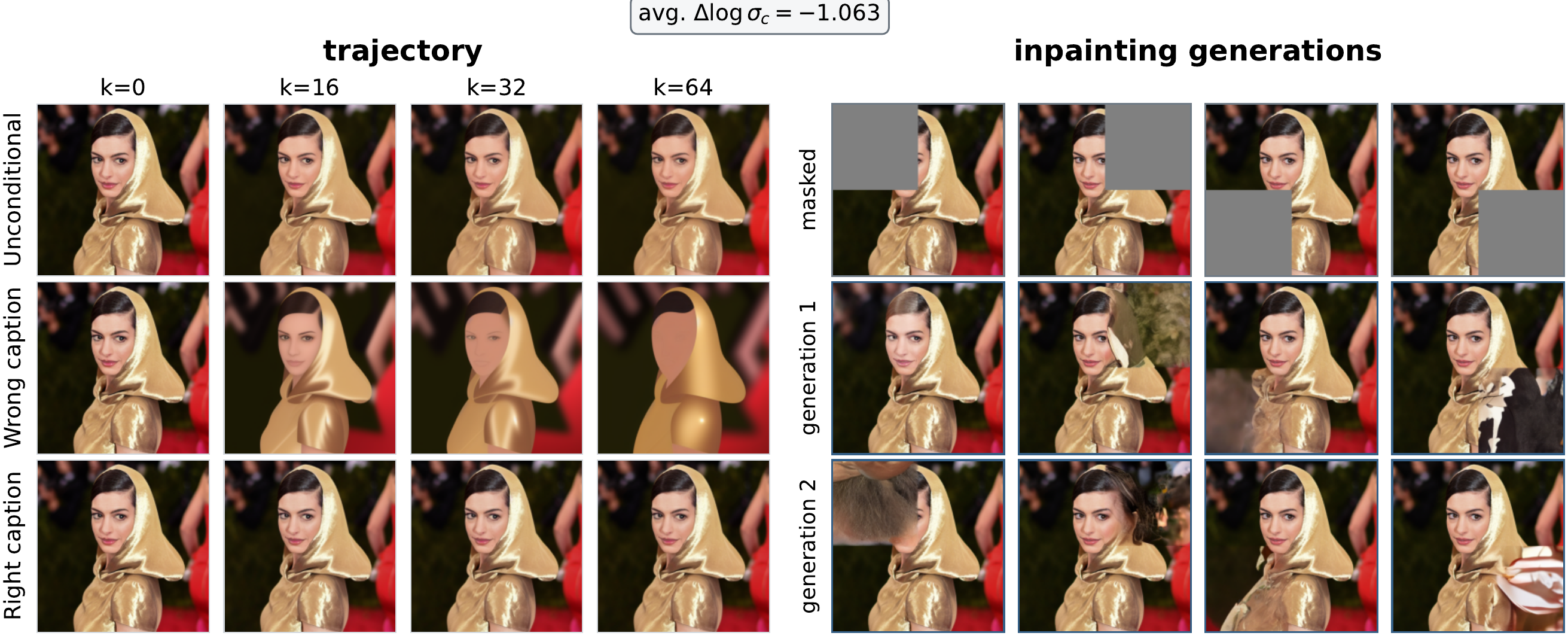}
    \end{subfigure}

    \caption{\textbf{Additional examples with negative mismatched-caption gaps.}
    Both examples exhibit unconditional retention and successful inpainting. The more negative gap in
the top example corresponds to more faithful reconstructions than the less negative gap in the
bottom example.}
    \label{fig:negative-gap-examples}
\end{figure*}
\subsection{Further qualitative examples}\label{app:examples}
Figures~\ref{fig:traj-mv-control}--\ref{fig:traj-mv-tv} provide additional comparisons of fixed-scale
trajectories and critical scales for matching-verbatim (MV), template-verbatim (TV), and control
training images. Overall, retention is strongest for MV examples, followed by TV examples and then
randomly selected controls: MV images remain close to their initial states for more iterations and
at larger noise scales than TV images, which in turn persist longer than controls.

The TV trajectories also reveal spatial differences in retention. In the bedroom example of
Figures~\ref{fig:traj-mv-tv} and \ref{fig:sweep}, the surrounding room remains stable after the bedsheet pattern has been smoothed, consistent with the memorized and variable regions identified in
Figure~\ref{fig:additional-local-gap}.  Similarly, in Figure~\ref{fig:traj-tv-control}, the
non-memorized curtain pattern disappears first, whereas the memorized plant and chair persist
longer. These examples provide qualitative evidence that fixed-scale dynamics retain memorized
regions longer than non-memorized ones, further motivating a local measure for partially memorized
images.

Figure~\ref{fig:caption-gap} illustrates how the caption gap distinguishes memorized image--caption
pairs from controls. For the MV example, conditioning on the training caption retains the image to
larger noise scales than the unconditional branch, producing a large positive
$\Delta\log\sigma_c$. For the control, the two branches escape at similar scales, and the gap is
close to zero.

Figure~\ref{fig:sweep} further illustrates how retention varies across memorization types and image
regions. In the upper comparison, the MV example has the largest $\sigma_c$, followed by the TV
example and then the control. The lower comparison shows two TV examples: the image with memorized
content occupying a larger region has the larger $\sigma_c$. Within both TV images, non-memorized
content---the floor in the first and the bedsheet pattern in the second---disappears at smaller noise
scales than the memorized regions.

\begin{figure*}[t]
    \centering
    \begin{subfigure}[t]{0.98\linewidth}
        \centering
        \includegraphics[width=\linewidth]
            {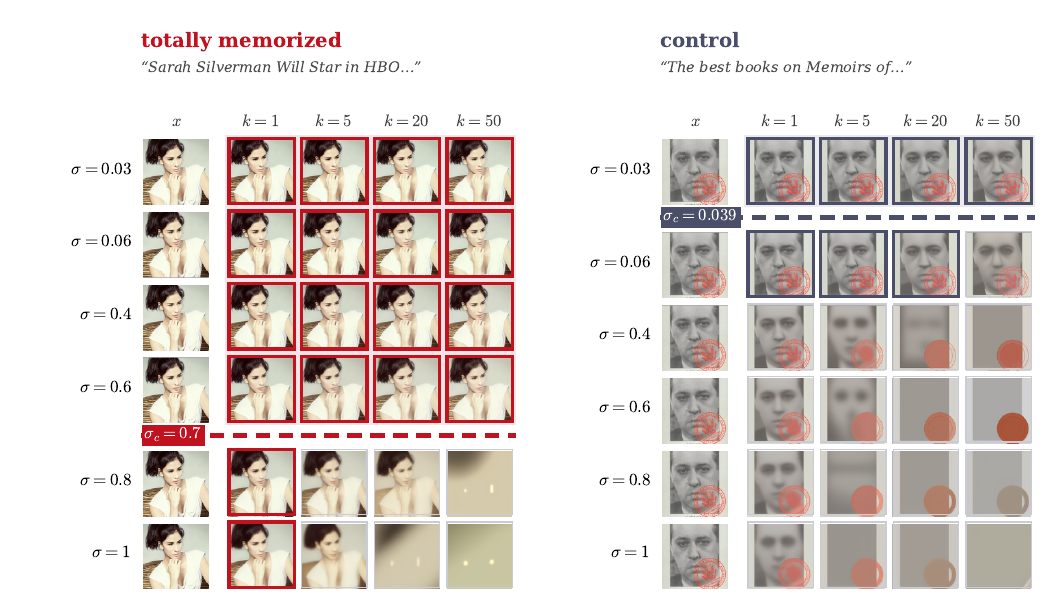}
    \end{subfigure}

   \begin{subfigure}[t]{\linewidth}
        \hspace*{0.1\linewidth}%
        \includegraphics[width=0.9\linewidth]
            {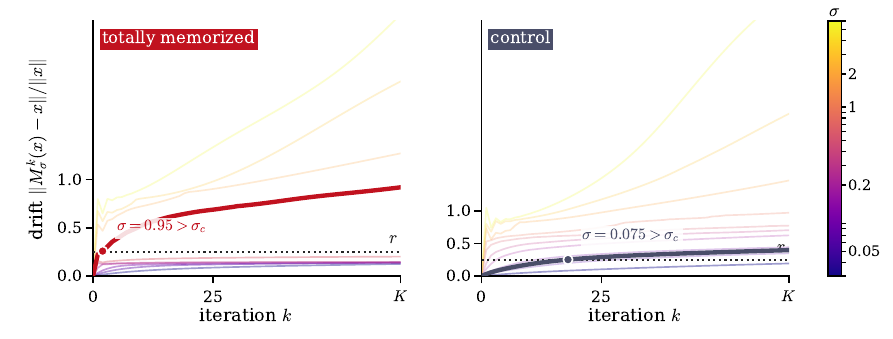}
    \end{subfigure}

    \caption{\textbf{Trajectory and critical scale MV vs control.} \textit{Top:} critical scale on for a totally memorized caption--image pair (\textit{left}) and a control pair (\textit{right}), each run with its own caption at guidance scale $1$. Each row iterates
$\M$ from the image $x$ at one noise scale $\sigma$; a framed iterate $\M^{k}(x)$ is still within
$r\|x\|$ of $x$. At small $\sigma$ the model keeps returning $x$, which therefore sits in a basin of
its own; at large $\sigma$ the orbit drifts away towards content shared with other data. The critical
scale $\sigc$ (dashed colored line) separates the two regimes: the memorized image withstands far more noise than the control before it escapes. \textit{Bottom:} the drift $\|\M^{k}(x)-x\|/\|x\|$ along the orbit, one curve per scale coloured by $\sigma$; the thick curve is a scale above $\sigc$ and the dot marks where it crosses $r$.}
    \label{fig:traj-mv-control}
\end{figure*}
\begin{figure*}[t]
    \centering
    \begin{subfigure}[t]{0.98\linewidth}
        \centering
        \includegraphics[width=\linewidth]
            {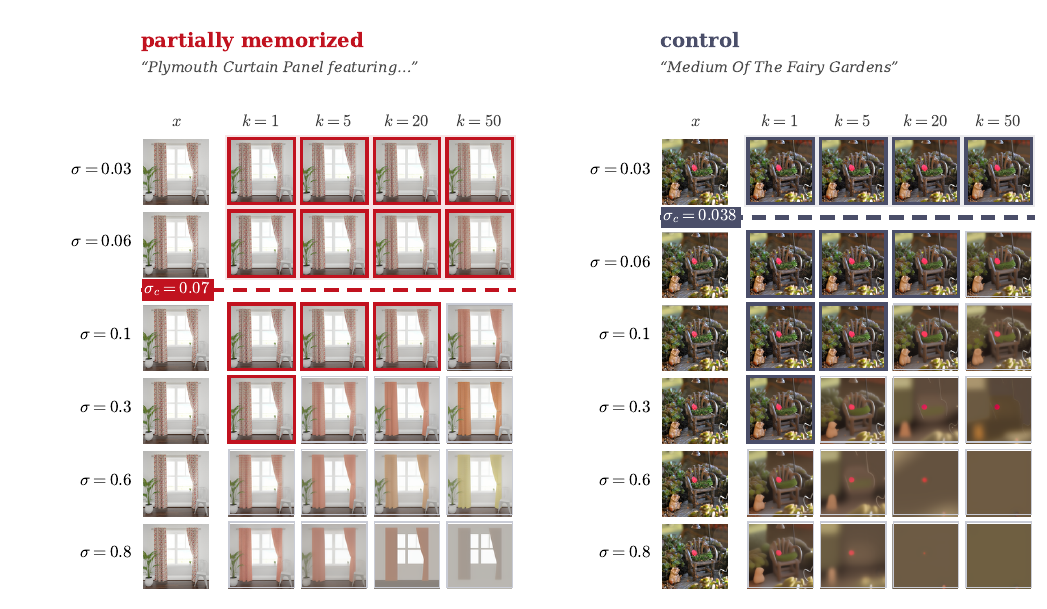}
    \end{subfigure}

   \begin{subfigure}[t]{\linewidth}
        \hspace*{0.1\linewidth}%
        \includegraphics[width=0.9\linewidth]
            {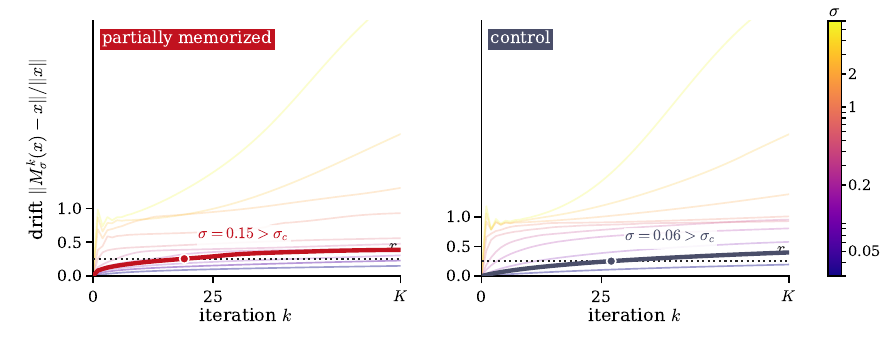}
    \end{subfigure}

    \caption{\textbf{Trajectory and critical scale TV vs control.} \textit{Top:} critical scale on for a partially memorized caption--image pair (\textit{left}) and a control pair (\textit{right}), each run with its own caption at guidance scale $1$. Each row iterates
$\M$ from the image $x$ at one noise scale $\sigma$; a framed iterate $\M^{k}(x)$ is still within
$r\|x\|$ of $x$. At small $\sigma$ the model keeps returning $x$, which therefore sits in a basin of
its own; at large $\sigma$ the orbit drifts away towards content shared with other data. The critical
scale $\sigc$ (dashed colored line) separates the two regimes: the memorized image withstands far more noise than the control before it escapes. \textit{Bottom:} the drift $\|\M^{k}(x)-x\|/\|x\|$ along the orbit, one curve per scale coloured by $\sigma$; the thick curve is a scale above $\sigc$ and the dot marks where it crosses $r$.}
    \label{fig:traj-tv-control}
\end{figure*}

\begin{figure*}[t]
    \centering
    \begin{subfigure}[t]{0.98\linewidth}
        \centering
        \includegraphics[width=\linewidth]
            {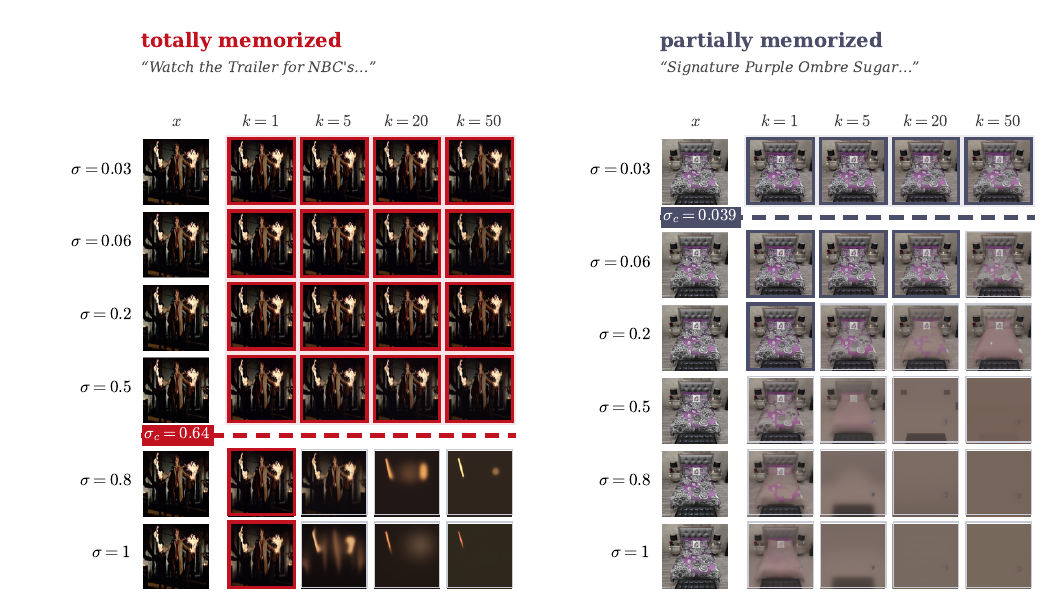}
    \end{subfigure}

   \begin{subfigure}[t]{\linewidth}
        \hspace*{0.1\linewidth}%
        \includegraphics[width=0.9\linewidth]
            {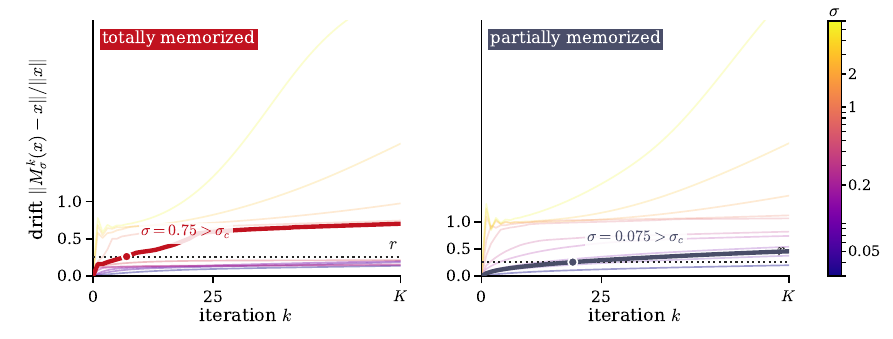}
    \end{subfigure}

    \caption{\textbf{Trajectory and critical scale MV vs TV.} \textit{Top:} critical scale on for a totally memorized caption--image pair (\textit{left}) and a partially memorized pair (\textit{right}), each run with its own caption at guidance scale $1$. Each row iterates
$\M$ from the image $x$ at one noise scale $\sigma$; a framed iterate $\M^{k}(x)$ is still within
$r\|x\|$ of $x$. At small $\sigma$ the model keeps returning $x$, which therefore sits in a basin of
its own; at large $\sigma$ the orbit drifts away towards content shared with other data. The critical
scale $\sigc$ (dashed colored line) separates the two regimes: the memorized image withstands far more noise than the control before it escapes. \textit{Bottom:} the drift $\|\M^{k}(x)-x\|/\|x\|$ along the orbit, one curve per scale coloured by $\sigma$; the thick curve is a scale above $\sigc$ and the dot marks where it crosses $r$.}
    \label{fig:traj-mv-tv}
\end{figure*}

\begin{figure*}[t]
    \centering
    \begin{subfigure}[t]{0.98\linewidth}
        \centering
        \includegraphics[width=\linewidth]
            {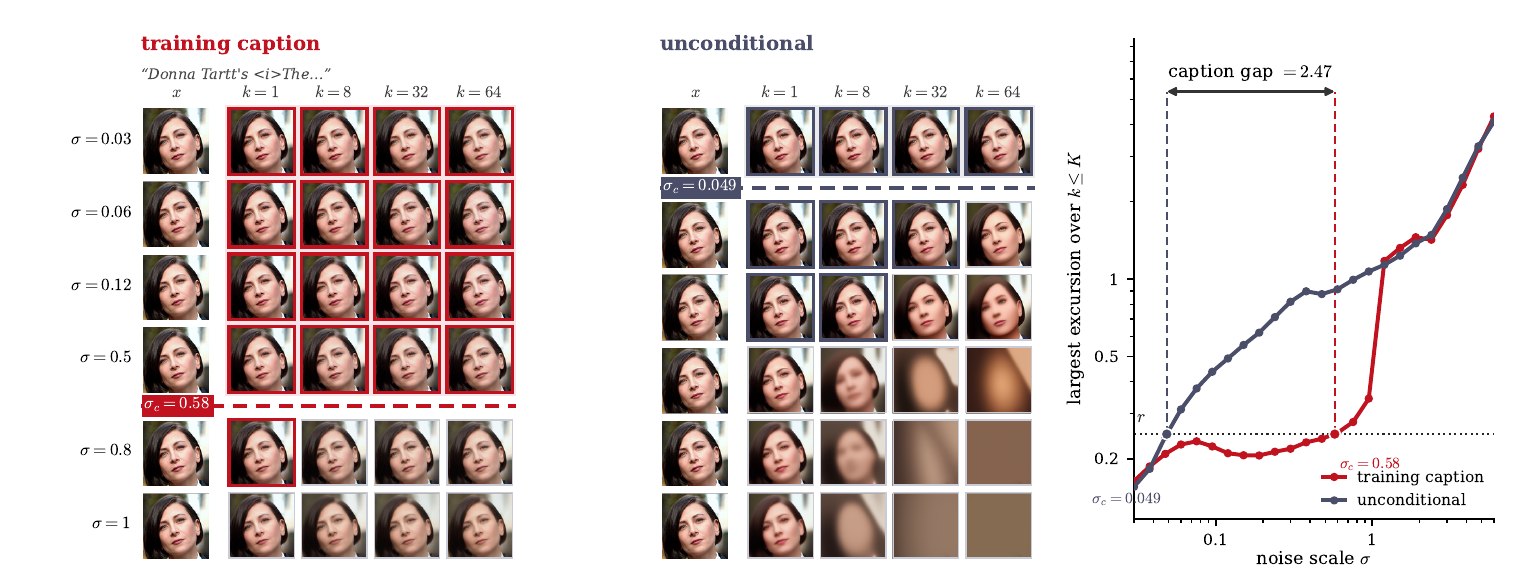}
    \end{subfigure}

  \begin{subfigure}[t]{0.98\linewidth}
        \centering
        \includegraphics[width=\linewidth]
            {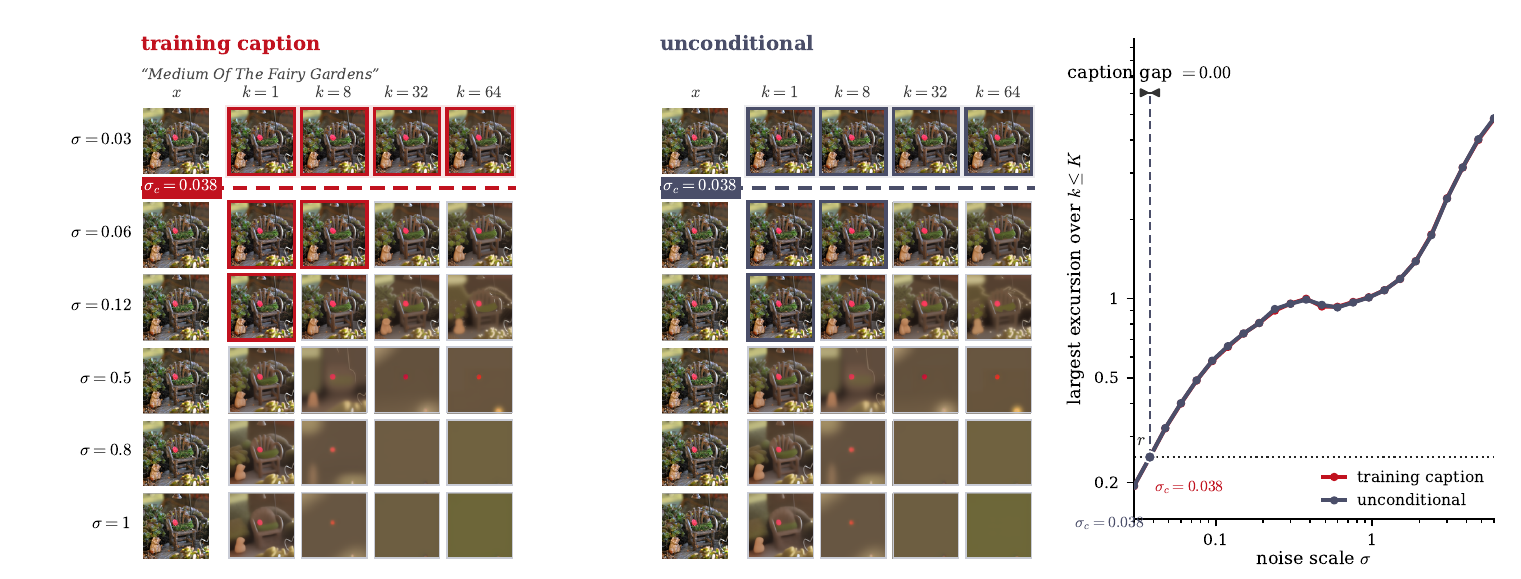}
    \end{subfigure}

    \caption{\textbf{$\boldsymbol{\Delta\log\sigma_c}$ for MV and control images.} The top row shows an MV image--caption pair and the bottom row a control. The left and center panels
show fixed-scale trajectories across noise levels under training-caption and unconditional
conditioning, respectively. The right panels plot
$\max_{k\leq K}||M_\sigma^k(x)-x||/||x||$ for both branches. Their crossings with the dotted
threshold $r$ define the conditional and unconditional critical scales, whose log difference is
$\Delta\log\sigma_c$. The MV pair has a large gap because its caption substantially increases
retention, whereas the control has nearly equal critical scales and a gap close to zero.}
    \label{fig:caption-gap}
\end{figure*}

\begin{figure*}[t]
    \centering
    \begin{subfigure}[t]{0.98\linewidth}
        \centering
        \includegraphics[width=\linewidth]
            {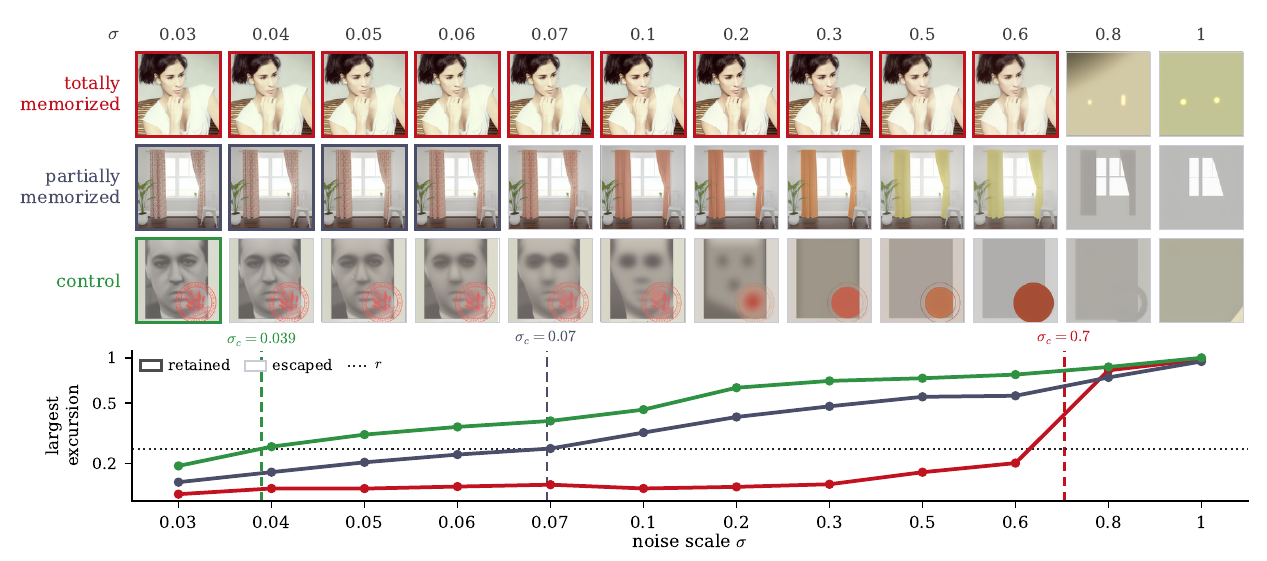}
    \end{subfigure}

  \begin{subfigure}[t]{0.98\linewidth}
        \centering
        \includegraphics[width=\linewidth]
            {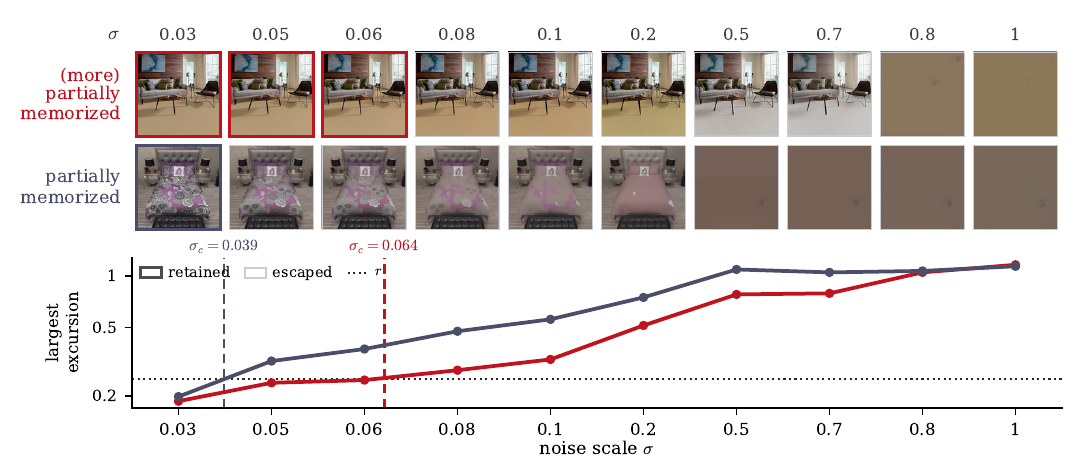}
    \end{subfigure}

    \caption{\textbf{Fixed-scale trajectories across noise levels.}
Each comparison shows the iterate $\M^K(x)$ across noise scales (top) and the maximum
normalized displacement $\max_{k\leq K}|\M^k(x)-x|/|x|$ as a function of $\sigma$
(bottom). The dotted line marks the escape radius $r$; its crossing determines $\sigma_c$.
The upper panel compares an MV, a TV, and a control image, ordered from longest to shortest
retention. The lower panel compares two TV images with different amounts of memorized content. The image with more memorized content is retained longer.}
    \label{fig:sweep}
\end{figure*}

\end{document}